\newcommand{\Tr}{{\rm Tr}}
\crefname{lemma}{Lemma}{Lemma}
\crefname{definition}{Definition}{Definition}
\crefname{corollary}{Corollary}{Corollary}
\crefname{assumption}{Assumption}{Assumption}
\crefname{proposition}{Proposition}{Proposition}
\newtheorem{theorem}{Theorem}[section]
\newtheorem{corollary}{Corollary}[theorem]
\newtheorem{lemma}[theorem]{Lemma}
\newtheorem{assumption}[theorem]{Assumption}
\newtheorem{definition}{Definition}
\newcommand{\mI}{\bm{I}}
\newcommand{\veps}{\bm{\varepsilon}}
\newcommand{\vphi}{\bm{\phi}}
\newcommand{\vpsi}{\bm{\psi}}
\newcommand{\vLambda}{\bm{\Lambda}}
\newcommand{\vSigma}{\bm{\Sigma}}
\newcommand{\vf}{\bm{f}}
\newcommand{\vx}{\bm{x}}
\newcommand{\vy}{\bm{y}}
\newcommand{\vz}{\bm{z}}
\newcommand{\vA}{\bm{A}}
\newcommand{\vI}{\bm{I}}
\newcommand{\vK}{\bm{K}}
\newcommand{\vT}{\bm{T}}
\newcommand{\vU}{\bm{U}}
\newcommand{\vV}{\bm{V}}
\newcommand{\vW}{\bm{W}}
\newcommand{\vX}{\bm{X}}
\newcommand{\vZ}{\bm{Z}}
\newcommand{\cH}{\mathcal{H}}
\def\bx{\bm{x}}
\def\bB{\bm{B}}
\def\bV{\bm{V}}
\newcommand{\bR}{\mathbb{R}}
\newcommand{\bE}{\mathbb{E}}
\def\ddefloop#1{\ifx\ddefloop#1\else\ddef{#1}\expandafter\ddefloop\fi}
\def\ddef#1{\expandafter\def\csname c#1\endcsname{\ensuremath{\mathcal{#1}}}}
\def\ddef#1{\expandafter\def\csname s#1\endcsname{\ensuremath{\mathsf{#1}}}}
\def\E{\mathbb{E}}
\def\argmin{\operatornamewithlimits{arg\,min}}
\def\rd{{\mathrm d}}
\icmltitlerunning{Importance Reweighting: Data-Dependent Implicit Regularization}
\begin{document}

\twocolumn[
\icmltitle{{High-Dimensional Kernel Methods under Covariate Shift: \\
Data-Dependent Implicit Regularization}}

\icmlsetsymbol{equal}{*}

\begin{icmlauthorlist}
\icmlauthor{Yihang Chen}{sch1}
\icmlauthor{Fanghui Liu}{sch2}
\icmlauthor{Taiji Suzuki}{sch3,sch4}
\icmlauthor{Volkan Cevher}{sch1}
\end{icmlauthorlist}

\icmlaffiliation{sch1}{Laboratory for Information and Inference Systems, École Polytechnique Fédérale de Lausanne (EPFL), Switzerland.}
\icmlaffiliation{sch2}{Department of Computer Science, University of Warwick, United Kingdom.}
\icmlaffiliation{sch3}{Department of Mathematical Informatics, The University of Tokyo, Japan.}
\icmlaffiliation{sch4}{Center for Advanced Intelligence Project, RIKEN, Tokyo, Japan}
\icmlcorrespondingauthor{Fanghui Liu}{fanghui.liu@warwick.ac.uk}

\icmlkeywords{Machine Learning, ICML}
\vskip 0.3in
]

\printAffiliationsAndNotice{}  

\begin{abstract}

This paper studies kernel ridge regression in high dimensions under covariate shifts and 
analyzes the role of importance re-weighting. 
We first derive the asymptotic expansion of high dimensional kernels under covariate shifts. By a bias-variance decomposition, we theoretically demonstrate that the re-weighting strategy allows for decreasing the variance. 
For bias, we analyze the regularization of the arbitrary or well-chosen scale, showing that the bias can behave very differently under different regularization scales. 
In our analysis, the bias and variance can be characterized by the spectral decay of a data-dependent regularized kernel: the original kernel matrix associated with an additional re-weighting matrix, and thus the re-weighting strategy can be regarded as a data-dependent regularization for better understanding. Besides, our analysis provides asymptotic expansion of kernel functions/vectors under covariate shift, which has its own interest.

\end{abstract}

\section{Introduction}
In statistical learning theory \cite{vapnik1999nature}, the fundamental assumption is that the training and test data are drawn from the same distribution. 
However, in real-world applications, test data may generated quite differently from the training data.  
One of the most common situations is the \emph{covariate shifts} \citep{shimodaira2000,sugiyama2007direct}, where the training and test distributions of inputs (covariates) are different.

The {\em importance weighting (IW)} \citep{shimodaira2000} is a typical way to handle covariate shift. Let $p$ and $q$ be the marginal distributions over the training and test covariates, respectively, the IW method adopts their Radon-Nikodym derivative as the importance weighting (IW) function, i.e., ${w}(\vx) = \rd q(\vx) / \rd p(\vx)$. 
Hence, the IW function weights the loss function, leading to an unbiased estimator of the expected loss under the test distribution. 
Empirically, the IW method has been widely used in machine learning \citep[e.g.,][]{huang2006correcting,sugiyama2008direct,cortes2010,sugiyama2012density,fang2020rethinking} from linear to kernel estimator as well as neural networks.
Theoretically, the IW method can achieve nice statistical properties (e.g., minimax rate) under certain settings for kernel ridge regression \cite{ma2023optimally,gogolashvili2023importance}.

However, recent work on high-capacity models, e.g., nonparametric and over-parameterized models\footnote{Over-parameterized models admit the fact that the number of parameters is larger than the number of training data. Modern neural networks belong to this setting.}, demonstrate that, the IW strategy is not beneficial under certain settings, e.g., over-parameterized linear regression \cite{zhai2023understanding}, $k$-nearest neighbors classifier \cite{kpotufe2021marginal} for \emph{interpolation} under well-specified cases. Nevertheless, for some misspecified cases, the IW correction is still needed for non-parametric kernel ridge regression \cite{gogolashvili2023importance}.

We can see the separation in the effect of IW for low/high-capacity models under (mis)-specified settings.
But how the generalization result depends on the choice of model capacities, and its interplay with the regularization level in terms of bias-variance trade-off remains unclear. 
Intuitively, the IW strategy obtains the unbiased estimation of the original empirical risk minimization, leading to a decreasing variance to some extent;
while the approximation between the estimator and the target function will change, leading to an increasing bias to some extent.
As such, refined analyses based on bias-variance trade-offs are required to understand the following question: 
\begin{center}
\emph{How does IW affect bias-variance trade-off in high-capacity models?}
\end{center}

We attempt to address this question by uncovering the mystery behind the IW strategy in covariate shifts from the bias-variance trade-off.
To be specific, in this work, we focus on kernel ridge regression (KRR) in \emph{high dimensions} with data dimension $d$ and size $n$ both large under the IW strategy, a typical regularized-based nonparametric regression over reproducing kernel Hilbert spaces (RKHSs). 
This choice allows for studying different learning paradigms, for example, neural networks can be described by neural tangent kernel \cite{jacot2018neural} under certain settings; the high-dimensional setting matches practical image application via over-parameterized neural networks; 
the model capacity can be tuned by the regularization parameter.
Accordingly, the kernel interpolation can be regarded as a special case of KRR by taking the explicit regularization sufficiently close to zero, which follows the spirit of over-parameterized neural networks for \emph{interpolation learning}.

Formally, given $n$ training data $\vZ=\{(\vx_i,y_i) \}_{i=1}^n$, the estimator of KRR in high dimensions under a general IW function $\overline{w}(\vx)$ is given by
\begin{equation}\label{eq:iw_emp_risk_Z}
   \overline{f}_{\lambda,\vZ}\!:=\! \argmin_{f \in \mathcal{H}} \! \left\{\! \frac{1}{n} \sum_{i=1}^{n}\overline{w}(\vx_i)\left(f\left(\vx_{i}\right) \!-\!y_{i}\right)^{2} \!+\! \lambda \|f\|_{\mathcal{H}}^{2} \! \right\}\,,
\end{equation}
where $\lambda >0$ is the regularization parameter. 

\subsection{Contributions}
We summarize the contributions and findings as below:
\begin{itemize}
\item We present the asymptotic expansion of high dimensional kernels $k(\vx, \vx')$ under covariate shifts, where the nonlinearity in kernels can be eliminated by the kernel function curvature, see \cref{lemma:KXx_approx}.
    \item We present bias-variance decomposition for KRR in high dimensions with covariate shift. To be specific, for variance, via the asymptotic expansion, we demonstrate that the IW strategy can be regarded as an implicit data-dependent regularization on the respective kernel. The estimation of variance heavily depends on the spectral decay of the expected covariance matrix over $q$ or such data-dependent regularized kernel, and allows for a decreasing variance to some extent, see~\cref{sec:variance}.  
    \item For bias, via the asymptotic expansion, we demonstrate that i) near interpolation (i.e., the regularization $\lambda$ is sufficiently small), the bias term can be upper bounded by two parts, one is an intrinsic bias that only depends on the covariate shift problem itself, in a constant order; another is the importance re-weighting bias, which depends on the spectral decay of data-dependent regularized kernels. ii) if we choose a proper regularization parameter, the IW strategy does not hurt the bias, i.e., the bias can tend to zero, see~\cref{sec:bias} for details.  
\end{itemize}
We hope our analysis provides a better understanding on the role of the IW strategy in terms of bias-variance trade-off, and would like to motivate the community to think about powerful IW strategies to handle distribution shifts, more generally.

\subsection{Related works}

\paragraph{High-dimensional kernel regression} To tackle the high-dimensional regression, one line of research~\citep{mei2021learning,mei2022generalization,ghorbani2020neural,ghorbani2021linearized,misiakiewicz2022learning,xiao2022precise,ghosh2021three,fang2020rethinking,aerni2023strong} asymptotically characterizes the
precise risk of kernel regression under some specific data distributions, such as uniform distributions on the sphere or hypercube vertices, so that the kernel’s eigenfunctions and eigenvalues can be explicitly accessed. Another line of research~\citep{liang2020just,liu2021kernel,mcrae2022harmless} provides non-asymptotic bounds by high-dimensional random matrix concentration in \citet{el2010spectrum}. 

\paragraph{Covariate shift} 
There has been extensive analysis of kernel regression under covariate shift in the fixed dimensions. In the well-specified case, the standard maximum likelihood estimation leads to the optimal model, and the importance re-weighting is unnecessary~\citep{zhai2023understanding,ge2023maximum}. \citet{ma2023optimally,gogolashvili2023importance} analyze different importance re-weighting functions. \citet{feng2023towards} additionally provide a uniform analysis for kernel regression of general loss function under covariate shifts. However, the analysis of the fixed-dimension kernel requires an appropriate choice of $\lambda$ to balance the bias and variance.
Apart from re-weighting, the transfer exponent \cite{kpotufe2021marginal} is another metric to evaluate the distribution mismatch, as well as another variant \cite{pathak2022new}.

\paragraph{Random matrix theory} In the specific case of the linear kernel,  a series of works use the random kernel theory to asymptotically characterize the
precise risk~\citep{hastie2022surprises,karoui2013asymptotic,dicker2016ridge,wu2020optimal,lu2023optimal}. There is also a series of works focusing on covariate shift in the high-dimensional random feature regression~\cite{tripuraneni2021overparameterization,tripuraneni2021covariate}. 
However, their results did not consider the data-dependent importance re-weighting, explained as below.

Classical RMT is able to provide an exact characteristic formulation of the limiting distribution of covariance matrix via its Stieltjes transform, and then its solution can be obtained from the popular Mar{\u c}enko–Pastur equation.
However, since the IW strategy is regarded as a data-dependent transformation (we will discuss it later), the limiting distribution of the ``data-dependent'' covariance matrix can not be directly obtained, which requires more effort and advanced techniques in the RMT community. We leave this as an open question.

\paragraph{Notations} We denote the decreasing eigenvalues of any matrix $\vA\in \mathbb{R}^{n\times n}$ by $\lambda_1(\vA)\geq \lambda_2(\vA)\dots\geq \lambda_n(\vA)$, and the spectrum of $\vA$ by $\vLambda(\vA):=\{\lambda_i(\vA)\}_{i=1}^n$. We call $a\lesssim b$ or $a=O(b)$ if and only if there exists constant $C$ independent of $n,d$, such that $a\leq C\cdot b$. We call a positive function $f(d)\asymp d^a$ if and only if $\sup\lim_{d\to\infty} f(d)/d^{a+\epsilon}=0, \inf\lim \lim_{d\to\infty} f(d)/d^{a-\epsilon}=+\infty$ for any $\epsilon>0$. 
We use the abbreviation $[d]=\{1,2,\cdots,d\}$ for integer $d$. 

{\bf Organization} The paper is organized as below: \cref{sec:settings} introduce our problem settings and \cref{sec:assum} makes the required assumptions for our proof. Our main results are given in \cref{sec:kernel_approx} and the conclusion is drawn in \cref{sec:conclusion}.

\section{Problem Settings}
\label{sec:settings}

We introduce our problem settings in terms of the data generation process under covariate shift and the used kernel function in RKHS.

{\bf Data generation process:} We follow the classical statistical learning framework \cite{cucker2007learning}.
Let $\mathcal{X} \subseteq \mathbb{R}^d$ be the input space (compact domain) and $\mathcal{Y}\subset \mathbb{R}$ is the label space, we observe $n$ i.i.d. pairs $\vZ=\{(\vx_i,y_i)$, $1\leq i\leq n\}$, where $\vx_i$ are the covariates and $y_i\in\mathcal{Y}$ are the labels. 
 Suppose these $n$ pairs are drawn from a unknown probability distribution $p(\vx,y) := p(\vx)\rho(y|\vx)$, where $p(\vx)$ as the marginal distribution of $\rho$ on $\mathcal{X}$ and $\rho(y|\bm x)$ as the conditional distribution at $\bm x\in\mathcal{X}$ induced by $\rho$.
Let $\widehat{\E}_n$ be the expectation on the empirical measure $\widehat{p}_n(\vx):=\frac{1}{n}\sum_{i=1}^n \delta_{\vx_i}(\vx)$. 
 The objective of our learning problem is to find a learning model that is a good approximation of the ``target function''
  $ f_\rho(\vx) = \int_{Y} y \mathrm{d} \rho(y|\vx)\,, \forall \vx \in X $
 as the conditional mean.
 We assume that there exists a $\sigma_\varepsilon>0$ such that $y(\vx) = f_\rho(\vx)+\varepsilon$, and $\E [\varepsilon] = 0$, $\mathbb{V} [\varepsilon] \leq \sigma_\varepsilon^2$. 

{\bf Re-weighting in covariate shift:} Under the covariate shift setting where the test data is not sampled from $p(\vx)$ but the test distribution as $q(\vx)$.
To handle this, we introduce the importance re-weighting strategy with the density ratio ${w}(\vx) = \rd q(\vx) / \rd p(\vx)$.
Here we consider a general version by introducing the weighting distribution $\overline{q}(\vx)$ such that $\overline{w}(\vx):= \rd \overline{q}(\vx) / \rd p(\vx)$, where we use $\overline{w}(\vx)$ as importance weighting. In general, $\overline{q}$ can be unnormalized density, with $\overline{Z}:=\int_{\vx\sim \vX} \rd \overline{q}(\vx)$. However, without loss of generality, we can assume $\overline{Z}=1$. Otherwise, we can replace $\lambda$ with $\lambda \overline{Z}$. 
Accordingly, when $\overline{w}(\vx):=1$, our minimization problem is reduced to the standard unweighted empirical risk minimization; when $\overline{w}(\vx):= w(\vx)$, it is reduced to the standard importance re-weighting by the density ratio. 

In this paper, the used learning model is kernel ridge regression endowed by RKHS in high dimensions as described below, where the training dataset size $n$ and data dimension $d$ satisfy $n/ d\to \zeta$ with $\zeta \in (0,\infty)$ as $d\to \infty$, and $\zeta_{\min}\leq n/d\leq \zeta_{\max},\forall n,d$. This is the standard setting in high-dimensional kernel regression ~\cite{liang2020just,liu2021kernel,mei2022generalization}.

\subsection{RKHS and kernels}
\label{sec:formulation}

The Reproducing kernel Hilbert space (RKHS) $\mathcal{H}$ is a Hilbert space $\mathcal{H}$ endowed with the inner product $\langle \cdot,\cdot\rangle_K$ of functions $f: \mathcal{X} \rightarrow \mathbb{R}$ with a reproducing kernel $K: \mathcal{X} \times \mathcal{X} \rightarrow \mathbb{R}$ where $K(\cdot) \in \mathcal{H}$ and $f(\vx) = \langle f, K(\vx, \cdot) \rangle_K$~\citep{mercer1909xvi}. We assume that $K$ is bounded, i.e., there exists a constant $1 \leq \kappa < \infty$ such that $\sup_{\vx\sim \mathcal{\vX}} K(\vx,\vx)\leq \kappa$. 

Define $\mathcal{L}_q^2:=\{f:\mathcal{X}\to\mathbb{R}| \|f\|_q^2\leq \infty\}$, and $\|f\|_q^2:= \int_{\mathcal{X}} f^2(\vx) \rd q(\vx) $. 
For ease of our analysis, let us introduce the integral operator ${L}_q: \mathcal{L}_q^2 \rightarrow \mathcal{L}_q^2$ with respect to the test distribution $q(\vx)$:
\begin{align*}
	L_q f = \int K(\cdot, \vx^\prime) f(\vx^\prime) \rd q(\vx^\prime)\,,
\end{align*}
and denote the set of eigenfunctions of this integral operator by $\vphi(\vx) = \{\phi_1(\vx), \phi_2(\vx), \ldots, \phi_o(\vx)\}$, where $o$ could be $\infty$. We have that
\begin{align}
	L_q \phi_i = \lambda_i \phi_i,~~\text{and}~~ \int \phi_i(\vx) \phi_j(\vx) \rd q(\vx) = \delta_{ij}\,.
\end{align}
Denote $\vLambda = {\rm diag}(\lambda_1,\cdots, \lambda_o)$ as the collection of non-negative eigenvalues, with $\lambda_1\geq \lambda_2\geq\cdots\geq  \lambda_o$. We can write $K(\cdot,\cdot)$ via the spectral notation  
\begin{align*}
	K(\vx, \vx^\prime) = \vphi(\vx)^\top \vLambda \vphi(\vx^\prime)\,. 
\end{align*}
We define the empirical integral operator on the training dataset $\vX$, 
\begin{align*}
   L_{{q},\vX} f := \frac{1}{n} \sum_{i=1}^n {w}(\vx_i) f(\vx_i) K(\cdot,\vx_i)\,.  
\end{align*}

Similarly, we can define $L_{\overline{q}}$, and $L_{\overline{q},\vX}$ for weighting distribution $\overline{q}$.

\subsection{Interpolation and regression}

Let $\vX := [\vx_1, \vx_2, \cdots, \vx_n]^\top \in \mathbb{R}^{n \times d}$ be the data matrix,  $\vy := [y_1, y_2, \cdots, y_n]^\top\in\bR^n$ be the label vector, and $\vZ:=[\vX,\vy]$ be the concatenation. 
Besides, we denote $\vK(\vX, \vX) = [K(\vx_i, \vx_j)]_{ij} \in \mathbb{R}^{n \times n}$ be the kernel matrix. Extending this definition, for $\vx \in \mathcal{\vX}$ we denote by $\vK(\vx, \vX) \in \mathbb{R}^{1 \times n}$ the matrix of values $[\vK(\vx,\vx_1),\ldots,\vK(\vx,\vx_n)]$, and $\vK(\vX,\vx):=\vK(\vx,\vX)^\top\in\bR^{n\times 1}$.
 
\paragraph{Interpolation}
The unweighted interpolation estimator is defined as 
\begin{align}
	\label{eq:interpolation}
	f_{\vZ} := \argmin_{f \in \mathcal{H}} \| f \|_K, ~~ \text{s.t.}~~ f(\vx_i) = y_i,~\forall i \in [n] \,.
\end{align}
When $\vK(\vX,\vX)$ is invertible\footnote{For ease of analysis, we assume the $\vK(\vX,\vX)$ has full rank.}, solution to \eqref{eq:interpolation} can be written in the closed form:
\begin{align}
	\label{eq:interpolation_closedform}
	f_{\vZ}(\vx) &= \vK(\vx, \vX) \vK(\vX, \vX)^{-1} \vy \,.
\end{align}
Actually, in the interpolation problem, the IW strategy does work due to the constraint $\bar{w}_i [f(\bm x_i) - y_i]=0$, which naturally coincides with \citet{zhai2023understanding}.
Accordingly, we consider the regularized regression weighted by $\overline{w}(\vx)$ in \cref{eq:iw_emp_risk_Z}. Let ${ \overline{\vW}({\vX})}:={\rm diag}(\overline{w}(\vx_i))_{i=1}^n$, the solution to \cref{eq:iw_emp_risk_Z} can be written in the closed form:
\begin{align*}
	\overline{f}_{\lambda,\vZ}(\vx)& = \vK(\vx, \vX) (\vK(\vX, \vX)+\lambda n{ \overline{\vW}({\vX})}^{-1})^{-1} \vy\,.
\end{align*}

We are interested in the generalization performance of $\overline{f}_{\lambda,\vZ}$ estimated by the excess risk w.r.t. the test distribution $q$
 $\|\overline{f}_{\lambda,\vZ}-{f}_\rho\|_q^2$.

\section{Assumptions}
\label{sec:assum}

In this paper, we make the following assumptions, including the type of the considered kernels, data, and ratio.
Besides, we also introduce assumptions on the model, e.g., the source condition on the target function, and the capacity condition. 

\subsection{Basic assumptions on kernel, data distribution}
Firstly, we consider the two forms of kernels in this paper for asymptotic expansion:
\begin{itemize}
    \item \emph{inner product kernel}, $K(\vx, \vx') := h\left( \langle \vx, \vx'\rangle/d \right)$;
     \item \emph{radial kernel}, $K(\vx,\vx'):=h(-\|\vx-\vx'\|_2^2/d)$.
\end{itemize}
where $h(\cdot): \mathbb{R} \rightarrow \mathbb{R}$ is a non-linear Lipschitz smooth function in a neighborhood of $0$. 
Following \cite{el2010spectrum}, we assume $h$ to ensure the positive definiteness of the asymptotic expansion of the original kernel.

\begin{assumption}[Assumptions on $h$]\label{assumption:h}
    We assume $h:\mathbb{R}\to\mathbb{R}$ is a smooth function that satisfies the following constraints in the neighborhood of 0,
    \begin{align*}
       h(x)\geq 0, h^\prime(x)> 0,~~ h^{\prime\prime}(x)> 0,~~ h^{\prime\prime} (x)\leq M_h\,. 
    \end{align*}
\end{assumption}

{\bf Remark}: We give an example of three widely-used kernels and their corresponding non-linear activation $h$. Each 
instantiation of $h$ satisfies \cref{assumption:h}. 
\begin{table}[!htbp]
    \centering
    \caption{Kernels and their corresponding $h$.}
    \begin{tabular}{ccc}
    \toprule
       Kernel  & Formulation & $h(x)$  \\
       \midrule
      Polynomial   & $(1+\frac{1}{d}\langle\vx,\vx^\prime\rangle)^k$ & $(1+x)^k$\\
      Exponential & $\exp(\frac{2}{d}\langle\vx,\vx^\prime\rangle)$ & $\exp(2x)$\\
      Gaussian & $\exp(-\frac{1}{d}\|\vx-\vx^\prime\|_2^2)$ & $\exp(x)$\\
      \bottomrule
    \end{tabular}
    \label{tab:kernel_h}
\end{table}

In the next, we consider a general class of data distributions of $\vx\in\bR^d$. 
\begin{definition}\label{def:distribution}
Denote $\mathcal{P}_0$ as the set of distributions of the random variable $\vx\sim \mu$ satisfying the following properties. 

We assume there exists $\vSigma_\mu\in\bR^{d\times d}$, such that $\vz=\vSigma_\mu^{-1/2}\vx\in\mathbb{R}^d$. Each element of $\vz$ is independent and identically distributed on some distribution ${\widetilde{\mu}}$. We make the following assumptions on $\widetilde{\mu}$, 
 \begin{itemize}
 \item {\bf Sub-Gaussian.} $\widetilde{\mu}$ is sub-Gaussian. 
    \item {\bf Identity Variance.} Define the $i$-th moment of distribution $\widetilde{\mu}$, $\kappa_{{\mu},i}:=\E_{z \sim {\widetilde{\mu}}} (z)^i$, we have $\kappa_{\widetilde{\mu}, 1}=0, \kappa_{\widetilde{\mu},2}=1$, i.e., $\bE_{\vz} \vz\vz^\top = \vI$. 
    \item {\bf Uniform Boundedness.} There exists integer $m_\mu\geq 0$, such that $|{\vz}(k)|\lesssim d^{\frac{2}{8+m_r}}$. We additionally define constant $\theta_\mu:=\frac{1}{2}-\frac{2}{8+m_\mu}$ for future simplicity. 
\end{itemize}
\end{definition}

\begin{assumption}[Bounded Distribution]\label{assumption:8+m}
The training distribution $p$ and test distribution $q$ belong to $\mathcal{P}_0$, with $\vSigma_p,\vSigma_q,m_p,m_q,\theta_p,\theta_q,\tau_p,\tau_q$ being defined in \cref{def:distribution}. 
\end{assumption}
{\bf Remark:} This assumption (or distribution class $\mathcal{P}_0$) is widely used in high-dimensional statistics \cite{liang2020just,liu2021kernel,wu2020optimal}.
The data distribution is assumed to be not too heavy-tailed, with possible structure between the entries with zero-mean and unit-variance and bounded moment with respect to $d$. The identity variance assumption ensures that $\bE_{\vx\sim \mu}\vx={\bf 0}, \bE_{\vx\sim \mu} \vx\vx^\top =\vSigma_\mu$, i.e., $\vSigma_\mu$ is the covariance matrix of $\vx\sim\mu$.

\begin{assumption}[Similar Covariate]\label{assumption:bounded_shift}
We assume $\max\{\|\vSigma_p\|,\|\vSigma_q\|\}=O(1)$. 
Define $\vSigma_{pq}:=\vSigma_p^{-1}\vSigma_q$, and $\exists c_{pq} \geq 0$ such that
$\Tr(\vSigma_{pq})/d\lesssim d^{c_{pq}}$. To bound the distribution shifts, we additionally assume $c_{pq}<2\theta_q-\frac{1}{2}=\frac{1}{2}-\frac{4}{8+m_q}$.
\end{assumption}
{\bf Remark:} 
When $\vSigma_p=\vSigma_q$, we have $c_{pq}=0$, this assumption always holds due to $m_q>0$. We make this assumption to provide a more precise characterization of the similarity between $\vSigma_p$ and $\vSigma_q$ via $\langle \vSigma_p^{-1}, \vSigma_q \rangle$, which aims to describe the difficulty of distribution shift. The distribution shift is small when $c_{pq}$ is close to 0. The upper bound for $c_{pq}$ is a sufficient condition to ensure the linear approximation of the kernel $K$, see \cref{lemma:KXx_approx}.

For the ratios $w(\vx),\overline{w}(\vx)$, we make the following assumption.
\begin{assumption}[Bounded Ratio \cite{gogolashvili2023importance}]\label{assumption:IW_assumption}
For the probability ratio $v\in\{w,\overline{w}\}$, there exist constants $t_v \in [0,1]$, $W_v(d) > 0$ and $\sigma_v(d) > 0$, where $W_v(d),\sigma_v(d)$ is dependent on dimension $d$, such that, for all $m \in \mathbb{N}$ with $m \geq 2$, it holds that  
\begin{align}\label{condition_on_iw}
  &  \left(\int_X v(x)^{\frac{m-1}{t_v}} \rd q(x)\right)^{t_v} \leq \frac{1}{2}m!W_v(d)^{m-2}\sigma_v(d)^2\,,
\end{align}
where the left-hand side for $t_v = 0$ is defined as $\left\| v^{m-1} \right\|_\infty$, the essential supremum of $v^{m-1}$ with respect to $q$. We additionally assume that for sufficiently large $d$, 
\begin{align*}
    W_v(d) \leq W_v \cdot d^{c_{v,1}},  \sigma_v(d) \leq \sigma_v \cdot d^{c_{v,2}}\,,
\end{align*}
with $c_{v,1}\leq 2c_{v,2}$, and $c_{v,2}\leq \frac{1}{4}$. 
\end{assumption}
{\bf Remark}: 
\cref{assumption:IW_assumption} covers the uniform bounded ratio by taking $t_w=0, W_w=\sigma_w^2=\arg\max_{\vx} w(\vx)$. 

\subsection{Assumptions on model}

In the next, we present the used assumptions for our analysis of the target function and model capacity. 
Firstly, we consider the source condition of the target function $f_\rho$. 
\begin{assumption}\label{assumption:sourcecon} (Source condition \cite{smale2004shannon,smale2007learning})
We have $f_\rho\in\mathcal{H}$, and there exists $\frac{1}{2} \leq \overline{r} < 1, \overline{g}_{\rho} \in \mathcal{L}_q^2$ such that ${f}_{\rho} = (L_{\overline{q}})^{\overline{r}} \overline{g}_{\rho}$. We additionally assume $\max\{\|f_\rho\|_{\mathcal{H}}, \|\overline{g}_\rho\|_q,\|f_\rho\|_\infty\}\leq C_{\mathcal{H}} d^{c_{\mathcal{H}}}$. 
\end{assumption}
{\bf Remark}: Source condition is widely used in the kernel literature~\cite{smale2004shannon,smale2007learning,caponnetto2007optimal}. Intuitively, a larger $\overline{r}$ indicates that $f_{\rho}$ is smoother. When $q=\overline{q}$, \cref{assumption:sourcecon} is reduced to the standard source condition on distribution $q$. When $\overline{r}=1/2$, we have $\|f_\rho\|_{\mathcal{H}}=\|g_{\rho}\|_q$. One key difference with classical high-dimensional analysis~\citep{liang2020just} is that we do not always need a uniform constant upper bound of $\|f_\rho\|_{\mathcal{H}}$ over $d$.

For a kernel matrix $\vK$, We define it capacity by $\mathcal{N}(\vK, b)$, which is widely used in \cite{nakkiran2020optimal,dobriban2018high,liang2020just,jacot2020kernel,nakkiran2020optimal}.
\begin{definition}[Capacity]\label{def:capacity_k}
Given a kernel matrix $\vK$ and a parameter $b >0$, we denote its capacity as
\begin{align*}
    \mathcal{N}(\vK,b):= \Tr\left[(\vK+b\vI)^{-2}\vK\right] = \sum_{i=1}^n \frac{\lambda_i(\vK)}{\left(b+ \lambda_i(\vK)\right)^2}\,.
\end{align*}
\end{definition}
The capacity can also be defined for the operator. The following assumption describes the model capacity of kernel methods in terms of "effective dimension". 
\begin{assumption}[Capacity condition~\citep{caponnetto2007optimal}]\label{assumption:effective_dimension}
    For any $\lambda>0$, there exists $E_\mu>0$ and $s_\mu\in[0,1]$ such that for distribution $\mu\in \{q,\overline{q}\}$,
    \begin{align*}
        \mathcal{N}_\mu(\lambda) := \Tr((L_\mu+\lambda)^{-1}L_\mu) \leq E_\mu^2 \lambda^{-s_\mu},\forall \lambda \in (0,1]\,. 
    \end{align*}
\end{assumption}
{\bf Remark}: The effective dimension $\mathcal{N}_{\mu}(\lambda)$ measures the capacity of the kernel regression model with the regularization $\lambda$, which can be interpreted by an estimate of the number of eigenvalues of $L_r$ larger than $\lambda$. If the eigenvalues of $L_r$, i.e. $\lambda_{r,i}$,  decay at the asymptotic order $O(i^{-1/s_\mu})$, \cref{assumption:effective_dimension} holds. A small $s_\mu$ indicates that the eigenvalues of $L_r$ decay at a faster rate, and \cref{assumption:effective_dimension} always holds when $s_\mu=1$ and $E_\mu = \sqrt{\kappa}$, where $\kappa=\max \{\sup_{\vx\in\mathcal{X}}K(\vx,\vx),1\}$. 

\subsection{Summary of notations}
We have introduced several constants in this assumption above. We summarize it here. 

{\bf $\vSigma_\mu, \widetilde{\mu}, \kappa_{\mu, i}, m_\mu, \theta_\mu$ }: assumptions on distribution $\mu=p,q$. See \cref{def:distribution}. 

{\bf $c_{pq}$}: trace of $\vSigma_{pq}$. See \cref{assumption:bounded_shift}. 

{\bf $t_v, W_v(d), \sigma_v(d), c_{v,1}, c_{v,2}$}; upper bound of the probability ratio. See \cref{assumption:IW_assumption}. 

{\bf $\overline{r}, c_{\mathcal{H}}$}: source condition. See \cref{assumption:sourcecon}. 

{\bf $s_\mu, E_\mu$}: effective dimension. See \cref{assumption:effective_dimension}.

\section{Main results}\label{sec:kernel_approx}
In this section, we present the main results: the bias and variance the excess risk of the estimator $\overline{f}_{\lambda,\vZ}$ can be conducted from bias-variance decomposition.
Then we derive the estimation for the bias and variance, respectively.

\subsection{Bias-variance decomposition}

To conduct bias-variance decomposition, we need the noiseless version of \cref{eq:iw_emp_risk_Z} for analysis.
\begin{equation}
      \label{eq:iw_emp_risk_X}
\begin{split}
    &\overline{f}_{\lambda,\vX}:=\\
    &\argmin_{f \in \mathcal{H}}
   \left\{\frac{1}{n} \sum_{i=1}^{n}\overline{w}(\vx_i)\left(f\left(\vx_{i}\right) \!-\!f_\rho(\vx_i)\right)^{2} \!+\! {\lambda}\|f\|_{\mathcal{H}}^{2}\right\}\,,
\end{split}
\end{equation}
i.e., to replace $y_i$ with its expectation $f_\rho(\vx_i)$. Using the notations of the empirical operator, we have
\begin{align*}
    \overline{f}_{\lambda,\vX} = (L_{\overline{q},\vX}+{\lambda} I)^{-1} L_{\overline{q},\vX} f_\rho\,. 
\end{align*}

We then provide the bias-variance decomposition $\|\overline{f}_{\lambda,\vZ}-{f}_\rho\|_q$ by the following lemma, with the proof deferred to \cref{app:biasvariance}.
 \begin{lemma}
	\label{lemma:decomposition}
 We consider the excess risk $\|\overline{f}_{\lambda,\vZ}-{f}_\rho\|_q$ conditioned on $\vX$ for our re-weighting estimator \eqref{eq:iw_emp_risk_Z}, admitting the following bias-variance decomposition:
	\begin{align*}
 		&\E_{\vy|\vX} \| \overline{f}_{\lambda,\vZ} - {f}_\rho\|^2_q\\
   =&{\E_{\vy|\vX} \|\overline{f}_{\lambda,\vZ}-\overline{f}_{\lambda,\vX}\|_{q}^2} + {\|\overline{f}_{\lambda,\vX}-{f}_\rho\|_{q}^2} := {\sf V}+{\sf B}^2 \,.
   \end{align*}
\end{lemma}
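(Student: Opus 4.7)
The strategy is to exploit the linearity of $\overline{f}_{\lambda,\vZ}$ in $\vy$ together with the fact that $\overline{f}_{\lambda,\vX}$ coincides with the conditional expectation of $\overline{f}_{\lambda,\vZ}$ given $\vX$; the decomposition then reduces to the standard orthogonal splitting of the mean-square error into a variance and a squared bias, done in the Hilbert space $\mathcal{L}_q^2$.

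First, I would apply the representer theorem to both regularized problems. This yields the closed forms
\[
\overline{f}_{\lambda,\vZ}(\vx) = \vK(\vx,\vX)\bigl(\vK(\vX,\vX)+\lambda n\,\overline{\vW}(\vX)^{-1}\bigr)^{-1}\vy,
\]
\[
\overline{f}_{\lambda,\vX}(\vx) = \vK(\vx,\vX)\bigl(\vK(\vX,\vX)+\lambda n\,\overline{\vW}(\vX)^{-1}\bigr)^{-1} f_\rho(\vX),
\]
where $f_\rho(\vX):=[f_\rho(\vx_1),\dots,f_\rho(\vx_n)]^{\top}$. Invertibility of the bracketed matrix is guaranteed since $\lambda>0$ and $\overline{\vW}(\vX)$ is diagonal with strictly positive entries. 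Under the noise model $y_i = f_\rho(\vx_i)+\varepsilon_i$ with $\E[\varepsilon_i\mid\vx_i]=0$ and $\mathbb{V}[\varepsilon_i]\leq \sigma_\varepsilon^2$, subtracting these two expressions gives, pointwise in $\vx$,
\[
\overline{f}_{\lambda,\vZ}(\vx) - \overline{f}_{\lambda,\vX}(\vx) = \vK(\vx,\vX)\bigl(\vK(\vX,\vX)+\lambda n\,\overline{\vW}(\vX)^{-1}\bigr)^{-1}\veps,
\]
so that $\E_{\vy\mid\vX}[\overline{f}_{\lambda,\vZ}(\vx)-\overline{f}_{\lambda,\vX}(\vx)]=0$ for every $\vx\in\mathcal{X}$.

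Second, I would expand the $\mathcal{L}_q^2$-norm by adding and subtracting $\overline{f}_{\lambda,\vX}$:
\[
\|\overline{f}_{\lambda,\vZ}-f_\rho\|_q^2 = \|\overline{f}_{\lambda,\vZ}-\overline{f}_{\lambda,\vX}\|_q^2 + 2\bigl\langle\overline{f}_{\lambda,\vZ}-\overline{f}_{\lambda,\vX},\,\overline{f}_{\lambda,\vX}-f_\rho\bigr\rangle_q + \|\overline{f}_{\lambda,\vX}-f_\rho\|_q^2.
\]
Taking $\E_{\vy\mid\vX}$ on both sides, the second and third terms are immediate: the third is deterministic given $\vX$, while for the cross term I would swap expectation and integration over $\vx\sim q$ via Fubini (justified by boundedness of $K$ and finiteness of $\sigma_\varepsilon^2$), to obtain
\[
\E_{\vy\mid\vX}\bigl\langle\overline{f}_{\lambda,\vZ}-\overline{f}_{\lambda,\vX},\,\overline{f}_{\lambda,\vX}-f_\rho\bigr\rangle_q = \int\bigl(\overline{f}_{\lambda,\vX}(\vx)-f_\rho(\vx)\bigr)\,\E_{\vy\mid\vX}[\overline{f}_{\lambda,\vZ}(\vx)-\overline{f}_{\lambda,\vX}(\vx)]\,\rd q(\vx) = 0,
\]
by the pointwise mean-zero identity established in the first step. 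This yields exactly ${\sf V}+{\sf B}^2$.

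The argument involves no substantive obstacle; it is an exact Pythagorean decomposition in $\mathcal{L}_q^2$. The only bookkeeping is verifying invertibility of the regularized kernel matrix and justifying the exchange of $\E_{\vy\mid\vX}$ with the integral over $q$, both of which are immediate under the standing boundedness assumptions. All of the substantive analysis—controlling ${\sf V}$ via the spectral decay of the data-dependent re-weighted kernel and controlling ${\sf B}$ under various regularization regimes—is deferred to \cref{sec:variance} and \cref{sec:bias}, which is precisely the point of isolating this clean decomposition up front.
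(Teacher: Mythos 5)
Your proposal is correct and matches the paper's own proof in essence: both rest on the closed-form solutions, the conditional mean-zero property of the noise $\veps = \vy - f_\rho(\vX)$ making the cross term vanish, and Fubini's theorem to pass between the pointwise decomposition and the $\mathcal{L}_q^2$-norm. The only cosmetic difference is that the paper expands the square pointwise at each $\vx$ before integrating, whereas you expand the norm first and then apply Fubini to the cross term, which is the same argument in a different order.
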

Clearly, the bias term does not rely on the label noise and the variance is independent of the target function $f_{\rho}$, which matches the spirit of the bias-variance decomposition.

\subsection{Asymptotic expansion of high dimensional kernels}
Considering the inner product kernel and radial kernel introduced in \cref{sec:assum}, \citet{el2010spectrum} demonstrate that when $\vX\sim p$, the related kernel matrix $\vK(\vX,\vX)$ in high dimensions can be well approximated by ${\vK^{{\rm lin}}}(\vX,\vX)$ (detailed later) in spectral norm. We state the approximation in \cref{lemma:KXX_approx} as below. 
This result will help us to disentangle the nonlinearity of kernel functions in high dimensions.
\begin{lemma}[\citet{el2010spectrum}]\label{lemma:KXX_approx} Assuming the kernel $K$ is the inner-product kernel or the radial kernel, and the training data $\vX\sim p$, under \cref{assumption:h,assumption:8+m}, we have
\begin{align*}
    \| \vK(\vX,\vX) - {\vK^{{\rm lin}}}(\vX,\vX) \|_2 \rightarrow 0\,,
\end{align*}
as $n,d \to \infty,n/d\to \zeta$, where ${\bm K^{{\rm lin}}}(\vX,\vX)$ is defined by
\begin{equation}\label{eq:linear_K}
{\bm K^{{\rm lin}}}(\vX,\vX) := \alpha_p \mathbbm{1}\mathbbm{1}^\top +\beta_p \frac{\vX \vX^\top}{d} + \gamma_p \vI + \vT_p\,,
\end{equation}
with non-negative parameters $\alpha_p$, $\beta_p$, $\gamma_p$, and the additional matrix $\vT_p$ given in Table~\ref{tab:param}. 

\end{lemma}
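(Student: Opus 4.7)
The plan is to follow the original argument of El Karoui: approximate each entry of $\vK(\vX,\vX)$ by Taylor expanding $h$ around a deterministic point that the entry concentrates to in high dimensions, then control the resulting residual matrix in operator norm by splitting it into a rank-one piece, a diagonal piece, and a mean-zero bulk piece that is small in spectral norm. Throughout, I would treat the inner-product kernel and the radial kernel in parallel since radial entries can be rewritten as $h(-\|\vx_i\|^2/d - \|\vx_j\|^2/d + 2\langle \vx_i,\vx_j\rangle/d)$.

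First, I would establish the entrywise concentration statements required by the expansion. Writing $\vx = \vSigma_p^{1/2}\vz$ with $\vz$ having i.i.d.\ sub-Gaussian entries satisfying \cref{def:distribution}, a Hanson--Wright-type bound gives $\langle \vx_i,\vx_j\rangle/d \to 0$ for $i\neq j$ and $\|\vx_i\|^2/d \to \tau_p := \Tr(\vSigma_p)/d$; the \textbf{Uniform Boundedness} assumption controls the tail through $|\vz(k)|\lesssim d^{2/(8+m_p)}$, which is precisely what will later be needed to bound high moments of the residual. Second, I would Taylor expand $h$ to second order. For off-diagonal entries of the inner-product kernel this yields $h(0)+h'(0)\langle\vx_i,\vx_j\rangle/d+\tfrac{1}{2}h''(0)(\langle\vx_i,\vx_j\rangle/d)^2$ plus a cubic remainder. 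The constant $h(0)$ produces the $\alpha_p \mathbbm{1}\mathbbm{1}^\top$ term; the linear piece produces $\beta_p \vX\vX^\top/d$; the quadratic piece, after separating its expectation from its fluctuation, produces both part of $\alpha_p\mathbbm{1}\mathbbm{1}^\top$ and the structural matrix $\vT_p$ that encodes the curvature through the covariance. Diagonal entries are handled separately by expanding $h$ around $\tau_p$; smoothness $h''\leq M_h$ turns the deviation $\|\vx_i\|^2/d-\tau_p$ into an $o(1)$ diagonal perturbation contributing the $\gamma_p \vI$ piece.

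The third and central step is bounding the residual matrix $\vR := \vK(\vX,\vX)-\vK^{\rm lin}(\vX,\vX)$ in operator norm. Entrywise smallness alone is insufficient, since an $n\times n$ matrix whose entries are of order $1/d$ can still have spectral norm of order $n/d=\zeta$; one must additionally use that the off-diagonal entries of $\vR$ are centered. The plan is to bound $\E\|\vR\|_{\op}^{2k}\leq \E \Tr(\vR^{2k})$ by the combinatorial moment method, counting walks on the vertex set $[n]$ and using the centered mixed moments of $\vz$ at each edge. The \textbf{Uniform Boundedness} parameter $m_p$ provides the required high-moment control: each ``novel'' coordinate in the walk contributes at worst $d^{4/(8+m_p)}$, while edges traversed an even number of times contribute $O(1)$, so the dominant terms are double-edged tree-like walks whose total contribution is $o(1)$ after normalization. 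The diagonal part of $\vR$ is bounded separately by a supremum over $i$ of $|h(\|\vx_i\|^2/d)-h(\tau_p)|$, which is $o(1)$ by the concentration step.

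The main obstacle is precisely this spectral-norm control: the Taylor residual is not a Wigner matrix (its entries are products of correlated polynomials in $\vz$), and some of the largest-looking terms are in fact rank one and must be identified and pulled into the $\alpha_p\mathbbm{1}\mathbbm{1}^\top$ component before applying the moment method, otherwise one does not get a vanishing bound. Tracking which pieces of the quadratic Taylor term collapse onto $\mathbbm{1}\mathbbm{1}^\top$ versus which genuinely belong to the fluctuating residual is the delicate bookkeeping. Once these are properly separated and the polynomial-in-$d$ exponents coming from $m_p$ are verified to leave a net negative power of $d$ after summing over walks of length $2k$, the operator-norm bound tends to zero as $n,d\to\infty$ with $n/d\to\zeta$, which is the claim.
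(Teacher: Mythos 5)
This lemma is not proved in the paper at all: it is imported verbatim from \citet{el2010spectrum}, and where a quantitative form is needed downstream (e.g.\ the rate $d^{-\theta_p}(\delta^{-1/2}+\log^{\frac{1+\epsilon}{2}}d)$ in Eq.~\eqref{eq:KXX_lin_1}) the paper invokes Proposition~A.2 of \citet{liang2020just} rather than re-deriving anything. Measured against that cited proof, your reconstruction follows the same architecture -- entrywise Taylor expansion of $h$ around $0$ (resp.\ $-2\tau_p$), concentration of $\langle\vx_i,\vx_j\rangle/d$ and $\|\vx_i\|^2/d$ under the sub-Gaussian plus $|\vz(k)|\lesssim d^{2/(8+m_p)}$ conditions, separation of the rank-one mean of the quadratic term (which produces the $h''$ correction inside $\alpha_p$), the diagonal piece giving $\gamma_p\vI$, and the $\psi$-terms giving $\vT_p$ in the radial case -- and you correctly identify the genuine crux: the centered, hollow quadratic-fluctuation matrix has entries of order $1/d$ and Frobenius norm of order $n/d=\Theta(1)$, so neither entrywise nor Frobenius bounds close the argument and one must exploit cancellation at the level of the operator norm after first pulling the rank-one mean into $\alpha_p\mathbbm{1}\mathbbm{1}^\top$. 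Where you diverge is the tool for that step: you propose a from-scratch trace/moment (walk-counting) bound on $\E\Tr(\vR^{2k})$, whereas \citet{el2010spectrum} (and the quantitative variant in \citet{liang2020just}) lean on concentration of quadratic forms together with known spectral-norm facts for sample-covariance--type and Hadamard-product matrices, which is why the $8+m$ moment exponents appear in exactly the form quoted in \cref{def:distribution}. Your route is legitimate and buys a self-contained argument, but be aware that the walk-counting is where essentially all the work lies -- the residual entries are correlated degree-$4$ polynomials in $\vz$ sharing indices across rows and columns, so the ``tree-like double edges dominate'' bookkeeping is substantially heavier than in the Wigner case and is only asserted, not verified, in your sketch; the citation-based route also yields the explicit $d^{-\theta_p}$ rate that the paper actually uses later, which a purely qualitative $o(1)$ moment bound would not automatically provide (though the lemma as stated does not require it).
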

We can see that, the kernel matrix in high dimensions can be mainly approximated by its covariance matrix with an implicit regularization term $\gamma_p \vI$.
Besides, the positive-definiteness of $\vK^{\rm lin}$ can be guaranteed under \cref{assumption:h}, $\alpha_p,\beta_p,\gamma_p> 0$. 
By \cref{assumption:h}, we can directly derive $\alpha_p,\beta_p>0$. 
    $\exists \delta,\delta'\in [0,1]$, for the inner-product kernels, such that $\gamma_p = h''(\delta \tau_p) \tau_p^2/2>0$; and for the radial kernels, such that $\gamma_p = 2h''(-2\delta'\tau_p) \tau_p^2>0$. 

In the presence of covariate shifts, where the training data $\vX$ is sampled from $p$ and the test data $\vx$ is sampled from $q$, the approximation of the related kernel vector $\vK(\vX,\vx)$ involves $q$, and thus previous expansion \cite{el2010spectrum} cannot be directly applied to our setting. 
In this case, we state the relation in \cref{lemma:KXx_approx}, which additionally relies on \cref{assumption:bounded_shift}, with the proof deferred to \cref{app:approx}. 

\begin{lemma}\label{lemma:KXx_approx} Under \cref{assumption:h,assumption:8+m,assumption:bounded_shift}, where $c_{pq}<2\theta_q-1/2$, with the training data $\vX\sim p$ and a test data $\vx \sim q$, we have
\begin{align*}
    \E_q \|\vK(\vX,\vx)-\vK^{\rm lin}(\vX,\vx)\|_2\to 0\,, 
\end{align*}
as $n,d \to \infty,n/d\to \zeta$, where ${\bm K^{{\rm lin}}}(\vx,\vX)$ is defined by
\begin{align*}
    {\bm K^{{\rm lin}}}(\vX,\vx) := \beta_{pq} \frac{\vX \vx}{d} + \vT_{pq}(\vX,\vx)\,,
\end{align*}
with non-negative parameters $\beta_{pq}$, and the additional vector $\vT_{pq}$ given in Table~\ref{tab:param}.
\end{lemma}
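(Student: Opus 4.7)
The plan is to adapt the high-dimensional kernel expansion argument of \citet{el2010spectrum} (which underlies \cref{lemma:KXX_approx}) to the covariate-shift setting. The key difference is that the test point $\vx$ is drawn from $q$ rather than $p$, so the natural Taylor center and the concentration rate for $\|\vx\|^2/d$ differ from those for $\|\vx_i\|^2/d$, and the cross-distribution inner product $\langle\vx_i,\vx\rangle/d$ enters in an essential way through \cref{assumption:bounded_shift}.

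I would focus on the radial kernel (the inner-product case is analogous with Taylor center $0$). Writing $\tau_\mu := \Tr(\vSigma_\mu)/d$ for $\mu\in\{p,q\}$, I decompose
\begin{equation*}
-\frac{\|\vx_i-\vx\|^2}{d} = -(\tau_p+\tau_q)+\delta_i,\quad \delta_i := \Delta_i^{(p)}+\Delta^{(q)}+\frac{2\langle\vx_i,\vx\rangle}{d},
\end{equation*}
where $\Delta_i^{(p)}:=\tau_p-\|\vx_i\|^2/d$ and $\Delta^{(q)}:=\tau_q-\|\vx\|^2/d$. Taylor-expanding $h$ around $-(\tau_p+\tau_q)$ to second order, the constant and first-order pieces define the linearization $\vK^{\rm lin}(\vX,\vx)$ (with $\beta_{pq}\propto h'(-\tau_p-\tau_q)$ and the remainder absorbed into the correction vector $\vT_{pq}$), while the pointwise residual is bounded by $\tfrac{1}{2}M_h\,\delta_i^2$ via \cref{assumption:h}.

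Next I would estimate each of the three components of $\delta_i$. Under the sub-Gaussian tail and coordinate bound $|\vz(k)|\lesssim d^{2/(8+m_\mu)}$ of \cref{def:distribution}, Hanson--Wright-type bounds yield $\E[(\Delta_i^{(p)})^{2k}]\lesssim d^{-2k\theta_p}$ and $\E[(\Delta^{(q)})^{2k}]\lesssim d^{-2k\theta_q}$. For the cross term, independence and zero means give $\E_{\vx_i,\vx}[(\langle\vx_i,\vx\rangle/d)^2]=\Tr(\vSigma_p\vSigma_q)/d^2$; together with $\vSigma_p\vSigma_q=\vSigma_p^2\vSigma_{pq}$ and \cref{assumption:bounded_shift}, this is controlled through $c_{pq}$, and sub-Gaussianity extends the bound to the fourth moment. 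Mixed cross-products in the expansion of $\delta_i^4$ are handled by first conditioning on $\vx$ to exploit the independence of $\vx_i$ and $\vx$; the odd mixed terms vanish by the mean-zero property of $\vx_i$ and $\vx$.

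Finally, by Jensen/Cauchy--Schwarz,
\begin{equation*}
\E_q\|\vK(\vX,\vx)-\vK^{\rm lin}(\vX,\vx)\|_2\;\leq\;\Big(\sum_{i=1}^n\E_q[(K-K^{\rm lin})_i^2]\Big)^{1/2}\;\lesssim\;\sqrt{n\,\E_{\vx_i,\vx}[\delta_i^4]}\,,
\end{equation*}
and with $n\asymp d$ the dominating contributions are $d^{1-4\theta_p}$, $d^{1-4\theta_q}$, a pure cross term whose rate is controlled by $c_{pq}$, and the mixed terms combining coordinate-moment rates with the cross-covariance rate. The hypothesis $c_{pq}<2\theta_q-1/2$ is exactly what drives the worst of these to $o(1)$. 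The hard part will be the careful bookkeeping of these mixed fourth-moment terms: the sharpness of the condition $c_{pq}<2\theta_q-1/2$ reflects a tight balance between the non-Gaussian correction $d^{-\theta_\mu}$ from \cref{assumption:8+m} and the cross-covariance rate encoded by $c_{pq}$, and each term must be tracked through the appropriate conditioning before summing over $n$ and taking the final square root.
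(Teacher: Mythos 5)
Your strategy is sound and, with the bookkeeping completed, would prove the lemma as stated, but it is a genuinely different route from the paper's. The paper (\cref{lemma:1byn} for the inner-product case, \cref{lemma:2byn} for the radial case) works conditionally on the training sample: it first uses the quadratic-form concentration of \cref{lemma:quad-concentration} to control $\max_i \vx_i^\top\vSigma_q\vx_i/d\lesssim d^{c_{pq}}$ uniformly in $i$ with probability $1-d^{-2}$ over $\vX\sim p$, then applies Bernstein's inequality to $\vx^\top\vx_i/d$ conditionally on $\vX$, bounds the Taylor residual uniformly over $i$ on that event, and finally integrates the tail in $t$ to get, with probability $1-O(d^{-2})$ over $\vX$, the explicit bound $\E_q\|\vK(\vx,\vX)-\vK^{\rm lin}(\vx,\vX)\|^2\lesssim d^{-(4\theta_q-1-2c_{pq})}\log^{4(1+\epsilon)}d$; the threshold $c_{pq}<2\theta_q-\tfrac12$ arises there from multiplying $d^{c_{pq}}$ by the heavy-tailed Bernstein term $d^{-2\theta_q}\log^2 d$ and the extra $\sqrt d$ from the vector norm. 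You instead compute unconditional fourth moments of $\delta_i$ and apply Jensen, which bounds $\E_{\vX}\E_q\|\cdot\|_2$. That is more elementary, and your binding conditions (roughly $c_{pq}<\tfrac12$, $\theta_p,\theta_q>\tfrac14$, and mixed conditions such as $c_{pq}<4\theta_p+4\theta_q-2$) are all implied by the stated hypothesis, so the convergence claim follows; but you obtain it only in expectation (hence in probability) over $\vX$, not the probability-$1-d^{-2}$ bound on $\E_q\|\cdot\|^2$ that the paper reuses verbatim in the proof of \cref{thm:variance}, and your closing remark that $c_{pq}<2\theta_q-\tfrac12$ is ``exactly'' what your worst term requires is not accurate for your decomposition — in your route the hypothesis is sufficient but not tight.

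Two caveats to repair if you execute this plan. First, the odd mixed terms in $\delta_i^4$ do not all vanish: \cref{def:distribution} only imposes zero first moments, so for instance $\E_{\vx_i}\bigl[\Delta_i^{(p)}\,\langle\vx_i,\vx\rangle \mid \vx\bigr]$ involves the third coordinate moment of $\widetilde p$ and need not be zero; these terms must instead be absorbed into the pure fourth powers by Cauchy--Schwarz or Young, which is harmless but means your stated justification is wrong. Second, bounding the Taylor residual by $\tfrac12 M_h\delta_i^2$ requires $h''$ to be bounded along the entire segment between $-(\tau_p+\tau_q)$ and $-\|\vx-\vx_i\|^2/d$, whereas \cref{assumption:h} only controls $h''$ near the expansion point (and, e.g., for the exponential kernel $h''$ is globally unbounded); a pure moment computation therefore needs a preliminary truncation to the high-probability event on which $\delta_i$ is small, with the complement handled via $|K|\le\kappa$ — which is essentially the conditioning step the paper's proof performs from the outset.
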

The approximation of $\vK(\vX,\vX)$ and $\vK(\vX,\vx)$ under covariate shift can help us estimate the variance and bias. To ensure the convergence of the residual term, we require $c_{pq}<2\theta_q-1/2$ in \cref{assumption:bounded_shift}. 

\begin{table}[t]
	\centering
\caption{Parameters of the linearized kernel ${\bm K^{{\rm lin}}}$ involved with the curvature of $h$, when $\vX\sim p$.}\label{tab:param}
\vspace{3mm}
\resizebox{\columnwidth}{!}{
	\begin{threeparttable}
		\begin{tabular}{cccccccccccccccccccc}
			\toprule
			Parameters & Inner-Product Kernels & Radial Kernels  \cr
			\midrule
			$\alpha_p$  &$h(0)+h^{\prime \prime}(0) \frac{\Tr\left(\bm \vSigma_{p}^{2}\right)}{2d^{2}}$ & $h(-2\tau_p)+2h^{\prime \prime}(-2\tau_p) \frac{\Tr\left(\bm \vSigma_{p}^{2}\right)}{d^{2}}$ \cr
			\midrule
			$\beta_p$  &$h^{\prime}(0)$ & $2h^{\prime}(-2\tau_p)$ \cr
			\midrule
			$\gamma_p$  &$h(\tau_p)-h(0)-\tau_p h^{\prime}(0)$ & $h(0)- 2\tau_p h^{\prime}(-2\tau_p) - h(-2\tau_p)$ \cr
			\midrule
			$\bm T_p$ & $\bm 0_{n \times n}$ & $-h^{\prime}(-2\tau_p) \vA + \frac{1}{2}h^{\prime \prime}(-2\tau_p) \vA \odot \vA$ \tnote{1} \cr
   			\midrule
			$\beta_{pq}$  &$h^{\prime}(0)$ & $2h^{\prime}(-(\tau_p+\tau_q))$ \cr
			\midrule
			$\bm T_{pq}$ & $\bm 0_{n \times 1}$ & $-h(-(\tau_p+\tau_q))\cdot\mathbbm{1} - \frac{\beta_{pq}}{2}\vA(\vX, \vx)$ \tnote{2} \cr
			\bottomrule
		\end{tabular}
		\begin{tablenotes}
			\item[1] $\vA := \mathbbm{1} \bm \psi^\top + \bm \psi \mathbbm{1}^\top$, where $\bm \psi \in \mathbb{R}^n$ with $\psi_i := \| \vx_i \|^2_2/d - \tau_p$.
            \item[2] $\vA(\vX, \vx) := \psi_{\vx} + \vpsi$, where $\psi_{\vx} = \| \vx \|^2_2/d - \tau_q$. 
		\end{tablenotes}
	\end{threeparttable}}
\end{table}

In the next, we are ready to present our results on the estimation for variance and bias.
For ease of analysis, we focus on the {\emph{inner product}} kernel for future estimation.
The results on radial kernels require additional efforts to control non-zero $\vT_{pq}$, which goes beyond the main target of this work.
\subsection{Variance estimation}
\label{sec:variance}

In this section, we present the estimation for the variance from the perspective of a data-dependent regularized kernel.
This helps us to have a better understanding of the role of re-weighting in variance.

\begin{theorem}[Variance: Data-dependent regularization]

\label{thm:variance}
Let $\delta\in (0,1)$, under \cref{assumption:h,assumption:8+m,assumption:bounded_shift}, then for large $d$, with probability at least $1 - \delta - 2d^{-2}$ with respect to a draw of $\vX\sim p$ and $\epsilon>0$, the variance can be estimated by
\begin{equation}\label{eq:variance}
\begin{aligned}
     {\sf V} \leq & \frac{8\sigma_\varepsilon^2 \| \vSigma_q \|}{d} \underbrace{\mathcal{N}\left(\frac{\vX\vX^\top}{d}+\frac{\lambda n}{\beta_p}{ \overline{\vW}({\vX})}^{-1}; \frac{\gamma_p}{\beta_p}\right)}_{\text{dominated term}~{\sf V}_{\vx}} \\
+& \frac{8\sigma_\varepsilon^2}{\gamma_p^2}d^{-(4\theta_q-1-2c_{pq})} \log^{4(1+\epsilon)} d \,. 
\end{aligned}
\end{equation}
\end{theorem}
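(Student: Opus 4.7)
The plan is to start from the closed-form difference between the noisy and noiseless estimators, reduce the variance to an explicit trace involving the resolvent $\vG^{-1}$ with $\vG:=\vK(\vX,\vX)+\lambda n\overline{\vW}(\vX)^{-1}$, and then use the two kernel linearizations (\cref{lemma:KXX_approx} for $\vK(\vX,\vX)$ and \cref{lemma:KXx_approx} for $\vK(\vX,\vx)$ with $\vx\sim q$) to split this trace into a dominant term matching $\mathcal{N}(\vM;\gamma_p/\beta_p)$ and a vanishing residual.

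The starting reduction is standard: subtracting the closed-form expressions and writing $\vy=\vf_\rho+\vepsilon$ gives $\overline{f}_{\lambda,\vZ}(\vx)-\overline{f}_{\lambda,\vX}(\vx)=\vK(\vx,\vX)\vG^{-1}\vepsilon$. Squaring, integrating over $\vx\sim q$, and using $\E[\vepsilon\vepsilon^\top]\preceq \sigma_\varepsilon^2\vI$ yields ${\sf V}\leq \sigma_\varepsilon^2\Tr(\vG^{-2}\vK_q)$ with $\vK_q:=\E_{\vx\sim q}[\vK(\vX,\vx)\vK(\vx,\vX)]$. For inner product kernels, \cref{lemma:KXx_approx} identifies $\vK(\vX,\vx)=(\beta_p/d)\vX\vx+\vDelta(\vx)$ with $\vT_{pq}=\vzero$ and $\beta_{pq}=\beta_p$. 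Applying $(\vu+\vv)(\vu+\vv)^\top\preceq 2\vu\vu^\top+2\vv\vv^\top$ together with $\E_{\vx\sim q}\vx\vx^\top=\vSigma_q$ produces the clean decomposition $\vK_q\preceq 2(\beta_p/d)^2\vX\vSigma_q\vX^\top+2\E_q[\vDelta(\vx)\vDelta(\vx)^\top]$, which cleanly splits ${\sf V}$ into a ``main'' and a ``residual'' piece.

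For the main term, \cref{lemma:KXX_approx} (with $\vT_p=\vzero$ for inner product kernels and $\alpha_p\vone\vone^\top\succeq 0$) yields $\vK(\vX,\vX)\succeq \beta_p\vX\vX^\top/d+\gamma_p\vI$ up to a vanishing operator-norm correction, which for large enough $d$ is absorbed into a harmless constant $c>0$. Thus $\vG\succeq c\,\beta_p(\vM+b\vI)$ with $\vM:=\vX\vX^\top/d+(\lambda n/\beta_p)\overline{\vW}(\vX)^{-1}$ and $b:=\gamma_p/\beta_p$. Combining $\vG^{-2}\preceq c^{-2}\beta_p^{-2}(\vM+b\vI)^{-2}$ with $\vX\vSigma_q\vX^\top\preceq \|\vSigma_q\|\,d\,\vM$ (since $\vX\vX^\top/d\preceq \vM$) and the definition of $\mathcal{N}(\cdot;\cdot)$ gives $2\sigma_\varepsilon^2(\beta_p/d)^2\Tr(\vG^{-2}\vX\vSigma_q\vX^\top)\lesssim (\sigma_\varepsilon^2\|\vSigma_q\|/d)\,\mathcal{N}(\vM;b)$, which is precisely the dominated ${\sf V}_\vx$ contribution. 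For the residual, the cruder bound $\vG\succeq \gamma_p\vI$ yields $2\sigma_\varepsilon^2\Tr(\vG^{-2}\E_q[\vDelta\vDelta^\top])\leq (2\sigma_\varepsilon^2/\gamma_p^2)\,\E_q\|\vDelta(\vx)\|_2^2$, so the task reduces to controlling $\E_q\|\vDelta(\vx)\|_2^2$.

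The main obstacle is upgrading \cref{lemma:KXx_approx} from its in-expectation convergence statement into the explicit polynomial-in-$d$ bound $\E_q\|\vDelta(\vx)\|_2^2\lesssim d^{-(4\theta_q-1-2c_{pq})}\log^{4(1+\epsilon)}d$ that must hold with probability at least $1-\delta-2d^{-2}$ over $\vX\sim p$. This requires Taylor-expanding each entry $K(\vx_i,\vx)=h(\langle\vx_i,\vx\rangle/d)$ to first order, controlling the quadratic remainders by sub-Gaussian/Hanson--Wright concentration, and combining the coordinate-wise tail bound $|\vz(k)|\lesssim d^{2/(8+m_q)}$ from \cref{def:distribution} with the shift quantity $\Tr(\vSigma_{pq})/d\lesssim d^{c_{pq}}$ from \cref{assumption:bounded_shift}; a union bound across the $n\asymp d$ test-training inner products contributes the $\log^{4(1+\epsilon)}d$ factor. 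The condition $c_{pq}<2\theta_q-1/2$ is exactly what guarantees $4\theta_q-1-2c_{pq}>0$, so the residual genuinely vanishes in $d$ and the main term governs the rate.
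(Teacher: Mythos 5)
Your overall route is the same as the paper's: reduce ${\sf V}$ to $\sigma_\varepsilon^2\,\E_q\|\vG^{-1}\vK(\vX,\vx)\|^2$ with $\vG=\vK(\vX,\vX)+\lambda n\overline{\vW}(\vX)^{-1}$, split $\vK(\vX,\vx)=\beta_p\vX\vx/d+\vDelta(\vx)$ using \cref{lemma:KXx_approx}, control the residual by $\|\vG^{-1}\|^2\E_q\|\vDelta\|^2$ with the concentration argument of \cref{lemma:1byn}, and use $\vX\vSigma_q\vX^\top\preceq\|\vSigma_q\|\vX\vX^\top$. The genuine gap is in your main-term step: from the Loewner bound $\vG\succeq c\,\beta_p(\vM+b\vI)$ you infer $\vG^{-2}\preceq c^{-2}\beta_p^{-2}(\vM+b\vI)^{-2}$. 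That implication is false in general: $t\mapsto t^{-1}$ is operator monotone but $t\mapsto t^{-2}$ is not, so $A\succeq B\succ 0$ does not give $A^{-2}\preceq B^{-2}$ unless the matrices commute. Here they do not commute, and the discrepancy between $\vG$ and $\beta_p(\vM+b\vI)$ is not a small perturbation that can be absorbed into $c$: it contains the rank-one term $\alpha_p\mathbbm{1}\mathbbm{1}^\top$, whose operator norm is of order $n\asymp d$, in addition to the $O(d^{-\theta_p})$ linearization error. Hence the inequality you actually need, $\Tr\bigl(\vG^{-2}\vX\vSigma_q\vX^\top\bigr)\lesssim\Tr\bigl((\vM+b\vI)^{-2}\vX\vSigma_q\vX^\top\bigr)$, does not follow from the bound you establish, and this is exactly the step that produces the dominated term ${\sf V}_{\vx}$.

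The paper avoids this by comparing resolvents multiplicatively in operator norm rather than through the Loewner order: it writes $\|\vG^{-1}\vK^{\rm lin}(\vX,\vx)\|\le\|\vG^{-1}(\vK^{\rm lin}+\lambda n\overline{\vW}^{-1})\|\cdot\|(\vK^{\rm lin}+\lambda n\overline{\vW}^{-1})^{-1}\vK^{\rm lin}(\vX,\vx)\|$ and shows $\|\vG^{-1}(\vK^{\rm lin}+\lambda n\overline{\vW}^{-1})\|\le 1+\|\vG^{-1}\|\,\|\vK-\vK^{\rm lin}\|\le 2$, using $\|\vG^{-1}\|\le 2/\gamma_p$ and the high-probability bound $\|\vK-\vK^{\rm lin}\|\le d^{-\theta_p}(\delta^{-1/2}+\log^{\frac{1+\epsilon}{2}}d)$ from \cref{lemma:KXX_approx}; crucially, the large term $\alpha_p\mathbbm{1}\mathbbm{1}^\top$ is kept inside the linearized resolvent throughout this comparison and is only dropped at the very last, fully explicit trace computation (together with $\vSigma_q\preceq\|\vSigma_q\|\vI$), which is also where the exact constants $8\sigma_\varepsilon^2\|\vSigma_q\|/d$ and $8\sigma_\varepsilon^2/\gamma_p^2$ come from. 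If you replace your inverse-square Loewner step by this factorization—comparing $\vG^{-1}$ to $(\vK^{\rm lin}+\lambda n\overline{\vW}^{-1})^{-1}$ multiplicatively rather than to $(\vM+b\vI)^{-1}$ directly—the rest of your argument (the reduction, the residual control, and the $1-\delta-2d^{-2}$ probability bookkeeping) matches the paper's proof.
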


{\bf Remark:} We do not need the boundedness (\cref{assumption:IW_assumption}) on $\overline{\vW}$ for the estimation of variance.
Nevertheless, \cref{assumption:bounded_shift}, with $c_{pq}< 2\theta_q-\frac{1}{2}$, is required to ensure the similarity between training and test distribution for the kernel approximation. Otherwise, a large difference between training and test distribution leads to the divergence of the residual term as $d\to\infty$. 
For example, when training and test data are sampled from two distributions with almost zero overlap, it will be impossible to generalize to the test distribution.
In the unshifted case ($\vSigma_p=\vSigma_q$ and $c_{pq} = 0$ in \cref{assumption:bounded_shift}) leads to the second term in Eq.~\eqref{eq:variance} admits a smaller value (or higher rate) at the order of $d^{-(4 \theta_q -1)}$. 

Since the first term ${\sf V}_{\vx}$ in Eq.~\eqref{eq:variance} dominates the estimation for variance, we detail this in the next part.

The dominated term in \cref{eq:variance} can be represented as
\begin{align}\label{eq:dominvariance}
    {\sf V}_{\vx} \asymp \frac{1}{d} \mathcal{N}\left(\frac{\vX\vX^\top}{d}+\frac{\lambda n}{\beta_p}{ \overline{\vW}({\vX})}^{-1}; \frac{\gamma_p}{\beta_p}\right)\,,
\end{align}
which implies that the variance is well controlled by the capacity of $\vK^{\rm lin} +\lambda n\overline{\vW}^{-1}$.
An intuitive example is to choose $(\overline{\vW})^{-1}$ by $c \vI$ with a large constant $c$ such that $\vK^{\rm lin} +n\lambda\overline{\vW}^{-1}$ has larger eigenvalues, allowing for a smaller effective dimension; and thus the variance (strictly speaking, its estimation) can decrease to some extent under this case.
In fact, since the re-weighting strategy $\overline{\vW}$ is quite general (not limited to the importance ratio $\vW$), there always exists suitable selection schemes that allow for a smaller $\mathcal{N}\left(\frac{\vX\vX^\top}{d}+\frac{\lambda n}{\beta_p}{ \overline{\vW}({\vX})}^{-1}; \frac{\gamma_p}{\beta_p}\right)$ (and smaller variance) in theory.

Besides, as a diagonal matrix $\overline{\bm W}$, each element $[\overline{\bm W}]_{ii}$ only affects the similarity of the data point ${\bm x}_i$ and itself. That means, the data points are ``importance reweighted'' but the similarity among different data points is unchanged.
In this case, importance weighting can be regarded as a special case of active learning and even data subsampling \cite{kolossov2024towards}. This motivates us to design more advanced active learning-based algorithms to select important data points, which is beneficial to handle covariate shifts in practice.

\subsection{Bias estimation}
\label{sec:bias}
In this subsection, we aim to derive the estimation for bias. We first present the spectral decomposition of the kernel to handle all scales of regularization parameter $\lambda>0$, see \cref{sec:bias-in}.
In the next, we analyze a special choice of regularization parameter $\lambda$, i.e., $\lambda\asymp n^{-c_\lambda}$, which stems from the classical analysis in the kernel literature~\citep{gogolashvili2023importance,ma2023optimally} and incorporates the dimension-dependent shifts to accommodate the high-dimensional setting, see \cref{sec:biasc}. 

\subsubsection{Bias under arbitrary regularization}
\label{sec:bias-in}
Here we present the bias estimation from the spectral decomposition of the kernel.
Note that the analysis in this part allows for any choice of regularization parameter. 
We consider the uniform boundedness of the re-weighting function and RKHS norm of the target function, i.e., a special case of \cref{assumption:sourcecon,assumption:IW_assumption}. Accordingly, we have the following theorem, with the proof deferred to \cref{app:biasproof}.

\begin{theorem}[Bias under arbitrary $\lambda$]
\label{lemma:bias}
Let $\delta\in(0,1)$, under \cref{assumption:h,assumption:8+m,assumption:bounded_shift}, \cref{assumption:sourcecon} with $\bar{r} = \frac{1}{2},c_{\mathcal{H}}=0$, \cref{assumption:IW_assumption} for bounded ratio: $ \overline{w}(\vx), w(\vx) \leq W_{\max}$. 
We have the bias ${\sf B}$ is upper bounded as
${\sf B}\leq {\sf B}_{\rm in}+ {\sf B}_{\rm iw}\,,$
where ${\sf B}_{\rm in}$ is the \emph{intrinsic bias} that only depends on the problem of covariate shift from $p$ to $q$ via the ratio ${w}(\vx)$ 
\begin{align*}
    {\sf B}_{\rm in} \! := \! \Tr\left(\vK^{\rm lin}{\vW} \right)/n \,.
\end{align*}
The second term is the re-weighting bias ${\sf B}_{\rm iw}$ that depends on the choice of $\overline{w}(\vx)$, $w(\vx)$, and $\lambda$, for $\epsilon>0$,  

\begin{equation*}
\begin{split}
& {\sf B}_{\rm iw} \!:=\! {4\lambda^2 n}\Tr\left(\left(\lambda n\mI \!+\! {{\vK^{\rm lin}\overline{\vW}}}\right)^{-2} {{\vK^{\rm lin}{\vW}}}\right) \!+\! \lambda^2 \kappa W_{\max} \\
& + 6\kappa W_{\max}\sqrt{\frac{\log 1/\delta}{2n}} + \widetilde{C} d^{-\theta_p} (\delta^{-1/2} + \log^{\frac{1+\epsilon}{2}} d) W_{\max},
    \end{split}
\end{equation*}
with probability at least $1 - 4\delta$ for sufficiently large $d$. 
\end{theorem}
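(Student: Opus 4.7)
My plan is to start from the noiseless closed form $\overline{f}_{\lambda,\vX} - f_\rho = -\lambda(L_{\overline{q},\vX}+\lambda I)^{-1} f_\rho$ and convert the test norm $\|\cdot\|_q$ into a data-dependent quantity. Writing $g:=\overline{f}_{\lambda,\vX}-f_\rho$ with $\|g\|_\mathcal{H}\le 2\|f_\rho\|_\mathcal{H}\le 2$ (since $\|(L+\lambda I)^{-1}L\|_{\mathcal{H}\to\mathcal{H}}\le 1$), a Bernstein-type concentration applied to the bounded variable $w(x)g(x)^2\le W_{\max}\kappa\|g\|_\mathcal{H}^2$ (using the reproducing property together with the $t_w=0$ case of \cref{assumption:IW_assumption}) yields $|\|g\|_q^2 - \tfrac{1}{n}\sum_i w(x_i)g(x_i)^2|\lesssim \kappa W_{\max}\sqrt{\log(1/\delta)/(2n)}$, which produces the third summand of ${\sf B}_{\rm iw}$ and leaves an empirical weighted norm to bound.

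For that norm I would evaluate $g$ at training points in closed form: with $M:=\vK(\vX,\vX)\overline{\vW}+n\lambda \vI$, the closed form of $\overline{f}_{\lambda,\vX}$ gives $g(\vX) = -n\lambda\, M^{-1}\vf_\rho(\vX)$, hence $\tfrac{1}{n}\sum_i w(x_i)g(x_i)^2 = n\lambda^2\,\vf_\rho(\vX)^\top M^{-\top}\vW M^{-1}\vf_\rho(\vX)$. The reproducing-property matrix inequality $\vf_\rho(\vX)\vf_\rho(\vX)^\top \preceq \|f_\rho\|_\mathcal{H}^2\,\vK(\vX,\vX)$ converts this into $n\lambda^2\|f_\rho\|_\mathcal{H}^2\,\Tr(M^{-\top}\vW M^{-1}\vK)$. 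The push-through identity $\vK(\overline{\vW}\vK+n\lambda\vI)^{-1}=(\vK\overline{\vW}+n\lambda\vI)^{-1}\vK$ (verifiable from $(\vK\overline{\vW}+n\lambda\vI)\vK = \vK(\overline{\vW}\vK+n\lambda\vI)$) together with the cyclic trace property then collapses everything to $n\lambda^2\,\Tr((n\lambda\vI+\vK\overline{\vW})^{-2}\vK\vW)$. Replacing $\vK$ by $\vK^{\rm lin}$ via \cref{lemma:KXX_approx} yields the leading $4\lambda^2 n\,\Tr((n\lambda\vI+\vK^{\rm lin}\overline{\vW})^{-2}\vK^{\rm lin}\vW)$ term of ${\sf B}_{\rm iw}$; the factor $4$ absorbs an $(a+b)^2\le 2a^2+2b^2$ split used when merging with the concentration residual.

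The intrinsic piece ${\sf B}_{\rm in}=\Tr(\vK^{\rm lin}\vW)/n$ arises from a parallel crude bound on the same empirical norm: from $g(x_i)^2\le K(x_i,x_i)\|g\|_\mathcal{H}^2$ and $\|g\|_\mathcal{H}\le 1$ (absorbing constants under $c_{\mathcal H}=0$), and because $\vW$ is diagonal so $\Tr(\vK\vW)=\sum_i w(x_i)K(x_i,x_i)$, we obtain $\tfrac{1}{n}\sum_i w(x_i)g(x_i)^2\le \tfrac{1}{n}\Tr(\vK(\vX,\vX)\vW)$, and the linearization $\vK\rightsquigarrow\vK^{\rm lin}$ gives exactly ${\sf B}_{\rm in}$. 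This $O(1)$ ceiling does not vanish as $\lambda\to 0$, so keeping it alongside the $\lambda$-dependent trace gives a single clean upper bound uniform in $\lambda$. The remaining $\lambda^2\kappa W_{\max}$ and $d^{-\theta_p}(\delta^{-1/2}+\log^{(1+\epsilon)/2}d)W_{\max}$ terms collect lower-order contributions, respectively from an auxiliary ridge-norm bound $\lambda^2\|g\|_\mathcal{H}^2 \kappa$ and from the error in invoking \cref{lemma:KXX_approx,lemma:KXx_approx} combined with Chebyshev tails under the bounded-ratio assumption.

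I expect the main obstacle to be the substitution $\vK\rightsquigarrow\vK^{\rm lin}$ inside the squared resolvent $(n\lambda\vI+\vK\overline{\vW})^{-2}$ at small $\lambda$: a naive $\|A^{-1}-B^{-1}\|\le \lambda^{-2}\|A-B\|$ perturbation would destroy the $\lambda$-dependence entirely. One has to work at the trace level via $A^{-1}-B^{-1}=A^{-1}(B-A)B^{-1}$, exploiting that $\Tr((\vK-\vK^{\rm lin})\vW)/n$ shrinks at the faster rate $d^{-\theta_p}$ from \cref{lemma:KXX_approx} rather than the weaker operator-norm rate, so that the residual lands in the stated $d^{-\theta_p}W_{\max}$ summand rather than blowing up. This delicacy is precisely why the intrinsic bias must be carried as a separate baseline term rather than absorbed into ${\sf B}_{\rm iw}$.
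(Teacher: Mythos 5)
Your closed-form computation of the \emph{empirical} weighted norm is correct and elegant: indeed $g(\vX)=-\lambda n(\vK\overline{\vW}+\lambda n\vI)^{-1}f_\rho(\vX)$, the inequality $f_\rho(\vX)f_\rho(\vX)^\top\preceq\|f_\rho\|_{\cH}^2\vK$ and the push-through identity do collapse $\frac1n\sum_i w(\vx_i)g(\vx_i)^2$ to $\lambda^2 n\,\Tr\bigl((\lambda n\vI+\vK\overline{\vW})^{-2}\vK\vW\bigr)$, which is a shorter route to the main trace term than the paper's SVD manipulations. The genuine gap is the bridging step from $\|g\|_q^2$ to that empirical quantity. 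You apply Bernstein to $w(\vx)g(\vx)^2$ as if $g$ were a fixed function, but $g=\overline{f}_{\lambda,\vX}-f_\rho$ depends on the entire sample $\vX$, so the fluctuation is not $O\bigl(\kappa W_{\max}\sqrt{\log(1/\delta)/n}\bigr)$. A concrete failure: as $\lambda\to 0$ the estimator interpolates $f_\rho$ at the training points, so your empirical weighted norm tends to zero, while $\|g\|_q^2$ — the population bias of the minimum-norm interpolant — stays of constant order (this is exactly the paper's point that ${\sf B}_{\rm in}$ does not vanish). Hence the claimed "empirical $+\,O(n^{-1/2})$" comparison is false, and adding ${\sf B}_{\rm in}$ as a second, redundant bound on the \emph{same} empirical norm does not repair it: in your scheme the constant-order population--empirical gap is simply unaccounted for.

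The paper avoids this by never concentrating a single data-dependent function: it splits the bias into a ridgeless projection part {\tt (A)} and a regularization part {\tt (B)}, and for each compares population and empirical quantities \emph{uniformly} over a function class (rank-$k$ projections $\vU_k$, resp.\ the $\vU(\cdot)^{-1}\vU^\top$ class) via symmetrization and Rademacher-complexity bounds. The intrinsic bias ${\sf B}_{\rm in}=\Tr(\vK^{\rm lin}\vW)/n$ is precisely the empirical counterpart of the orthogonal-projection part (the tail sum $\frac1n\sum_{j>k}\lambda_j(\vK\vW)$ with $k=0$), i.e.\ it is generated by the uniform population-vs-empirical control that your argument skips, not by a crude ceiling on the empirical norm; the $\lambda^2\kappa W_{\max}$ term likewise comes from the Rademacher complexity in part {\tt (B)}, not from a ridge-norm bound. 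To salvage your route you would need uniform concentration over (say) the RKHS ball containing $g$, whose complexity on $n$ points is itself of constant order in this $n\asymp d$ regime and would reproduce a ${\sf B}_{\rm in}$-type term in place of your $\sqrt{\log(1/\delta)/n}$ fluctuation — at which point you have essentially rebuilt the paper's empirical-process argument. Two minor further notes: the factor $4$ in the paper comes from $\|(\lambda\vI+\vK\overline{\vW}/n)^{-1}(\lambda\vI+\vK^{\rm lin}\overline{\vW}/n)\|^2\le 4$, and the $\vK\rightsquigarrow\vK^{\rm lin}$ substitution inside the squared resolvent is controlled not by a trace-level cancellation but by the spectral lower bound $\gamma_p$ of $\vK^{\rm lin}$ (the implicit regularization), which prevents the $\lambda^{-2}$ blow-up you worry about.
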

{\bf Remark:} We make the following remarks:

\textit{1)} In our analysis, the first term ${\sf B}_{\rm in}$ describes the intrinsic bias of the distribution shift problem, in a constant order, which is independent of any specific re-weighting way. This coincides with results from high dimensional statistics for interpolation learning, e.g., \cite{hastie2022surprises,liang2020just}.

\textit{2)} The second term ${\sf B}_{\rm iw}$ involves the re-weighting strategy and its original ratio, which contributes to the importance re-weighting bias.
Since $\overline{\vW}$ can be chosen quite generally, it allows for a smaller $ {\sf B}_{\rm in}$ to some extent. More importantly, as $\lambda\to 0$, the re-weighting bias ${\sf B}_{\rm iw}$ will be close to zero.

Further, if we choose the re-weighting function $\overline{w}$ with the ratio $w$, then the estimation for the bias in \cref{lemma:bias} can be simplified as below,
\cref{app:biasproof}. 
\begin{corollary}[Bias: $\overline{w}=w$]
\label{bias:ratio}
    Under the same setting of \cref{lemma:bias}, choosing the re-weighting strategy $\overline{w}$ as the ratio $w$, and $\lambda=o(1)$, for sufficiently large $d$, the bias can be simplified as
    \begin{equation*}\label{eq:bratio}
     {\sf B} \lesssim  \frac{\Tr(\vK^{\rm lin} \vW)}{n} +  \lambda^2 n\mathcal{N}\left({\vK^{\rm lin} \vW},n\lambda\right)  + o(1)\,, w.h.p\,,
\end{equation*}
\end{corollary}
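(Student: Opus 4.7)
The plan is to obtain the corollary as a direct specialization of \cref{lemma:bias} by setting $\overline{\vW} = \vW$ and then collapsing the ${\sf B}_{\rm iw}$ expression using the definition of capacity and the assumption $\lambda = o(1)$.

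First, I substitute $\overline{w} = w$ into the bound of \cref{lemma:bias}, so that $\overline{\vW}(\vX) = \vW(\vX)$. The dominant term of ${\sf B}_{\rm iw}$ then becomes $4\lambda^2 n\,\Tr\bigl((\lambda n\vI + \vK^{\rm lin}\vW)^{-2}\vK^{\rm lin}\vW\bigr)$. Because $\vW$ is a positive diagonal matrix, conjugating by $\vW^{1/2}$ shows that $\vK^{\rm lin}\vW$ is similar to the symmetric positive semidefinite matrix $\vW^{1/2}\vK^{\rm lin}\vW^{1/2}$, so the trace equals $\sum_{i}\mu_i/(\lambda n+\mu_i)^{2}$, where $\{\mu_i\}$ are the (real, nonnegative) eigenvalues of $\vK^{\rm lin}\vW$. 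By \cref{def:capacity_k} this is exactly $\mathcal{N}(\vK^{\rm lin}\vW,\,n\lambda)$, and absorbing the constant $4$ into the $\lesssim$ symbol yields $\lambda^{2} n\,\mathcal{N}(\vK^{\rm lin}\vW,\,n\lambda)$.

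Next, I argue that the remaining three summands of ${\sf B}_{\rm iw}$ are $o(1)$ under the corollary's hypotheses. The summand $\lambda^{2}\kappa W_{\max}$ vanishes because $\lambda = o(1)$ while $\kappa,W_{\max}$ are absolute constants. The summand $6\kappa W_{\max}\sqrt{\log(1/\delta)/(2n)}$ vanishes since $n\to\infty$ as $d\to\infty$ (using $n/d\to\zeta\in(0,\infty)$). The residual $\widetilde{C}\,d^{-\theta_p}(\delta^{-1/2}+\log^{(1+\epsilon)/2}d)\,W_{\max}$ vanishes since $\theta_{p} = \tfrac{1}{2} - \tfrac{2}{8+m_{p}} > 0$ (by \cref{def:distribution}) and the logarithmic factor is subpolynomial in $d$. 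Together these three contributions can be bundled into the $o(1)$ term of the stated bound.

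Finally, combining with the intrinsic part ${\sf B}_{\rm in} = \Tr(\vK^{\rm lin}\vW)/n$ from \cref{lemma:bias} gives the claimed inequality, and the high-probability statement follows directly from the probability bound $1-4\delta$ already established there (taking $\delta$ any vanishing sequence allowed by the $\log(1/\delta)$ and $\delta^{-1/2}$ factors so that the tail terms remain $o(1)$). There is no substantive obstacle; the only care needed is the similarity step that rewrites $\Tr((\lambda n\vI+\vK^{\rm lin}\vW)^{-2}\vK^{\rm lin}\vW)$ as a capacity, and the bookkeeping that verifies each residual term is indeed $o(1)$ under $\lambda=o(1)$ and $n,d\to\infty$.
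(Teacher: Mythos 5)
Your proposal is correct and follows essentially the same route as the paper: the corollary is obtained by direct specialization of \cref{lemma:bias} with $\overline{\vW}=\vW$, identifying $4\lambda^{2}n\,\Tr\bigl((\lambda n\vI+\vK^{\rm lin}\vW)^{-2}\vK^{\rm lin}\vW\bigr)$ with $\lambda^{2}n\,\mathcal{N}(\vK^{\rm lin}\vW,n\lambda)$ up to a constant, and absorbing the remaining summands, which vanish under $\lambda=o(1)$, $\theta_p>0$ and $n\asymp d\to\infty$, into the $o(1)$ term. Your extra observation that $\vK^{\rm lin}\vW$ is similar to the symmetric PSD matrix $\vW^{1/2}\vK^{\rm lin}\vW^{1/2}$, so the trace is indeed a capacity in the sense of \cref{def:capacity_k}, is a small point the paper leaves implicit but does not change the argument.
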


We can see that the bias term is controlled by the spectral decay of the re-weighting kernel matrix $\vK^{\rm lin}\vW$ via the importance ratio.

{\bf Discussion on excess risk:}
Combining Eq.~\eqref{eq:dominvariance} and \cref{lemma:bias}, taking $\lambda = o(1)$, the summation of the bias and variance (i.e., the excess risk) admits
\begin{equation*}
\begin{split}
        &{\sf B + V} \approx {\sf B}_{\rm in} + {\sf V}_{\vx} \\
        \lesssim& \frac{\Tr(\vK^{\rm lin} \vW)}{n} +  \frac{1}{d} \mathcal{N}\left(\frac{\vX\vX^\top}{d}+\frac{\lambda n}{\beta_p}{ \overline{\vW}({\vX})}^{-1}; \frac{\gamma_p}{\beta_p}\right) \,.
\end{split}
\end{equation*}
There exists a trade-off between the intrinsic bias ${\sf B}_{\rm in}$ and the dominated term ${\sf V}_{\vx}$ in variance: a suitable $\overline{\vW}$ that can be chosen generally, allows for a decreasing variance but the intrinsic bias ${\sf B}_{\rm in}$ will not decrease due to the covariate shift problem itself, determined by ${w}(\vx) = \rd q(\vx) / \rd p(\vx)$.
Nevertheless, at least, under re-weighting, the variance and re-weighting bias can be decreased; the estimator can still generalize well, and the convergence rate is unchanged.

\subsubsection{Bias under well-chosen regularization}
\label{sec:biasc}
We follow the classical analysis for kernel methods (which does not require the high dimension condition) to derive the estimation for bias. This analysis cannot deal with the situation where $\lambda\to 0$. 

We start by defining the data-free limit of \cref{eq:iw_emp_risk_X}. 
Denote the data-free limit of $\overline{f}_{\lambda,\vX}$ by $\overline{f}_{\lambda}$, 
\begin{align*}
\overline{f}_{\lambda}&:=\argmin_{f \in \mathcal{H}}\left\{\left\|f-f_\rho\right\|_{\overline{q}}^{2}+\lambda \|f\|_{\mathcal{H}}^{2}\right\}\,,
\end{align*}
then the solution $\overline{f}_\lambda$ in the data-free limit can be written as
\begin{align*}
\overline{f}_{\lambda}&=\left(L_{\overline{q}}+\lambda I \right)^{-1}L_{\overline{q}} f_{\rho}\,.
\end{align*}
Accordingly, the bias can be decomposed into 
\begin{align*}
    {\sf B} \leq \|\overline{f}_{\lambda,\vX} - \overline{f}_\lambda\|_q + \|\overline{f}_{\lambda} - f_\rho\|_q := {\sf B}_{\rm data} + {\sf B}_{\lambda}\,, 
\end{align*}
where ${\sf B}_{\rm data}$ denotes the data-dependent bias from $\vX\sim p$, and ${\sf B}_{\lambda}$ denotes the (data-free) regularization bias by $\lambda>0$. 

We present the estimation for the bias under a (not small) regularization parameter to balance ${\sf B}_{\rm data}$ and ${\sf B}_{\lambda}$, see the proof in \cref{app:biasproof}. The assumptions here are weaker than those in \cref{lemma:bias}, exemplified by the assumptions on the source condition (\cref{assumption:sourcecon}) and the upper bound of the density ratio (\cref{assumption:IW_assumption}). 
\begin{theorem}[Bias]\label{thm:bias}
Under \cref{assumption:h,assumption:8+m,assumption:sourcecon,assumption:effective_dimension,assumption:IW_assumption} with $\overline{r}\in[\frac{1}{2},1)$, $E_{q},E_{\overline{q}}>0$, $s_{q},s_{\overline{q}}\in [0,1]$, $t_{w},t_{\overline{w}}\in [0,1]$, $W_{w}(d),W_{\overline{w}}(d),\sigma_w(d),\sigma_{\overline{w}}(d)\geq 0$, $c_{w,1},c_{w,2},c_{\overline{w},1},c_{\overline{w},2}\geq 0$, and $c_{\mathcal{H}}\geq 0$.  
When $n,d\to\infty,n/d\to\zeta$, for any $\delta\in(0,1)$, let $\overline{A}= t_{\overline{w}}+(1-t_{\overline{w}})s_{\overline{q}}$ and the following two scalars $c_\lambda, C_{\lambda}$, 
    \begin{align*}
        c_\lambda  := \frac{1-4c_{\overline{w},2}}{2\overline{r}+ \overline{A}} \,,
    \end{align*}
    and
    \begin{align*}
    C_\lambda^{(1+\overline{A})s_{\overline{q}}} \geq 64(W_{\overline{w}}+\sigma_{\overline{w}}^2) E_{\overline{q}}^{2(1-t_{\overline{w}})} (2/\zeta)^{2c_{\overline{w},2}}\log^2(6/\delta)\,. 
    \end{align*}
    Choosing $\lambda :=\cdot C_\lambda n^{-c_\lambda}$, then with probability at least $1-\delta$, for sufficiently large $d$, when $c_{\mathcal{H}}<\overline{r} c_\lambda$, it holds that 
    \begin{align*}
        {\sf B}\lesssim n^{-\overline{r}c_\lambda+c_{\mathcal{H}}} \|L_q(L_{\overline{q}}+\lambda )^{-1}\|^{1/2}\,.  
    \end{align*}
For general $\lambda$, we have, with $\lesssim$ here hiding the dependence on $n$, 
\begin{align*}
    {\sf B}\lesssim (\lambda^{\overline{r}}+\lambda^{-\frac{1}{2}}) \|L_q(L_{\overline{q}}+\lambda)^{-1}\|^{\frac{1}{2}}\,. 
\end{align*} 
\end{theorem}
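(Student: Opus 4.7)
The plan is to use the additive decomposition ${\sf B}\leq {\sf B}_{\rm data}+{\sf B}_\lambda$ stated just before the theorem and to reduce both pieces to the common operator factor $\|L_q^{1/2}(L_{\overline{q}}+\lambda)^{-1/2}\|^2=\|L_q(L_{\overline{q}}+\lambda)^{-1}\|$, leaving only a $\lambda$-dependent scalar that the choice $\lambda=C_\lambda n^{-c_\lambda}$ then converts to a rate in $n$.

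For the regularisation bias I start from $\overline{f}_\lambda-f_\rho=-\lambda(L_{\overline{q}}+\lambda)^{-1}f_\rho$, substitute the source condition $f_\rho=L_{\overline{q}}^{\overline{r}}\overline{g}_\rho$, pass from $\|\cdot\|_q$ to $\|\cdot\|_{\mathcal{H}}$ via the isometry $\|h\|_q=\|L_q^{1/2}h\|_{\mathcal{H}}$, and insert $(L_{\overline{q}}+\lambda)^{\pm 1/2}$ to obtain
\begin{equation*}
    {\sf B}_\lambda\leq \|L_q^{1/2}(L_{\overline{q}}+\lambda)^{-1/2}\|\cdot\|\lambda(L_{\overline{q}}+\lambda)^{-1/2}L_{\overline{q}}^{\overline{r}}\|\cdot\|\overline{g}_\rho\|.
\end{equation*}
Elementary spectral calculus bounds the middle factor by $\lambda^{\overline{r}}$ (since $\overline{r}\in[1/2,1)$), and \cref{assumption:sourcecon} gives $\|\overline{g}_\rho\|\lesssim n^{c_{\mathcal{H}}}$ via $d\asymp n$; this produces the $\lambda^{\overline{r}}$ piece of the general bound.

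For ${\sf B}_{\rm data}$ the starting point is the algebraic identity
\begin{equation*}
    \overline{f}_{\lambda,\vX}-\overline{f}_\lambda=(L_{\overline{q},\vX}+\lambda)^{-1}(L_{\overline{q},\vX}-L_{\overline{q}})(f_\rho-\overline{f}_\lambda),
\end{equation*}
obtained by combining $(L_{\overline{q},\vX}+\lambda)\overline{f}_{\lambda,\vX}=L_{\overline{q},\vX}f_\rho$ with $\lambda\overline{f}_\lambda=L_{\overline{q}}(f_\rho-\overline{f}_\lambda)$. Multiplying by $L_q^{1/2}$ and inserting $(L_{\overline{q}}+\lambda)^{\pm 1/2}$ at each hinge produces a product of four operator factors: a shift factor $\|L_q^{1/2}(L_{\overline{q}}+\lambda)^{-1/2}\|$, a perturbation factor $\|(L_{\overline{q}}+\lambda)^{1/2}(L_{\overline{q},\vX}+\lambda)^{-1}(L_{\overline{q}}+\lambda)^{1/2}\|$, a concentration factor $\|(L_{\overline{q}}+\lambda)^{-1/2}(L_{\overline{q},\vX}-L_{\overline{q}})(L_{\overline{q}}+\lambda)^{-1/2}\|$, and a source factor $\|(L_{\overline{q}}+\lambda)^{1/2}(f_\rho-\overline{f}_\lambda)\|=\lambda\|(L_{\overline{q}}+\lambda)^{-1/2}L_{\overline{q}}^{\overline{r}}\overline{g}_\rho\|\leq\kappa^{\overline{r}-1/2}\lambda\|\overline{g}_\rho\|$. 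The concentration factor is controlled by an operator Bernstein inequality (cf.\ \citet{caponnetto2007optimal,gogolashvili2023importance}) applied to $(L_{\overline{q}}+\lambda)^{-1/2}\,\overline{w}(\vx)K(\cdot,\vx)\otimes K(\cdot,\vx)\,(L_{\overline{q}}+\lambda)^{-1/2}$; the moment bound \cref{assumption:IW_assumption} together with $\mathcal{N}_{\overline{q}}(\lambda)\leq E_{\overline{q}}^2\lambda^{-s_{\overline{q}}}$ yields, with probability $1-\delta$, a rate of order $W_{\overline{w}}(d)\log(6/\delta)/(n\lambda^{t_{\overline{w}}})+\sqrt{\sigma_{\overline{w}}(d)^2 E_{\overline{q}}^{2(1-t_{\overline{w}})}\log(6/\delta)/(n\lambda^{\overline{A}})}$. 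The lower bound on $C_\lambda$ is calibrated so that at $\lambda=C_\lambda n^{-c_\lambda}$ this rate is $\leq 1/2$ (using $d\asymp n$, $W_{\overline{w}}(d)\lesssim n^{c_{\overline{w},1}}$, $\sigma_{\overline{w}}(d)\lesssim n^{c_{\overline{w},2}}$), whence the standard perturbation inequality gives $\|(L_{\overline{q}}+\lambda)^{1/2}(L_{\overline{q},\vX}+\lambda)^{-1/2}\|\leq\sqrt{2}$ and hence the perturbation factor is $\leq 2$. Multiplying the four pieces yields ${\sf B}_{\rm data}\lesssim n^{c_{\mathcal{H}}}\lambda\|L_q(L_{\overline{q}}+\lambda)^{-1}\|^{1/2}\leq n^{c_{\mathcal{H}}}\lambda^{\overline{r}}\|L_q(L_{\overline{q}}+\lambda)^{-1}\|^{1/2}$ for $\overline{r}\leq 1$, and substituting $\lambda$ produces the claimed rate $n^{-\overline{r} c_\lambda+c_{\mathcal{H}}}$; the side condition $c_{\mathcal{H}}<\overline{r} c_\lambda$ only serves to make this prefactor vanishing. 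The general-$\lambda$ bound follows by the same scheme with the crude $\|(L_{\overline{q},\vX}+\lambda)^{-1/2}\|\leq\lambda^{-1/2}$ used in place of the sharper Bernstein control, producing the $\lambda^{-1/2}$ term with the Bernstein rate in $n$ absorbed into the hidden constant.

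The main obstacle is the bookkeeping around the exponent $c_\lambda=(1-4c_{\overline{w},2})/(2\overline{r}+\overline{A})$: one must verify that both Bernstein summands simultaneously stay below $1/2$ under $\lambda=C_\lambda n^{-c_\lambda}$, and this is exactly what the side conditions $c_{\overline{w},1}\leq 2c_{\overline{w},2}$ and $c_{\overline{w},2}\leq 1/4$ from \cref{assumption:IW_assumption} guarantee, while also pinning down the specific form of the exponent via $2\overline{r}+\overline{A}$ in the denominator.
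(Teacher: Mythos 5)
Your proposal takes essentially the same route as the paper: the identical split ${\sf B}\le {\sf B}_{\rm data}+{\sf B}_{\lambda}$, the bound ${\sf B}_{\lambda}\lesssim \lambda^{\overline{r}}\|L_q(L_{\overline{q}}+\lambda)^{-1}\|^{1/2}\|\overline{g}_\rho\|_q$, a concentration bound on ${\sf B}_{\rm data}$ with the $C_\lambda$ condition playing exactly the role of the paper's sufficiency condition on $n\lambda^{1+t_{\overline{w}}}$, and the same exponent balancing yielding $n^{-\overline{r}c_\lambda+c_{\mathcal{H}}}$ and the $(\lambda^{\overline{r}}+\lambda^{-1/2})$ bound for general $\lambda$. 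The only difference is that the paper invokes Lemma~16 and Theorem~20 of \citet{gogolashvili2023importance} as black boxes, whereas you re-derive those ingredients via the standard operator identity and an operator Bernstein inequality; this is an expansion of the citations rather than a different argument, and it is correct at the same level of rigor as the paper's proof.
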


{\bf Remark:} As shown in the proof, when $\lambda\to 0$, the upper bound in \cref{thm:bias} will diverge ${O}(\lambda^{-1/2})$; and when $\lambda\to\infty$, the upper bound in \cref{lemma:bias} will diverge $O(\lambda^2)$.  Therefore, we can combine \cref{thm:bias,lemma:bias}: 
\begin{equation*}
     {\sf B}\lesssim  \min\{ (\lambda^{\overline{r}}+\lambda^{-\frac{1}{2}}) \|L_q(L_{\overline{q}}+\lambda)^{-1}\|^{\frac{1}{2}}, {\sf B}_{\rm in} + {\sf B}_{\rm iw} \}\,.
\end{equation*}
That means, under the assumption of \cref{lemma:bias}, if the regularization parameter decays to 0 with a certain power of $n$, then \cref{thm:bias} provides a good estimation, where the bias converges to zero. 
If $\lambda$ decays much faster and is sufficiently close to 0, we adopt \cref{lemma:bias}, which provide a uniform upper bound.

\section{Conclusion}
\label{sec:conclusion}

In this work, we provide a refined analysis on high dimensional kernel ridge regression under covariate shifts. Our results provide a non-asymptotic expansion of inner-product and radial kernels in high dimensions under covariate shifts. Our results on variance show that, the variance can be well controlled by the capacity of the data-dependent regularized kernel. Our results on bias give a thorough analysis, demonstrating that the intrinsic bias cannot be decreased but the re-weighting bias can tend to zero if the regularization term is sufficiently small. One limitation of this work is that our results only provide the upper bounds as well as empirical validation in \cref{sec:exp} but no exact formulation of the bias and variance.
This is because RMT cannot be directly applied to our setting when involving the IW strategy.
Nevertheless, our estimation still provides interesting findings to understand the role of re-weighting in terms of bias-variance trade-off.

\section*{Acknowledgements}
This work was carried out when YC was an intern in the EPFL LIONS group. This work was supported by Hasler Foundation Program: Hasler Responsible AI (project number 21043), the Army Research Office and was accomplished under Grant Number W911NF-24-1-0048, and Swiss National Science Foundation (SNSF) under grant number 200021\_205011. 
TS was partially supported by JSPS KAKENHI (24K02905) and JST CREST (JPMJCR2115, JPMJCR2015).
Corresponding author: Fanghui Liu.

\section*{Impact statement}
In this work, we study the role of re-weighting strategy in a high-capacity model, i.e., kernel ridge regression in high dimensions. 
Since this work is theoretical, there is no potential implications in security or trustworthy machine learning.

\bibliographystyle{icml2024}
\bibliography{references}
\clearpage
\onecolumn
\appendix

\section{Proofs}

\subsection{Bias-variance decomposition}
\label{app:biasvariance}
\begin{proof}[Proof of Lemma~\ref{lemma:decomposition}]
	Recall the closed-form solution of our IW estimator $\overline{f}_{\lambda,\vZ}(\vx)$ and its noiseless version $\overline{f}_{\lambda,\vX}(\vx)$, we have
	\begin{align*}
		\overline{f}_{\lambda,\vZ}(\vx) &=  \vK(\vx,\vX)(\vK(\vX,\vX)+\lambda n{ \overline{\vW}({\vX})}^{-1})^{-1}\vy\,.\\
  		\overline{f}_{\lambda,\vX}(\vx) &=  \vK(\vx,\vX)(\vK(\vX,\vX)+\lambda n{ \overline{\vW}({\vX})}^{-1})^{-1}f_\rho(\vX)\,. 
	\end{align*}
	Define $\veps := \vy - \E[\vy|\vX] = \vy - f_\rho(\vX)$, due to $\E_{y|\vX}(\veps) = 0$, we have 
	\begin{align*}
		\E_{\vy|\vX} (\overline{f}_{\lambda,\vZ}(\vx) - f_\rho)^2 &= \E_{\vy|\vX} \left(K(\vx,\vX) (\vK(\vX,\vX)+\lambda n\mI)^{-1} \veps \right)^2 + (\overline{f}_{\lambda,\vX}(\vx) - f_\rho(\vx))^2\,.
	\end{align*}
	Using Fubini's theorem, we have
	\begin{align*}
	 	&\E_{\vy|\vX} \| \overline{f}_{\lambda,\vZ} - f_\rho\|^2_q  =  \int \E_{\vy|\vX} (\overline{f}_{\lambda,\vZ}(\vx) - f_\rho)^2 \rd q(\vx)  =\E_{\vy|\vX} \|\overline{f}_{\lambda,\vZ}-\overline{f}_{\lambda,\vX}\|_{q}^2 + \|\overline{f}_{\lambda,\vX} - f_\rho\|_{q}^2\,,
	\end{align*}
which implies 
 \begin{align*}
     	 	&\E_{\vy|\vX} \| \overline{f}_{\lambda,\vZ} - f_\rho\|^2_q  =\E_{\vy|\vX} \|\overline{f}_{\lambda,\vZ}-\overline{f}_{\lambda,\vX}\|_{q}^2 + \|\overline{f}_{\lambda,\vX} - f_\rho\|_{q}^2\,. 
 \end{align*}
\end{proof}

\subsection{Approximation}
\label{app:approx}
\subsubsection{Inner-product kernel}
\label{app:approx-inner}
\begin{lemma}
\label{lemma:1byn}
Under \cref{assumption:8+m,assumption:bounded_shift}, and $\theta_p,\theta_q, c_{pq}$'s definitions, we have with probability at least $1 - d^{-2}$ with respect to the draw of $\vX\sim p$, for $\epsilon>0$ and $d$ large enough,
\begin{align*}
\E_q \| \vK(\vx, \vX) - \vK^{\rm lin} (\vx, \vX) \|^2
& \leq d^{-(4\theta_q-1-2c_{pq})} \log^{4(1+\epsilon)} d\,.
\end{align*}
\end{lemma}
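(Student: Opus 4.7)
The plan is to Taylor-expand the scalar nonlinearity $h$ to second order about the origin, reduce the kernel residual to a weighted sum of fourth powers of inner products, and then separately control these powers by (i) a moment bound under $\vx\sim q$ and (ii) a Hanson--Wright concentration under $\vX\sim p$. Concretely, for each $i\in[n]$ set $s_i:=\langle\vx_i,\vx\rangle/d$. Taylor's theorem gives
\begin{align*}
K(\vx_i,\vx)-h(0)-h'(0)\,s_i \;=\; \tfrac{1}{2}h''(\xi_i)\,s_i^2, \qquad \xi_i\in(0,s_i).
\end{align*}
On the high-probability event that every $s_i$ lies in the smooth neighborhood of $0$ where \cref{assumption:h} applies (secured by the sub-Gaussian and coordinate-bound properties of \cref{def:distribution}), $|h''(\xi_i)|\leq M_h$. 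Identifying $\beta_{pq}=h'(0)$ and absorbing the constant contribution $h(0)\mathbbm{1}$ into $\vT_{pq}$ consistently with \cref{lemma:KXx_approx},
\begin{align*}
\|\vK(\vX,\vx)-\vK^{\rm lin}(\vX,\vx)\|^2 \;\leq\; \frac{M_h^2}{4}\sum_{i=1}^n s_i^4 \;=\; \frac{M_h^2}{4 d^4}\sum_{i=1}^n (\vx_i^\top\vx)^4.
\end{align*}

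Next, take $\E_q$ for fixed $\vX$. Writing $\vx=\vSigma_q^{1/2}\vz$ with i.i.d.\ coordinates from $\widetilde q$ (sub-Gaussian, unit variance, $|\vz(k)|\lesssim d^{1/2-\theta_q}$), the linear form $\vx_i^\top\vx$ has fourth moment controllable by truncation of $\vz$ at the coordinate bound combined with Hanson--Wright; the truncation yields a $\log^{4(1+\epsilon)}d$ penalty. Schematically,
\begin{align*}
\E_q(\vx_i^\top\vx)^4 \;\lesssim\; (\vx_i^\top\vSigma_q\vx_i)^2\cdot\log^{4(1+\epsilon)}d.
\end{align*}

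Finally, obtain a high-probability bound on $\sum_i(\vx_i^\top\vSigma_q\vx_i)^2$ over $\vX\sim p$: writing $\vx_i=\vSigma_p^{1/2}\vz_i$ and applying Hanson--Wright to the quadratic form $\vz_i^\top\vSigma_p^{1/2}\vSigma_q\vSigma_p^{1/2}\vz_i$, each term concentrates around $\Tr(\vSigma_p\vSigma_q)\leq \|\vSigma_p\|^2\Tr(\vSigma_{pq})\lesssim d^{1+c_{pq}}$ by \cref{assumption:bounded_shift}. A union bound over $i\in[n]$ followed by Markov's inequality at level $d^{-2}$ yields $\sum_i(\vx_i^\top\vSigma_q\vx_i)^2\lesssim n\cdot d^{2(1+c_{pq})}\asymp d^{3+2c_{pq}}$ with probability at least $1-d^{-2}$. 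Combining all three estimates and dividing by $d^4$,
\begin{align*}
\E_q\|\vK(\vX,\vx)-\vK^{\rm lin}(\vX,\vx)\|^2 \;\lesssim\; d^{-(4\theta_q-1-2c_{pq})}\log^{4(1+\epsilon)}d,
\end{align*}
which is precisely the stated bound.

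The main obstacle is the second step: one must propagate the bounded-coordinate exponent $\theta_q$ (from $m_q$ in \cref{def:distribution}) and the covariance-shift exponent $c_{pq}$ (from \cref{assumption:bounded_shift}) through a fourth-moment calculation \emph{simultaneously}. Because $\widetilde q$ is only guaranteed sub-Gaussian with polynomially bounded coordinates, the standard Hanson--Wright estimate must be adapted via a truncation at scale $d^{1/2-\theta_q}$, producing the $\log^{4(1+\epsilon)}d$ factor, and constants must be tracked so that no spurious $d$-powers enter the final exponent. A secondary point, easily handled by union bounds, is verifying that the Taylor-smoothness regime ($|s_i|$ small) holds uniformly in $i\in[n]$ on the same $1-d^{-2}$ event, which follows from the coordinate concentration of $\vx$ and $\vx_i$ already used above.
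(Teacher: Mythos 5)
Your proposal is correct in substance and in fact yields a bound at least as strong as the stated one, but it handles the expectation over $\vx\sim q$ by a genuinely different route than the paper. Both arguments share the same skeleton: a second-order Taylor expansion of $h$ at $0$ reduces the residual to the quantities $s_i=\vx_i^\top\vx/d$, and the $\vX$-dependent scale is controlled through $\vx_i^\top\vSigma_q\vx_i$, whose mean $\Tr(\vSigma_p\vSigma_q)\leq\|\vSigma_p\|^2\Tr(\vSigma_{pq})\lesssim d^{1+c_{pq}}$ is where \cref{assumption:bounded_shift} enters (the paper gets this via \cref{lemma:quad-concentration} applied to $\max_i\|\vSigma_q^{1/2}\vx_i\|^2/d$). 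The difference is in the $q$-average: the paper fixes $\vX$, applies Bernstein's inequality to $\max_i|s_i|$ — this is where the coordinate bound $d^{1/2-\theta_q}$ of \cref{def:distribution}, and hence $\theta_q$, appears — and then converts the resulting tail bound into $\E_q\|\cdot\|^2$ by integrating over the tail parameter $t$; you instead bound $\E_q(\vx_i^\top\vx)^4$ termwise using sub-Gaussianity of the linear form $(\vSigma_q^{1/2}\vx_i)^\top\vz$ (Hanson--Wright and truncation are not actually needed for a linear form, and no $\log$ factor is necessary) and then sum over $i$. Your route avoids the maximum over $i$ and the change-of-variables integration, and gives the exponent $1-2c_{pq}$, which dominates the stated $4\theta_q-1-2c_{pq}$ since $\theta_q\leq\tfrac12$; the paper's route is what produces the explicit $\theta_q$-dependence in the statement and runs in parallel with the radial-kernel case. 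Two wrinkles you should still tidy up: (i) the bound $|h''(\xi_i)|\leq M_h$ from \cref{assumption:h} only holds while $s_i$ stays in the smoothness neighborhood of $0$, so after restricting to that ($\vx$-dependent) event you cannot simply pass to the unconditional $\E_q\sum_i s_i^4$; one extra estimate is needed showing the complementary event contributes negligibly (sub-Gaussian tails of $s_i$ against the growth of $h$) — the paper's tail integration over all $t>0$ glosses over the same point; (ii) Table~\ref{tab:param} lists $\vT_{pq}=\bm{0}$ for inner-product kernels, so strictly $h(0)$ is not part of $\vK^{\rm lin}(\vX,\vx)$; your absorption of $h(0)\mathbbm{1}$ into the linearization matches what the paper's own proof implicitly does, so you are consistent with the proof even if not with the table.
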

\begin{proof}[Proof of \cref{lemma:1byn}]
The proof framework follows \citep[Lemma B.2]{liang2020just} but we need to provide a precise analysis to handle the covariance shift for $\vx \sim q$.
	Conditioned on $\vx_i, 1\leq i\leq n$, by Bernstein's inequality~\citep{boucheronconcentration}, with probability at least $1-\exp(-t)$ on $\vx\sim q$, for all $i \in [n]$, we have
	\begin{align*}
		\left| \frac{\bx^\top \vx_i}{d} \right| & = \left| \frac{\langle \vSigma_q^{1/2} \vx_i, \vSigma_q^{-1/2} \vx \rangle}{d} \right| \\
		&\leq \sqrt{\frac{2\| \vSigma_q^{1/2} \vx_i \|^2}{d}} \frac{\sqrt{t} + \log^{\frac{1+\epsilon}{2}} d}{\sqrt{d}} + \frac{1}{3} \frac{\| \vSigma_q^{1/2} \vx_i \|_\infty d^{\frac{2}{8+m_q}} (t+\log^{1+\epsilon} d)}{d}\\
		& \leq   \sqrt{\frac{2\|\vSigma_q^{1/2} \vx_i \|^2}{d}} \frac{\sqrt{t} + \log^{\frac{1+\epsilon}{2}} d}{\sqrt{d}} + \frac{1}{3} \frac{\| \vSigma_q^{1/2} \vx_i \| d^{\frac{2}{8+m_q}} (t+\log^{1+\epsilon} d)}{d}\\
		& = \frac{\sqrt{2} \| \vSigma_q^{1/2} \vx_i \|}{\sqrt{d}}\frac{\sqrt{t} + \log^{\frac{1+\epsilon}{2}} d}{\sqrt{d}} + \frac{1}{3} \frac{\| \vSigma_q^{1/2} \vx_i \|}{\sqrt{d}} d^{\frac{2}{8+m_q} - \frac{1}{2}} (t+\log^{1+\epsilon} d) \\
		&= \frac{\| \vSigma_q^{1/2} \vx_i \|}{\sqrt{d}} \left( \sqrt{2} d^{-1/2} (\sqrt{t} + \log^{\frac{1+\epsilon}{2}} d) + \frac{1}{3} d^{-\theta_q} (t+\log^{1+\epsilon} d) \right)\,,
	\end{align*}
where the first inequality uses \cref{assumption:8+m} such that 
\begin{align*}
    \max_k \left|[\vSigma_q^{1/2} \vx_i](k) \cdot [\vSigma_q^{-1/2} \vx](k) \right| \leq \| \vSigma_q^{1/2} \vx_i \|_\infty d^{\frac{2}{8+m_q}}.
\end{align*}
Applying \cref{lemma:quad-concentration} with \cref{assumption:8+m,assumption:bounded_shift}, 
we have
for all $j$, with probability at least $1-d^{-2}$ on $\mathcal{X}$
	\begin{align*}
		\max_i \frac{\|\vSigma_q^{1/2} \vx_i \|^2}{d} \leq \| \vSigma_q\| \max_i \frac{\|\vSigma_q^{-1/2} \vx_i \|^2}{d} =  \| \vSigma_q\| \max_i \frac{\|\vSigma_q^{-1/2}\vSigma_p^{1/2} \vSigma_p^{-1/2} \vx_i \|^2}{d} \lesssim \frac{\Tr(\vSigma_{pq})}{d} + d^{-\theta_p} \log^{\frac{1+\epsilon}{2}} d\,.
	\end{align*}

    We use the entry-wise Taylor expansion for the smooth kernel, let $\vx_i^\prime = c\vx + (1-c)\vx_i$ for some $c\in[0,1]$,  
\begin{align*}
    K(\vx, \vx_i) - K^{\rm lin} (\vx, \vx_i) &= \frac{h''(\vx_i^\prime)}{2}  \left(\frac{\vx^\top \vx_i}{d} \right)^2 \lesssim  \left(\frac{\vx^\top \vx_i}{d} \right)^2 \,.
\end{align*}
Therefore, with probability at least $1-\exp(-t)$ with respect to $\vx \sim q$, conditionally on $\vx_i\sim p, 1\leq i\leq n$, for sufficiently large $d$, 
\begin{equation}\label{eq:est_A1}
\begin{split}
    \|\vK(\vx, \vX) - \vK^{\rm lin} (\vx, \vX)\|
    &\lesssim \sqrt{d} \max_i    \left(\frac{\vx^\top \vx_i}{d} \right)^2\\
    &\lesssim \sqrt{d} \max_i \frac{\| \vSigma_q^{1/2} \vx_i \|^2}{d} \left( d^{-1} (t + \log^{1+\epsilon} d) + d^{-2\theta_q} (t^2+\log^{2(1+\epsilon)} d) \right)
    \\
    &\lesssim \sqrt{d} \max_i\frac{\| \vSigma_q^{1/2} \vx_i \|^2}{d} \left(d^{-2\theta_q} (t^2+\log^{2(1+\epsilon)} d) \right)\qquad\text{[since $\theta_q\leq\frac{1}{2}$]}\\
    &\lesssim d^{-2\theta_q+1/2} (t^2+\log^{2(1+\epsilon)} d)\left(\frac{\Tr(\vSigma_{pq})}{d} + d^{-\theta_p} \log^{\frac{1+\epsilon}{2}} d\right)\\
    &\lesssim d^{-2\theta_q+1/2+c_{pq}} (t^2+\log^{2(1+\epsilon)} d)
\end{split}
	\end{equation}
Define $z(t) := C \cdot  d^{-2\theta_q+1/2+c_{pq}} (t^2+\log^{2(1+\epsilon)} d)$, the above states that conditioned on $\vX$
\begin{align*}
    \mathbb{P}\left(\| \vK(\vx, \vX) - \vK^{\rm lin} (\vx, \vX) \|\geq z(t)\right) \leq 2\exp(-t),\quad \forall t>0\,. 
\end{align*}
Therefore, by the change of variables, we have
\begin{equation*}
\begin{split}
    		\E_{\vx \sim q} \| \vK(\vx, \vX) - \vK^{\rm lin} (\vx, \vX) \|^2 &=  \int_{\mathbb{R}_+} 2 z \cdot \mathbb{P}(\| \vK(\vx, \vX) - \vK^{\rm lin} (\vx, \vX) \| \geq z) \rd z \\
		&\leq C \int_{\mathbb{R}_+} d^{-4\theta_q+1+2c_{pq}} (t^2+\log^{2(1+\epsilon)} d)  \exp(-t) 2t \rd t   \\
		&\leq C \int_{\mathbb{R}_+} d^{-4\theta_q+1+2c_{pq}} t^3 \log^{2(1+\epsilon)} d   \exp(-t) \rd t \\
		& \lesssim d^{-(4\theta_q-1-2c_{pq})} \log^{4(1+\epsilon)} d\,, 
\end{split}
	\end{equation*}
	with probability at least $1-d^{-2}$ on $\vX$, for sufficiently large $d$. Here the constant is superseded by an additional  $\log^{2(1+\epsilon)} d$. Therefore, as long as \cref{assumption:bounded_shift} is satisfied, we have $4\theta_q-1-2c_{pq}>0$, and the residual term above will converge to 0 as $d\to \infty$. 
\end{proof}

\subsubsection{Radial kernel}
\begin{lemma}\label{lemma:prop-quad-concentration}
Let $\{\vx_i\}_{i=1}^n$ be i.i.d. random vectors in $\mathbb{R}^d$, whose entries are i.i.d., mean $0$, variance $1$ and $|x_i(k)| \leq C \cdot d^{\frac{2}{8+m}}$.  
	For any positive semi-definite matrices $\vSigma$
whose operator norms are uniformly bounded in $d$, and $n/d$ is asymptotically bounded, with $\theta = \frac{1}{2} - \frac{2}{8+m}$, with probability at least $1 - d^{-2}$, for $\epsilon>0$, we have
    	\begin{align*}
		\max_{i\neq j} \left| \frac{(\vx_i-\vx_j)^\top \vSigma (\vx_i-\vx_j)}{d} - 2 \frac{\Tr(\vSigma)}{d} \right| \leq 4 d^{-\theta}  \log^{\frac{1+\epsilon}{2}} d\,,
	\end{align*}
	for $d$ large enough.
\end{lemma}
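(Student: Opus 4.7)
The plan is to reduce the quadratic form to linear and bilinear concentration via a polarization identity, apply the same Bernstein-type tail bound used in the proof of \cref{lemma:1byn}, and finish with a union bound over the $\binom{n}{2}$ off-diagonal pairs. Concretely, I write
\[
(\vx_i - \vx_j)^\top \vSigma (\vx_i - \vx_j) - 2\Tr(\vSigma)
= \bigl(\vx_i^\top \vSigma \vx_i - \Tr(\vSigma)\bigr) + \bigl(\vx_j^\top \vSigma \vx_j - \Tr(\vSigma)\bigr) - 2\,\vx_i^\top \vSigma \vx_j,
\]
so it suffices to control each of the three centered pieces and divide by $d$.

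For the cross term, I condition on $\vx_j$: then $\vx_i^\top \vSigma \vx_j = \sum_k x_i(k)\,[\vSigma \vx_j]_k$ is a sum of independent mean-zero random variables with conditional variance $\|\vSigma \vx_j\|_2^2$ and per-entry magnitude $\leq C d^{2/(8+m)}\|\vSigma \vx_j\|_\infty$. Using $\|\vSigma\|_{\op} = O(1)$, a concentration step (itself a Bernstein bound on the quadratic form $\vx_j^\top \vSigma^2 \vx_j$) gives $\|\vSigma\vx_j\|_2^2 \lesssim d$ with high probability. Bernstein in the form already used in the proof of \cref{lemma:1byn}, applied conditionally on $\vx_j$, then produces a bound of order $d^{-1/2}\log^{(1+\epsilon)/2}d + d^{-1}\,d^{2/(8+m)}\,\|\vSigma \vx_j\|_\infty\,\log^{1+\epsilon}d$ on $|\vx_i^\top \vSigma \vx_j|/d$.

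For each diagonal term I write $\vx_\alpha^\top \vSigma \vx_\alpha - \Tr(\vSigma) = \sum_k (x_\alpha(k)^2 - 1)\Sigma_{kk} + 2\sum_{k<l} x_\alpha(k) x_\alpha(l)\Sigma_{kl}$. The first sum is a sum of independent bounded centered variables with variance $\lesssim \|\vSigma\|_F^2 \lesssim d$ and per-term bound $\lesssim d^{4/(8+m)}\|\vSigma\|_{\op}$, handled by the same Bernstein inequality. For the second sum, I decouple via an independent copy $\vx_\alpha'$, reducing to the bilinear form $\vx_\alpha^\top \vSigma' \vx_\alpha'$ where $\vSigma'$ is the strictly-upper-triangular part of $\vSigma$, and apply the same conditional-Bernstein step as for the cross term. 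Summing the three pieces, dividing by $d$, and union-bounding over the $\binom{n}{2} = O(d^2)$ pairs with Bernstein parameter $t = 4\log d$ makes the overall failure probability $O(d^{-2})$.

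The main obstacle is getting the sharp $\log^{(1+\epsilon)/2}d$ rate rather than $\log^{1+\epsilon}d$: a naive application of the Bernstein boundedness term with the crude estimate $\|\vSigma\vx_j\|_\infty \leq \|\vSigma\|_{\op}\|\vx_j\|_2 = O(\sqrt{d})$ yields only $d^{-\theta}\log^{1+\epsilon}d$, which is strictly larger than the claimed rate. The required sharpening is a preliminary concentration step for $\|\vSigma\vx_j\|_\infty$ (and the analogous $\|\vSigma \vx_\alpha\|_\infty$ in the decoupled quadratic term): each coordinate $[\vSigma\vx_j]_k = \sum_l \Sigma_{kl} x_j(l)$ is a sum of independent bounded variables with variance at most $\|\vSigma e_k\|_2^2 \leq \|\vSigma\|_{\op}^2 = O(1)$, so a second Bernstein + union bound over $k\in[d]$ gives $\|\vSigma\vx_j\|_\infty = O(\log^{(1+\epsilon)/2}d)$ with high probability. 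Combined with the outer Bernstein step above, this collapses the boundedness contribution into $d^{-1/2-\theta}\log^{3(1+\epsilon)/2}d$, which is of lower order than the target $d^{-\theta}\log^{(1+\epsilon)/2}d$, and so the overall bound reduces to $4 d^{-\theta}\log^{(1+\epsilon)/2}d$ as claimed.
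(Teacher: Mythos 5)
Your top-level step is exactly the paper's: the polarization identity $(\vx_i-\vx_j)^\top\vSigma(\vx_i-\vx_j)-2\Tr(\vSigma)=(\vx_i^\top\vSigma\vx_i-\Tr(\vSigma))+(\vx_j^\top\vSigma\vx_j-\Tr(\vSigma))-2\vx_i^\top\vSigma\vx_j$. But at that point the paper is done in one line: \cref{lemma:quad-concentration} already provides, on a single event of probability $1-d^{-2}$, the uniform bound $\max_{i,j}\bigl|\vx_i^\top\vSigma\vx_j/d-\delta_{ij}\Tr(\vSigma)/d\bigr|\leq d^{-\theta}\log^{\frac{1+\epsilon}{2}}d$ over all pairs, diagonal and off-diagonal alike, so summing the three pieces (the cross term counted twice) gives the constant $4$ immediately, with no extra union bound. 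Everything in your proposal after the first display re-derives that lemma from scratch; that is legitimate in principle, but it is unnecessary and it is where your argument goes wrong.

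The sharpening step is flawed as written. Applying Bernstein to a coordinate $[\vSigma\vx_j]_k=\sum_l\Sigma_{kl}x_j(l)$ you keep only the variance term and drop Bernstein's own boundedness correction: the per-term magnitude is of order $d^{2/(8+m)}$, so a union bound over $k\in[d]$, $j\in[n]$ at level $t\asymp\log d$ gives $\|\vSigma\vx_j\|_\infty\lesssim\sqrt{\log d}+d^{2/(8+m)}\log d$, not $O(\log^{\frac{1+\epsilon}{2}}d)$; indeed the stronger claim can fail under the lemma's stated hypotheses, which assume only $|x_j(l)|\leq Cd^{2/(8+m)}$ and not sub-Gaussianity (take $\vSigma=\vI$ and entries equal to $\pm Cd^{2/(8+m)}$ with probability $\tfrac{1}{2}C^{-2}d^{-4/(8+m)}$ each, zero otherwise). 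The conclusion can be salvaged: with the correct estimate the boundedness parameter in your outer conditional Bernstein step is $M\lesssim d^{2/(8+m)}\cdot d^{2/(8+m)}\log d$, whose contribution to $|\vx_i^\top\vSigma\vx_j|/d$ is of order $d^{-2\theta}$ times polylog factors, still lower order than the target $d^{-\theta}\log^{\frac{1+\epsilon}{2}}d$, and the tail exponent $\min(u^2/\sigma^2,u/M)$ remains superlogarithmic so the union bound over $O(d^2)$ pairs goes through. A similar correction is needed for your diagonal terms: the variance of $\sum_k(x_\alpha(k)^2-1)\Sigma_{kk}$ is driven by the fourth moment, which under the stated assumptions may be as large as $d^{4/(8+m)}$, so it is $\lesssim d^{1+4/(8+m)}$ rather than $\|\vSigma\|_F^2\lesssim d$; dividing by $d$ this still yields exactly $d^{-\theta}\sqrt{t}$, so the rate is unaffected, but the justification as written assumes bounded fourth moments you do not have. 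Finally, note that your multiplicative constants (decoupling, Bernstein) are absorbed only because $\epsilon>0$ leaves slack in the logarithmic factor for large $d$; the explicit constant $4$ in the statement comes, in the paper, from simply adding the four copies of the bound of \cref{lemma:quad-concentration}.
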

\begin{proof}[Proof of \cref{lemma:prop-quad-concentration}]
    We write, for $i\neq j$, 
    \begin{align*}
        (\vx_i-\vx_j)^\top \vSigma (\vx_i-\vx_j) - 2\Tr(\vSigma) = \vx_i^\top\vSigma\vx_i + \vx_j^\top\vSigma\vx_j  -2\vx_i^\top\vSigma\vx_j\,.
    \end{align*}
    By \cref{lemma:quad-concentration}, for $i\neq j$, 
    \begin{align*}
        \left| \frac{\vx_i^\top \vSigma \vx_i}{d} -  \frac{\Tr(\vSigma)}{d} \right|\leq d^{-\theta}  \log^{\frac{1+\epsilon}{2}} d, \quad \left| \frac{\vx_i^\top \vSigma \vx_j}{d} \right|\leq d^{-\theta}  \log^{\frac{1+\epsilon}{2}} d\,. 
    \end{align*}
    Therefore,
\begin{align*}
\left| \frac{(\vx_i-\vx_j)^\top \vSigma (\vx_i-\vx_j)}{d} - 2 \frac{\Tr(\vSigma)}{d} \right| &= \left|\left(\frac{\vx_i^\top \vSigma\vx_i}{d} -  \frac{\Tr(\vSigma)}{d}\right)+\left(\frac{\vx_j^\top \vSigma\vx_j}{d} -  \frac{\Tr(\vSigma)}{d}\right) -2\frac{\vx_i^\top\vSigma\vx_j}{d}\right|\\
&\leq 4d^{-\theta}  \log^{\frac{1+\epsilon}{2}} d\,. 
\end{align*}
\end{proof}

\begin{lemma}
\label{lemma:2byn}
Under the \cref{assumption:h,assumption:8+m,assumption:bounded_shift}, and $\theta_p,\theta_q, c_{pq}$'s definitions, we have with probability at least $1 - 3d^{-2}$ with respect to the draw of $\vX\sim p$, for $d$ large enough,
\begin{align*}
\E_q \| \vK(\vx, \vX) - \vK^{\rm lin} (\vx, \vX) \|^2
&  \lesssim d^{-(4\min\{\theta_p,\theta_q-c_{pq}/2\}-1)}\log^{2(1+\epsilon)} d\,.
\end{align*}
\end{lemma}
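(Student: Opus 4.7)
The plan is to reduce the radial claim to the inner-product estimate of \cref{lemma:1byn} by Taylor-expanding $h$ around the typical argument $a_0 := -(\tau_p + \tau_q)$. For $\vx \sim q$ and $\vx_i \sim p$, the deviation of the kernel argument from $a_0$ splits as
$$
-\|\vx - \vx_i\|_2^2/d - a_0 \;=\; -\psi_i - \psi_{\vx} + 2\vx^\top \vx_i/d,
$$
with $\psi_i := \|\vx_i\|_2^2/d - \tau_p$ and $\psi_{\vx} := \|\vx\|_2^2/d - \tau_q$. A second-order Taylor expansion of $h$, combined with the definitions of $\beta_{pq}$ and $\vT_{pq}$ for radial kernels in Table~\ref{tab:param}, is designed to cancel the zeroth- and first-order terms exactly, leaving only a quadratic remainder
$$
K(\vx,\vx_i) - K^{\rm lin}(\vx,\vx_i) \;=\; \tfrac{1}{2} h''(\xi_i)\bigl(-\psi_i - \psi_{\vx} + 2\vx^\top\vx_i/d\bigr)^2,
$$
for some $\xi_i$ in a neighbourhood of $a_0$. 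The uniform bound $h''\le M_h$ from \cref{assumption:h} absorbs the prefactor.

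The next step is to expand the square via $(a+b+c)^2 \leq 3(a^2+b^2+c^2)$ and sum over $i$, giving
$$
\|\vK(\vX,\vx) - \vK^{\rm lin}(\vX,\vx)\|_2^2 \;\lesssim\; \sum_{i=1}^n \psi_i^4 \;+\; n\,\psi_{\vx}^4 \;+\; \sum_{i=1}^n (\vx^\top\vx_i/d)^4,
$$
after which I would take $\E_q$ term by term. For the first piece I would apply \cref{lemma:quad-concentration} with $\vSigma=\vI$ under $p$ together with a union bound over $i\in [n]\lesssim d$ to get $\max_i \psi_i^2 \lesssim d^{-2\theta_p}\log^{1+\epsilon} d$ with probability at least $1-d^{-2}$, contributing $n\,d^{-4\theta_p}\log^{2(1+\epsilon)} d$. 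For the second piece the analogous sub-Gaussian quadratic-form moment bound under $q$ yields $\E_q \psi_{\vx}^4 \lesssim d^{-4\theta_q}\log^{2(1+\epsilon)} d$, contributing $n\,d^{-4\theta_q}\log^{2(1+\epsilon)} d$. For the third piece I would directly reuse the computation inside the proof of \cref{lemma:1byn}: the Bernstein-type fourth-moment bound on the bilinear form $\vx^\top\vx_i/d$, together with the high-probability control $\max_i \|\vSigma_q^{1/2}\vx_i\|_2^2/d \lesssim d^{c_{pq}}$ from \cref{lemma:quad-concentration} and \cref{assumption:bounded_shift}, gives a contribution of order $d^{-(4\theta_q-1-2c_{pq})}\log^{4(1+\epsilon)} d$ on another event of probability at least $1-d^{-2}$.

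Combining the three contributions and using $n/d = O(1)$, the three rates are $d^{-(4\theta_p-1)}$, $d^{-(4\theta_q-1)}$, and $d^{-(4\theta_q-1-2c_{pq})}$ respectively, each up to the stated log factor. Since $c_{pq}\ge 0$, the second rate is dominated by the third, and the overall rate is $d^{-(4\min\{\theta_p,\theta_q - c_{pq}/2\}-1)}\log^{2(1+\epsilon)} d$, matching the claim; the union of the (at most three) failure events contributes the stated $3d^{-2}$ probability. The main obstacle I anticipate is the first step: one must verify that the corrective vector $\vT_{pq} = -h(-(\tau_p+\tau_q))\,\mathbbm{1} - (\beta_{pq}/2)\vA(\vX,\vx)$ in Table~\ref{tab:param} is defined precisely so that both the constant term and all three first-order terms of the Taylor expansion cancel. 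Any surviving linear-in-$\psi$ or linear-in-$\vx^\top\vx_i/d$ contribution would enter the sum of squares as $n\cdot d^{-2\theta}$ rather than $n\cdot d^{-4\theta}$, which would destroy the target rate; beyond this bookkeeping the argument is a straightforward elaboration of the tools already deployed for \cref{lemma:1byn} and \cref{lemma:prop-quad-concentration}.
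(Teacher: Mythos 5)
Your proposal follows essentially the same route as the paper's proof: a Taylor expansion of $h$ at $-(\tau_p+\tau_q)$, splitting the argument deviation into $\psi_i$, $\psi_{\vx}$ and the bilinear term $\vx^\top\vx_i/d$, controlling the three pieces via \cref{lemma:quad-concentration} and the Bernstein-type bound reused from \cref{lemma:1byn}, and combining with $n\asymp d$ exactly as the paper does. The remaining differences are cosmetic — a Lagrange remainder with $h''\le M_h$ instead of evaluating $h''$ at the center plus an $O(d^{-3/2})$ term, moment bounds under $q$ in place of high-probability bounds, and a $\log^{4(1+\epsilon)}d$ factor on the bilinear piece that is absorbed by relabeling $\epsilon$ (the paper tolerates the same slack), while your worry about $\vT_{pq}$ indeed resolves since its definition cancels the constant and linear terms as in the paper's expansion.
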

\begin{proof}[Proof of \cref{lemma:2byn}]
	We start with the entry-wise Taylor expansion for the smooth kernel at $-(\tau_p+\tau_q)$
\begin{align*}
&K(\vx, \vx_j) = h\left(-\frac{1}{d}\| \vx - \vx_j \|_2^2\right) \\
=& h(-(\tau_p+\tau_q)) - h'(-(\tau_p+\tau_q)) \left(\frac{1}{d}\| \vx - \vx_j \|^2 - (\tau_p+\tau_q)\right) + \frac{h^{\prime \prime}(-(\tau_p+\tau_q))}{2} \left(\frac{1}{d}\| \vx - \vx_j \|^2 - (\tau_p+\tau_q) \right)^2 \\
+ & O(d^{-3/2}) \\
=& h(-(\tau_p+\tau_q)) - h'(-(\tau_p+\tau_q))\left(\psi_{\vx} + \psi_j - \frac{2\vx^{\!\top} \vx_j}{d}\right) + \frac{h^{\prime \prime}(\tau_p+\tau_q)}{2} \left(\psi_{\vx} + \psi_j - \frac{2\vx^{\!\top} \vx_j}{d} \right)^2  + O(d^{-3/2})\\
=&K^{\rm lin}(\vx, \vx_j) + \frac{h^{\prime \prime}(-(\tau_p+\tau_q))}{2} \left(\frac{1}{d}\| \vx - \vx_j \|_2^2 - (\tau_p+\tau_q) \right)^2 +O(d^{-3/2})\,,
\end{align*}
where $\psi_{j} = \| \vx_j \|^2_2/d - \tau_p$ for $j \in [n]$ as defined before.
We expand $\frac{1}{d}\| \vx - \vx_j \|_2^2 - (\tau_p+\tau_q) $ by
\begin{align*}
    \frac{1}{d}\| \vx - \vx_j \|_2^2 - (\tau_p+\tau_q)  =& \frac{\vx^\top\vx + \vx_i^\top\vx_i -2\vx^\top\vx_i-\Tr(\vSigma_p)-\Tr(\vSigma_q)}{d}\,,
\end{align*}
By a similar proof of \cref{eq:est_A1} in \cref{lemma:1byn}, conditioned on $\vx_i,1\leq i\leq n$, 
with probability at least $1-\exp(-t)$, 
\begin{align*}
   \left| \frac{\vx^\top\vx_i}{d}\right| \lesssim d^{-(\theta_q-c_{pq}/2)} (t+\log^{1+\epsilon} d)\,.
\end{align*}
Therefore, setting $t:=2\log d$, with probability at least $1-d^{-2}$, we have
\begin{align*}
   \left| \frac{\vx^\top\vx_i}{d}\right| \lesssim d^{-(\theta_q-c_{pq}/2)} (2\log d+\log^{1+\epsilon} d)\,.
\end{align*}
By \cref{lemma:quad-concentration,assumption:8+m}, and $\vx\sim q$, we have with probability at least $1-d^{-2}$, 
\begin{align*}
     \left| \frac{\vx^\top\vx}{d} - \frac{\Tr(\vSigma_q)}{d}\right|  = \left| \frac{(\vSigma_q^{-1/2}\vx)^\top\vSigma_q (\vSigma_q^{-1/2}\vx)}{d} - \frac{\Tr(\vSigma_q)}{d}\right|\leq d^{-\theta_q} \log^{\frac{1+\epsilon}{2}} d\,.
\end{align*}
By \cref{lemma:quad-concentration,assumption:8+m}, and $\vx_i\sim p$, we have with probability at least $1-d^{-2}$, 
\begin{align*}
    \left| \frac{\vx_i^\top\vx_i}{d} - \frac{\Tr(\vSigma_p)}{d}\right|= \left| \frac{(\vSigma_p^{-1/2}\vx_i)^\top\vSigma_p (\vSigma_p^{-1/2}\vx_i)}{d} - \frac{\Tr(\vSigma_p)}{d}\right| \leq d^{-\theta_p} \log^{\frac{1+\epsilon}{2}} d\,. 
\end{align*}
In total, with probability at least $1-3d^{-2}$, for sufficient large $d$, we have
\begin{align*}
   K(\vx,\vx_i) - K^{\rm lin}(\vx,\vx_i) \lesssim    \left( \frac{1}{d}\| \vx - \vx_j \|_2^2 - (\tau_p+\tau_q)\right)^2 \lesssim d^{-2\min\{\theta_p,\theta_q-c_{pq}/2\}} \log^{1+\epsilon} d\,,
\end{align*}
which leads to
\begin{align*}
    \E_q \| \vK(\vx, \vX) - \vK^{\rm lin} (\vx, \vX) \|^2 \lesssim d^{-(4\min\{\theta_p,\theta_q-c_{pq}/2\}-1)}\log^{2(1+\epsilon)} d\leq d^{-(4\min\{\theta_p,\theta_q-c_{pq}/2\}-1)} \log^{4(1+\epsilon)} d\,.
\end{align*}

By the definition of $\theta_p$ in \cref{def:distribution}, we have $\theta_p>\frac{1}{4}$. Therefore, as long as \cref{assumption:bounded_shift} is satisfied, we have $\theta_q-c_{pq}/2>\frac{1}{4}$, and the residual term above will converge to 0 as $d\to \infty$. 
\end{proof}

\subsection{Variance}
\subsubsection{Proof for variance}
\label{app:varproof}

\begin{lemma}[{\citet[Proposition A.1]{liang2020just}}]
	\label{lemma:quad-concentration}
	Let $\{\vx_i\}_{i=1}^n$ be i.i.d. random vectors in $\mathbb{R}^d$, whose entries are i.i.d., mean $0$, variance $1$ and $|x_i(k)| \leq C \cdot d^{\frac{2}{8+m}}$.  
	For any positive semi-definite matrices $\vSigma$
whose operator norms are uniformly bounded in $d$, and $n/d$ is asymptotically bounded, with $\theta = \frac{1}{2} - \frac{2}{8+m}$, we have with probability at least $1 - d^{-2}$, for $\epsilon>0$,
	\begin{align*}
		\max_{i,j} \left| \frac{\vx_i^\top \vSigma \vx_j}{d} - \delta_{ij} \frac{\Tr(\vSigma)}{d} \right| \leq d^{-\theta}  \log^{\frac{1+\epsilon}{2}} d\,,
	\end{align*}
	for $d$ large enough.
\end{lemma}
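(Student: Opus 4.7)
The plan is to bound $\bigl|\vx_i^{\!\top} \vSigma \vx_j / d - \delta_{ij} \Tr(\vSigma)/d\bigr|$ for each fixed pair $(i,j)$ with a Bernstein-type tail estimate, and then to take a union bound over all $O(n^2) = O(d^2)$ pairs. The off-diagonal case $i \neq j$ and the diagonal case $i = j$ are handled differently, but both rest on the same variance-vs-boundedness trade-off inside Bernstein.

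For $i \neq j$, I would condition on $\vx_j$ and view $\vx_i^{\!\top} \vSigma \vx_j = \sum_{k=1}^d \vx_i(k)\,(\vSigma \vx_j)_k$ as a sum of $d$ independent, mean-zero random variables, using that the entries of $\vx_i$ are i.i.d.\ with mean $0$. The sum of variances is $\|\vSigma \vx_j\|_2^2$ and each summand is bounded in absolute value by $\|\vSigma \vx_j\|_\infty \cdot C d^{2/(8+m)}$. A preliminary coarse argument (Markov applied to the expected squared norm, plus the entrywise boundedness) gives $\|\vSigma \vx_j\|_2^2 \lesssim d$ and $\|\vSigma \vx_j\|_\infty \lesssim d^{2/(8+m)} \log^{1/2} d$ with probability $1 - o(d^{-3})$, using $\|\vSigma\| = O(1)$. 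Bernstein then yields, for $t>0$,
\[
\mathbb{P}\!\left(\bigl|\tfrac{1}{d}\vx_i^{\!\top} \vSigma \vx_j\bigr| \geq t \,\bigm|\, \vx_j\right) \leq 2\exp\!\Bigl(-c\min\{t^2 d,\; t\, d^{1 - 2/(8+m)}\}\Bigr).
\]
Choosing $t = d^{-\theta}\log^{(1+\epsilon)/2} d$ with $\theta = 1/2 - 2/(8+m)$ makes both exponents at least $c\log^{1+\epsilon} d$, giving a per-pair failure probability of $o(d^{-4})$.

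For $i = j$, I would decompose
\[
\vx_i^{\!\top} \vSigma \vx_i - \Tr(\vSigma) = \sum_k \vSigma_{kk}\bigl(\vx_i(k)^2 - 1\bigr) + 2\sum_{k<l} \vSigma_{kl}\, \vx_i(k)\vx_i(l).
\]
The first sum is handled by scalar Bernstein on independent bounded terms with mean zero. The second is a centered off-diagonal quadratic form controlled either by a Hanson--Wright style inequality or by iterating the conditioning argument of the $i \neq j$ case after a decoupling step. Both pieces yield the same $d^{-\theta} \log^{(1+\epsilon)/2} d$ rate with per-pair failure probability $o(d^{-4})$. A union bound over the $\leq n^2 = O(d^2)$ pairs then upgrades this to total failure probability $o(d^{-2})$, which fits within the $d^{-2}$ allowance in the claim, with the polylog factor in $t$ absorbing the union-bound overhead.

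The main obstacle is the precise tracking of the exponent $\theta = 1/2 - 2/(8+m)$. It emerges from the balance between the sub-Gaussian Bernstein term (which, by itself, would give the faster $d^{-1/2}$ rate) and the sub-exponential Bernstein term (which gives $d^{-(1 - 2/(8+m))}$ after the $1/d$ normalization): the chosen deviation $t$ must lie in the sub-exponential regime, where the boundedness constant $d^{2/(8+m)}$ dominates, for the stated rate to be tight. Verifying that $t = d^{-\theta} \log^{(1+\epsilon)/2} d$ indeed lies in that regime uses $n/d = O(1)$ together with the uniform bound $\|\vSigma\| = O(1)$; this is exactly the step where the moment index $m$ from \cref{assumption:8+m} enters the final exponent.
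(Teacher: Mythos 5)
The paper offers no proof of this lemma: it is imported verbatim as Proposition~A.1 of \citet{liang2020just}, so there is nothing in-paper to compare against. Your blind argument is the standard self-contained route for such statements (conditional Bernstein for the bilinear terms $i\neq j$, a diagonal/off-diagonal split of the quadratic form for $i=j$, then a union bound over the $O(d^2)$ pairs), and in outline it does deliver the claimed rate with per-pair failure $o(d^{-4})$, hence total failure $o(d^{-2})$; absolute constants can be absorbed into the $\log^{(1+\epsilon)/2}d$ factor by running the argument with a slightly smaller $\epsilon$.

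Three quantitative points need repair before this is airtight. (i) In the $i\neq j$ step, the bound $\|\vSigma\vx_j\|_\infty\lesssim d^{2/(8+m)}\log^{1/2}d$ is not obtainable from Markov; a coordinatewise Bernstein argument gives $\|\vSigma\vx_j\|_\infty\lesssim d^{2/(8+m)}\log d$ with the required probability, and with your own premises the sub-exponential branch of the conditional Bernstein bound is $t\,d\big/\big(d^{2/(8+m)}\|\vSigma\vx_j\|_\infty\big)\asymp t\,d^{1-4/(8+m)}$ up to logs, not $t\,d^{1-2/(8+m)}$ as displayed. At $t=d^{-\theta}\log^{(1+\epsilon)/2}d$ both branches are still polynomially large, so the conclusion survives, but the displayed tail does not follow from what you proved. (ii) Consequently your closing ``tightness'' discussion misplaces the source of $\theta$: the off-diagonal terms in fact concentrate at rate roughly $d^{-1/2}$ up to logs; the binding constraint is the diagonal sum $\sum_k \vSigma_{kk}\big(x_i(k)^2-1\big)$, whose summands have variance as large as $\E\, x^4\lesssim d^{4/(8+m)}$ (only unit variance plus the entry bound are assumed), and it is the Gaussian branch there, $t^2 d^{1-4/(8+m)}\gtrsim \log^{1+\epsilon}d$, that forces $\theta=\tfrac12-\tfrac{2}{8+m}$. (iii) For the centered off-diagonal quadratic form in the $i=j$ case, Hanson--Wright with the only available sub-Gaussian proxy $K\asymp d^{2/(8+m)}$ does not work: its Gaussian branch is $s^2/\big(K^4\|\vA\|_F^2\big)\asymp d^{-4/(8+m)}\log^{1+\epsilon}d\to 0$ at the target deviation, so you must take your alternative route, decoupling followed by the conditional Bernstein argument of the $i\neq j$ case, which does go through. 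With these fixes your proof is correct and matches the spirit of the cited Liang--Rakhlin argument.
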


\begin{proof}[Proof of \cref{thm:variance} (Inner product kernels)]
	According to the definition of ${\sf V}$ and $\E[\vy|\vX] = f_\rho(\vX)$, we have
	\begin{align*}
		{\sf V} &=  \int \E_{\vy|\vX} \Tr\left(  K(\vx,\vX) (\vK(\vX,\vX)+\lambda n{ \overline{\vW}({\vX})}^{-1})^{-1} (\vy - f_\rho(\vX)) (\vy -  f_\rho(\vX))^\top\right.\\
  &\ \left.(\vK(\vX,\vX)+\lambda n{ \overline{\vW}({\vX})}^{-1})^{-1}  K(\vX,\vx) \right) \rd q(\vx) \\
		& \leq \int \|(\vK(\vX,\vX)+\lambda n{ \overline{\vW}({\vX})}^{-1})^{-1} K(\vX,\vx) \|^2 \| \E_{\vy|\vX} \left[(\vy - f_\rho(\vX)) (\vy -  f_\rho(\vX))^\top \right]\| \rd q(\vx)\,.
	\end{align*}
	Note that $\E_{\vy|\vX}\left[ (y_i - f_\rho(\vx_i))(y_j -f_\rho(\vx_j)) \right] = 0$ for $i \neq j$, and $\E_{\vy|\vX}\left[ (y_i - f_\rho(\vx_i))^2 \right] \leq \sigma_\varepsilon^2$, we have $\| \E_{\vy|\vX} \left[(\vy - f_\rho(\vX)) (\vy -  f_\rho(\vX))^\top \right]\| \leq \sigma_\varepsilon^2$. Accordingly, the variance under our IW estimator can be estimated by
	\begin{align*}
		\bV &\leq \sigma_\varepsilon^2 \int  \|(\vK(\vX,\vX)+\lambda n{ \overline{\vW}({\vX})}^{-1})^{-1} K(\vX,\vx) \|^2 \rd q(\vx) = \sigma_\varepsilon^2 \E_q \| (\vK(\vX, \vX)+\lambda n\overline{\vW}(\vX))^{-1} K(\vX, \vx) \|^2\,.
	\end{align*}
By \cref{tab:param} the following linearization of the inner product kernel holds: 
	\begin{align*}
		\vK^{\rm lin}(\vX, \vX) &:=  \gamma_p \mI + \alpha_p \mathbbm{1}\mathbbm{1}^\top + \beta_p \frac{\vX \vX^\top}{d} \in \mathbb{R}^{n \times n}, \\
		\vK^{\rm lin}(\vX, \vx) &:=  \beta_p \frac{\vX \vx}{d} \in \mathbb{R}^{n \times 1}, 
	\end{align*}
By \cref{assumption:8+m}, according to \citep[Proposition A.2]{liang2020just}, the kernel matrix admits the following asymmetric approximation with $\theta_p := \frac{1}{2} - \frac{2}{8+m_p}$,
\begin{align}\label{eq:KXX_lin_1}
 		\left\| \vK(\vX, \vX) - \vK^{\rm lin}(\vX, \vX) \right\| &\leq d^{-\theta_p} (\delta^{-1/2} + \log^{\frac{1+\epsilon}{2}} d)\,, \quad \text{w.p.}~ 1-\delta-d^{-2} \,.
\end{align}
The approximation $\left\| \vK(\vX, \vX) - \vK^{\rm lin}(\vX, \vX) \right\|$ is different from \citep[Lemma B.2]{liang2020just}, since training dataset $\vX$ is sampled from $p$ and the expectation under $q$. We prove this approximation under distribution shift in \cref{lemma:1byn}, such that 
\begin{align}\label{eq:KXx_lin_1}
    		\E_q \left\| \vK(\vx, \vX) -  \vK^{\rm lin}(\vx, \vX) \right\|^2 &\leq d^{-(4\theta_q-1-2c_{pq})} \log^{4(1+\epsilon)} d\,, \quad \text{w.p.}~ 1-d^{-2}\,.
\end{align}
By \cref{eq:KXX_lin_1}, as a direct consequence, one can see that for sufficiently large $d$, such that $d^{-\theta_p} (\delta^{-1/2} + \log^{\frac{1+\epsilon}{2}} d) \leq \gamma/2$, with probability $1-\delta - d^{-2}$, we have
\begin{align}
&\left\| (\vK+\lambda n\overline{\vW}^{-1})^{-1} \right\| \leq {\|\vK\|^{-1}\leq \frac{1}{\|\vK^{\rm lin}\|-d^{-\theta_p} (\delta^{-1/2} + \log^{\frac{1+\epsilon}{2}} d)}}\leq   \frac{1}{\gamma_p - d^{-\theta_p} (\delta^{-1/2} + \log^{\frac{1+\epsilon}{2}} d)}\leq \frac{2}{\gamma_p}, \label{eq:KXX_lin_1_1}\\
&\left\| (\vK+\lambda n\overline{\vW}^{-1})^{-1} (\vK^{\rm lin} + \lambda n\overline{\vW}^{-1}) \right\|\leq {\left\| (\vK+\lambda n\overline{\vW}^{-1})^{-1} (\vK + \lambda n\overline{\vW}^{-1} + \vK^{\rm lin}-\vK)  \right\|} \nonumber \\
&\leq 1 + \| (\vK+\lambda n\overline{\vW}^{-1})^{-1} \| \cdot \| \vK(\vX, \vX) -  \vK^{\rm lin}(\vX, \vX) \| \nonumber\\
&\leq 1 +  \frac{d^{-\theta_p} (\delta^{-1/2} + \log^{\frac{1+\epsilon}{2}} d)}{\gamma_p - d^{-\theta_p} (\delta^{-1/2} + \log^{\frac{1+\epsilon}{2}} d)}
\leq  \frac{\gamma_p}{\gamma_p - d^{-\theta_p} (\delta^{-1/2} + \log^{\frac{1+\epsilon}{2}} d)} \leq 2\,. \label{eq:KXX_lin_1_2}
\end{align}
Combining \cref{eq:KXx_lin_1,eq:KXX_lin_1_1,eq:KXX_lin_1_2}, the variance can be estimated by
\begin{equation}\label{eq:proof_V_1}
\begin{split}
        {\sf V} & \leq  \sigma_\varepsilon^2 \E_q \| (\vK(\vX, \vX)+\lambda n\overline{\vW}(\vX)^{-1})^{-1} \vK(\vX, \vx) \|^2\\
    & \leq 2\sigma_\varepsilon^2 \E_q \| (\vK(\vX, \vX)+\lambda n\overline{\vW}(\vX)^{-1})^{-1} \vK^{\rm lin}(\vX, \vx) \|^2 \\
     & + 2 \sigma_\varepsilon^2 \left\| (\vK(\vX, \vX)+\lambda n\overline{\vW}(\vX)^{-1})^{-1} \right\|^2 \cdot \E_q \|\vK(\vX, \vx) -  \vK^{\rm lin}(\vX, \vx)\|^2 \\
    & \leq 2\sigma_\varepsilon^2 \left\| (\vK(\vX, \vX)+\lambda{\overline{\vW}({\vX})}^{-1}
)^{-1} (\vK^{\rm lin}(\vX, \vX)+\lambda n\overline{\vW}(\vX)^{-1}) \right\|^2 \\
&\ \E_q \| (\vK^{\rm lin}(\vX, \vX)+\lambda n\overline{\vW}(\vX)^{-1})^{-1} \vK^{\rm lin}(\vX, \vx) \|^2  + \frac{8\sigma_\varepsilon^2}{\gamma_p^2} d^{-(4\theta_q-1-2c_{pq})} \log^{4(1+\epsilon)} d \\
    & \leq 8\sigma_\varepsilon^2  \E_q \| (\vK^{\rm lin}(\vX, \vX)+\lambda n\overline{\vW}(\vX)^{-1})^{-1} \vK^{\rm lin}(\vX, \vx) \|^2 +  \frac{8\sigma_\varepsilon^2}{\gamma_p^2} d^{-(4\theta_q-1-2c_{pq})} \log^{4(1+\epsilon)} d\,. 
\end{split}
\end{equation}
Besides, the IW estimator under the linearized kernel matrix leads to 
\begin{align*}
	 &\E_q \|  (\vK^{\rm lin}(\vX, \vX)+\lambda n{ \overline{\vW}({\vX})}^{-1})^{-1} \vK^{\rm lin}(\vX, \vx) \|^2\\
  &= 	 \E_{q} \Tr\left( \left[\gamma_p \mI+\lambda n{ \overline{\vW}({\vX})}^{-1}+ \alpha_p \mathbbm{1}\mathbbm{1}^\top + \beta_p \frac{\vX\vX^\top}{d} \right]^{-2}  \beta_p \frac{\vX \vx}{d}  \beta_p \frac{\vx^\top \vX^\top}{d}  \right) \\
	 	&= \Tr\left( \left[\gamma_p \mI+\lambda n{ \overline{\vW}({\vX})}^{-1} + \alpha_p \mathbbm{1}\mathbbm{1}^\top + \beta_p \frac{\vX\vX^\top}{d} \right]^{-2} \beta_p^2 \frac{\vX \vSigma_q \vX^\top}{d^2}  \right)\\
&\leq \frac{\|\vSigma_q\|}{d} \Tr\left( \left[\frac{\gamma_p}{\beta_p} \mI+\frac{\lambda n}{\beta_p}{ \overline{\vW}({\vX})}^{-1} +  \frac{\vX\vX^\top}{d} \right]^{-2}  \frac{\vX\vX^\top}{d} \right)\\
&= \frac{\|\vSigma_q\|}{d} \mathcal{N}\left(\frac{\vX\vX^\top}{d}+\frac{\lambda n}{\beta_p}{ \overline{\vW}({\vX})}^{-1}; \frac{\gamma_p}{\beta_p}\right)\,,
\end{align*}
with the following constants, $\beta_p = h'(0)=O(1)$, $\gamma_p=O((\tau_p)^2)$. 

Finally, combining previous results, with probability at least $1-\delta-2d^{-2}$, for sufficiently large $d$, we have
	\begin{align*}
		{\sf V}  \leq \frac{8\sigma_\varepsilon^2 \| \Sigma_q \|}{d} \mathcal{N}\left(\frac{\vX\vX^\top}{d}+\frac{\lambda n}{\beta_p}{ \overline{\vW}({\vX})}^{-1}; \frac{\gamma_p}{\beta_p}\right)+ \frac{8\sigma_\varepsilon^2}{\gamma_p^2} d^{-(4\theta_q-1-2c_{pq})} \log^{4(1+\epsilon)} d\,. 
	\end{align*}
\end{proof}

\subsection{Bias}
\label{app:biasproof}
In the next, we present the proof for the bias based on whether the used regularization parameter is small.
We firstly give the proof for \cref{thm:bias} and then \cref{lemma:bias}.

\begin{lemma}
	\label{lem:symmetrization}
	Let $g(\vx) \in \mathbb{R}$ that satisfies $\forall g \in \cG$, $|g(\vx)| \leq \kappa$ for all $\vx$. Then with probability at least $1-2\delta$, we have for i.i.d. $\vx_i \sim  q$
	\begin{align*}
		\sup_{g \in \cG} \left| \E g(\vx) - \widehat{\E}_n g(\vx) \right| &\leq \E \sup_{g \in \cG} \left| \E g(\vx) - \widehat{\E}_n g(\vx) \right| + \kappa
		 \sqrt{\frac{\log 1/\delta}{2n}} \\
		 & \leq 2\E \sup_{g \in \cG} \frac{1}{n}\sum_{i=1}^n \epsilon_i g(\vx_i) + \kappa
		 \sqrt{\frac{\log 1/\delta}{2n}} \\
		 & \leq 2\E_{\epsilon} \sup_{g \in \cG} \frac{1}{n}\sum_{i=1}^n \epsilon_i g(\vx_i) + 3 \kappa
		 \sqrt{\frac{\log 1/\delta}{2n}} \,,
	\end{align*}
	where $\E_{\epsilon}$ denotes the conditional expectation with respect to i.i.d. Rademacher random variables $\epsilon_1,\ldots,\epsilon_n$.
\end{lemma}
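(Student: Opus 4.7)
The lemma is the classical symmetrization plus Rademacher complexity bound, and my plan is to prove the three inequalities in order using two applications of McDiarmid's bounded differences inequality sandwiching one symmetrization step.

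\emph{First inequality.} I would define $F(\vx_1,\ldots,\vx_n) := \sup_{g \in \cG} | \E g(\vx) - \widehat{\E}_n g(\vx) |$. Since $|g| \leq \kappa$ uniformly on $\cG$, replacing any single coordinate $\vx_i$ by an independent copy changes $\widehat{\E}_n g$ by at most $2\kappa/n$ for every $g$, and hence changes $F$ by at most $2\kappa/n$. Thus $F$ has bounded differences with constants $c_i = 2\kappa/n$, and McDiarmid's inequality gives, with probability at least $1-\delta$, a deviation of the form $F \leq \E F + c \kappa \sqrt{\log(1/\delta)/n}$, which yields the first inequality up to the stated numerical constant.

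\emph{Second inequality (symmetrization).} I would introduce an independent ghost sample $\vx_1', \ldots, \vx_n' \sim q$ with corresponding empirical expectation $\widehat{\E}_n'$. Writing $\E g = \E_{\vX'} \widehat{\E}_n' g$ and pulling the supremum through the outer expectation by Jensen's inequality,
\begin{align*}
\E \sup_{g \in \cG} | \E g - \widehat{\E}_n g | \;\leq\; \E \sup_{g \in \cG} | \widehat{\E}_n' g - \widehat{\E}_n g |.
\end{align*}
Because each $g(\vx_i') - g(\vx_i)$ is symmetric around zero, it is equidistributed with $\epsilon_i (g(\vx_i') - g(\vx_i))$ for iid Rademacher signs $\epsilon_i$. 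Inserting the signs and splitting by the triangle inequality bounds the right-hand side by $2 \E \sup_g \tfrac{1}{n}\sum_i \epsilon_i g(\vx_i)$, where the absolute value on the Rademacher sum is absorbed by sign symmetry (either using that $\cG$ is closed under negation or by the standard trick of bounding $\E \sup |\cdot|$ by twice $\E \sup(\cdot)$ for symmetric Rademacher processes).

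\emph{Third inequality and conclusion.} I would apply McDiarmid's inequality a second time, now to the data-dependent quantity $\vX \mapsto \E_\epsilon \sup_g \tfrac{1}{n}\sum_i \epsilon_i g(\vx_i)$. Replacing a single $\vx_i$ changes this map by at most $2\kappa/n$, so the same bounded-differences argument yields, with probability at least $1-\delta$, a deviation of at most $2\kappa\sqrt{\log(1/\delta)/(2n)}$ between the outer $\vX$-expectation and its conditional Rademacher version. Taking a union bound with the Step~1 event then gives overall failure probability at most $2\delta$ and the two McDiarmid deviations combine into the $3\kappa \sqrt{\log(1/\delta)/(2n)}$ constant in the final line. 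The only delicate point in the proof (the main but minor ``obstacle'') is bookkeeping of constants: tightening the bounded-differences constant from $2\kappa/n$ to $\kappa/n$ where needed to match the stated $\kappa\sqrt{\log(1/\delta)/(2n)}$ leading factor, and justifying the removal of the inner absolute value in the symmetrization step without introducing an extra additive term.
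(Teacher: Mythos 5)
Your route---McDiarmid's bounded-differences inequality, then ghost-sample symmetrization with Rademacher signs, then McDiarmid again applied to $\vX\mapsto\E_{\epsilon}\sup_{g\in\cG}\frac1n\sum_i\epsilon_i g(\vx_i)$---is the standard proof of this lemma and is the right one; the paper itself states the lemma without proof, treating it as a known symmetrization result, so there is no alternative argument of the paper's to compare against. The outline is correct, but the two points you defer as ``bookkeeping'' are exactly where your constants and the stated ones part ways, and they should be settled rather than flagged. First, under the literal hypothesis $|g(\vx)|\le\kappa$ each $g$ has range $2\kappa$, so your bounded-difference constants $2\kappa/n$ are the honest ones, and McDiarmid then gives a deviation $2\kappa\sqrt{\log(1/\delta)/(2n)}$ in your first step and, after being multiplied by the factor $2$ sitting in front of the Rademacher complexity, $4\kappa\sqrt{\log(1/\delta)/(2n)}$ from your third step---i.e.\ $2\kappa$ and $6\kappa$ in place of the stated $\kappa$ and $3\kappa$; with $2\kappa/n$ you cannot ``combine into $3\kappa$.'' The stated constants correspond to functions whose range is $\kappa$, e.g.\ $0\le g\le\kappa$, which is what actually holds where the lemma is invoked in the paper (the functions $g_{\vU_k}$ there are nonnegative and bounded by $\kappa W_{\max}$); so either prove the lemma under $g\in[0,\kappa]$, or keep $|g|\le\kappa$ and accept the doubled constants.

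Second, dropping the absolute value on the Rademacher average is not free for a general class, and the escape hatches you mention are unavailable here: the classes $\cG_k$ in the paper are not closed under negation. The clean repair is to prove the one-sided statement for $\sup_{g\in\cG}\bigl(\E g(\vx)-\widehat{\E}_n g(\vx)\bigr)$, which is all the paper ever uses in the bias proof. For the one-sided version symmetrization needs no absolute values at all: by Jensen with the ghost sample and sign symmetry of the differences, $\E\sup_g\bigl(\E g-\widehat{\E}_n g\bigr)\le\E\sup_g\frac1n\sum_i\epsilon_i\bigl(g(\vx'_i)-g(\vx_i)\bigr)\le\E\sup_g\frac1n\sum_i\epsilon_i g(\vx'_i)+\E\sup_g\frac1n\sum_i(-\epsilon_i)g(\vx_i)=2\,\E\sup_g\frac1n\sum_i\epsilon_i g(\vx_i)$, using subadditivity of the supremum. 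With these two adjustments (range-$\kappa$ hypothesis and one-sided formulation), your McDiarmid--symmetrization--McDiarmid argument is complete and reproduces the stated constants exactly.
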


\subsubsection{Proof of \texorpdfstring{\cref{lemma:bias}}{}}

\begin{proof}[Proof of \cref{lemma:bias}]
For the bias, we use the spectral decomposition of the kernel. To be specific, denote $f_\rho(\vx) = \sum_{i=1} \phi_i(\vx) f_i $ with $f_i$ being the coefficients of $f$ under the basis $\phi_i(\vx)$, we can write it as $f(\vx) = \vphi(\vx)^\top \vf$ where $\vf = [f_1, f_2, \ldots, f_p]^\top$ can be a possibly infinite vector. 
Accordingly, the bias term can be formulated as
\begin{align*}
	\bB &= \int \left| \vphi^\top(\vx) \vLambda^{1/2} \left[  \vLambda^{1/2} \vphi(\vX) [\vphi(\vX)^\top \vLambda \vphi(\vX)+\lambda n  \overline{\vW}^{-1}]^{-1} \vphi(\vX)^\top \vLambda^{1/2} - \vI \right] \vLambda^{-1/2} \vf_\rho \right|^2 \rd q(\vx) \\
	&\leq \int \left\|  \left[  \vLambda^{1/2} \vphi(\vX) [\vphi(\vX)^\top \vLambda \vphi(\vX)+\lambda n  \overline{\vW}^{-1}]^{-1} \vphi(\vX)^\top \vLambda^{1/2} - \vI \right]  \vLambda^{1/2} \vphi(\vx) \right\|^2 \rd q(\vx) \cdot  \| \vLambda^{-1/2} \vf_\rho \|^2 \\
	&= \| f_\rho \|_{\cH}^2  \int \left\|  \left[  \vLambda^{1/2} \vphi(\vX) [\vphi(\vX)^\top \vLambda \vphi(\vX)+\lambda n \overline{\vW}^{-1}]^{-1} \vphi(\vX)^\top \vLambda^{1/2} - \vI \right]  \vLambda^{1/2} \vphi(\vx) \right\|^2 \rd q(\vx)\,.
\end{align*}
We note the following fact 
\begin{align*}
    &\left(\vLambda^{1/2} \vphi(\vX) [\vphi(\vX)^\top \vLambda \vphi(\vX)+\lambda n  \overline{\vW}({\vX})^{-1}]^{-1} \vphi(\vX)^\top \vLambda^{1/2} - \vLambda^{1/2} \vphi(\vX) [\vphi(\vX)^\top \vLambda \vphi(\vX)]^{-1} \vphi(\vX)^\top \vLambda^{1/2} \right)\\
    &\cdot \left(I- \vLambda^{1/2} \vphi(\vX) [\vphi(\vX)^\top \vLambda \vphi(\vX)]^{-1} \vphi(\vX)^\top \vLambda^{1/2} \right)\\
    = &\left(\vLambda^{1/2} \vphi(\vX) [\vphi(\vX)^\top \vLambda \vphi(\vX)+\lambda n \overline{\vW}({\vX})^{-1}]^{-1} \vphi(\vX)^\top \vLambda^{1/2} - \vLambda^{1/2} \vphi(\vX) [\vphi(\vX)^\top \vLambda \vphi(\vX)]^{-1} \vphi(\vX)^\top \vLambda^{1/2} \right)\\
    -& \vLambda^{1/2} \vphi(\vX) [\vphi(\vX)^\top \vLambda \vphi(\vX)+\lambda n \overline{\vW}({\vX})^{-1}]^{-1} \vphi(\vX)^\top \vLambda^{1/2} \vLambda^{1/2} \vphi(\vX) [\vphi(\vX)^\top \vLambda \vphi(\vX)]^{-1} \vphi(\vX)^\top \vLambda^{1/2}
    \\
    + & \vLambda^{1/2} \vphi(\vX) [\vphi(\vX)^\top \vLambda \vphi(\vX)]^{-1} \vphi(\vX)^\top \vLambda^{1/2} \vLambda^{1/2} \vphi(\vX) [\vphi(\vX)^\top \vLambda \vphi(\vX)]^{-1} \vphi(\vX)^\top \vLambda^{1/2}\\
    = &\ \mathbf{0}\,,
\end{align*}
with $A^{-1}-B^{-1}=B^{-1}(B-A)A^{-1}$, the main part in the bias term can be split into the following two terms
\begin{align*}
    & \int \left\|  \left[  \vLambda^{1/2} \vphi(\vX) [\vphi(\vX)^\top \vLambda \vphi(\vX)+\lambda n \overline{\vW}({\vX})^{-1}]^{-1} \vphi(\vX)^\top \vLambda^{1/2} - I \right]  \vLambda^{1/2} \vphi(\vx) \right\|^2 \rd q(\vx)\\
     = &\underbrace{\int \left\|  \left[  \vLambda^{1/2} \vphi(\vX) [\vphi(\vX)^\top \vLambda \vphi(\vX)]^{-1} \vphi(\vX)^\top \vLambda^{1/2} - I \right]  \vLambda^{1/2} \vphi(\vx) \right\|^2 \rd q(\vx)}_{\tt (A)}\\
     +&\underbrace{\int \left\|    \left[\vLambda^{1/2} \vphi(\vX) [\vphi(\vX)^\top \vLambda \vphi(\vX)]^{-1} \left[I + \vphi(\vX)^\top\vLambda \vphi(\vX)  \overline{\vW}({\vX})/(\lambda n) \right]^{-1} \vphi(\vX)^\top \vLambda^{1/2}  \right]  \vLambda^{1/2} \vphi(\vx) \right\|^2 \rd q(\vx)}_{\tt (B)} \,.
\end{align*}
We assume the SVD decomposition of $\vLambda^{\frac{1}{2}}\vphi(\vX)=\widehat{\vU}\widehat{\vSigma} \widehat{\vV}^\top, \widehat{\vU}\in\mathbb{R}^{p\times n}, \widehat{\vSigma}\in\mathbb{R}^{n\times n}, \widehat{\vV}\in\mathbb{R}^{n\times n}$ and the $\vK(\vX,\vX)=\vphi^\top(\vX) \vLambda \vphi(\vX)$ has full rank as mentioned in the main text. 

{\bf Part {\tt (A)}} is essentially ridgeless regression under the distribution shift. We modify the proof from \citet{liang2020just} by introducing the additional re-weighting quantity $\overline{w}(\vx)$. 

Denote the top $k$ columns of $\widehat{\vU}$ to be $\widehat{\vU}_{k}$, and $P_{\widehat{\vU}_k}^\perp$ to be projection to the eigenspace orthogonal to $\widehat{\vU}_{k}$. By observing that $\vLambda^{1/2} \vphi(\vX) (\vphi(\vX)^\top \vLambda \vphi(\vX))^{-1} \vphi(\vX)^\top \vLambda^{1/2}$ is a projection matrix, it is clear that for all $k\leq n$,
\begin{align}
	{\tt (A)} &\leq \| f_\rho \|_{\cH}^2  \int \left\| P^\perp_{\widehat{\vU}} \left(\vLambda^{1/2} \vphi(\vx) \right) \right\|^2 \rd q(\vx) \leq  \| f_\rho \|_{\cH}^2  \int \left\| P^\perp_{\widehat{\vU}_k} \left(\vLambda^{1/2} \vphi(\vx) \right) \right\|^2 \rd q(\vx)\,.
\end{align}
Denote the function $g$ indexed by any rank-$k$ projection $\vU_k$ as
\begin{align}
	g_{\vU_k}(\vx) :=  \left\| P_{\vU_k} \left(\vLambda^{1/2} \vphi(\vx) \sqrt{w(\vx)} \right) \right\|^2 = \Tr\left(w(\vx) \vphi^\top(\vx) \vLambda^{1/2} \vU_k \vU_k^\top \vLambda^{1/2} \vphi(\vx) \right)\,.
\end{align}
Clearly, $\| \vU_k \vU_k^\top \|_F = \sqrt{k}$.
Define the function class
\begin{align*}
	\cG_k := \{ g_{\vU_k}(\vx): \vU_k^\top \vU_k = \mI_k \}\,.
\end{align*}
It is clear that $g_{\widehat{\vU}_k} \in \cG_k$. Observe that $g_{\widehat{\vU}_k}$ is a random function that depends on the data $\vX$, and we will bound the bias term using the empirical process theory. Recall that $w(\vx)=\rd q(\vx)/\rd p(\vx)$, it is straightforward to verify that 
\begin{align*}
	 \E_{\vx \sim q} \left\| P^\perp_{\widehat{\vU}_k} \left(\vLambda^{1/2} \vphi(\vx) \right) \right\|^2 &=  \int_X \left\| P^\perp_{\widehat{\vU}_k} \left(\vLambda^{1/2} \vphi(\vx)\sqrt{w(\vx)} \right) \right\|^2 \rd p(\vx)\,, \\
	\widehat{\E}_n \left\| P^\perp_{\widehat{\vU}_k} \left(\vLambda^{1/2} \vphi(\vx)\sqrt{w(\vx)} \right) \right\|^2 &= \frac{1}{n} \sum_{i=1}^n \left\| P^\perp_{\widehat{\vU}_k} \left(\vLambda^{1/2} \vphi(\vx_i) \sqrt{w(\vx_i)}\right) \right\|^2 \\
	&= \frac{1}{n} \Tr\left( P^\perp_{\widehat{\vU}_k} \vLambda^{1/2} \vphi(\vX){\vW}({\vX})\vphi^\top(\vX) \vLambda^{1/2}  P^\perp_{\widehat{\vU}_k} \right)\\
 &= \frac{1}{n} \sum_{j > k} \lambda_j(\vK(\vX, \vX){\vW}({\vX}))\,.
\end{align*}
Using symmetrization in Lemma~\ref{lem:symmetrization} with $\kappa W_{\max}$, where $W_{\max}$ is the uniform boundedness of re-weighting ratio given by \cref{assumption:IW_assumption}, with probability at least $1-2\delta$, we have
\begin{align*}
	&  \int_X \left\| P^\perp_{\widehat{\vU}_k} \left(\vLambda^{1/2} \vphi(\vx) \right) \right\|^2 \rd q(\vx) -  \frac{1}{n} \sum_{j > k} \lambda_j( \vK(\vX, \vX){\vW}({\vX}))  \\
	= &\E_p \left\| P^\perp_{\widehat{\vU}_k} \left(\vLambda^{1/2} \vphi(\vx) \sqrt{w(\vx)}\right) \right\|^2  - \widehat{\E}_n \left\| P^\perp_{\widehat{\vU}_k} \left(\vLambda^{1/2} \vphi(\vx)  \sqrt{w(\vx)}\right) \right\|^2 \\
	\leq & \sup_{\vU_k: \vU_k^\top \vU_k = \mI_k} \left( \E - \widehat{\E}_n \right) \left\| P^\perp_{\vU_k} \left(\vLambda^{1/2} \vphi(\vx)  \sqrt{w(\vx)}\right) \right\|^2 \\
	\leq & 2\E_\epsilon \sup_{\vU_k: \vU_k^\top \vU_k = \mI_k} \frac{1}{n} \sum_{i=1}^n \epsilon_i \left( \left\| \vLambda^{1/2} \vphi(\vx_i)  \sqrt{w(\vx_i)} \right\|^2 -  \left\| P_{\vU_k} \left(\vLambda^{1/2} \vphi(\vx_i)  \sqrt{w(\vx_i)}\right) \right\|^2 \right) + 3 \kappa W_{\max}
		 \sqrt{\frac{\log 1/\delta}{2n}} \,,
\end{align*}
by the Pythagorean theorem. Since $\epsilon_i$'s are symmetric and zero-mean and $\left\| \vLambda^{1/2} \vphi(\vx_i)  \right\|^2$ does not depend on $\vU_k$, the last expression is equal to
\begin{align*}
	& 2\E_\epsilon \sup_{g \in \cG_k} \frac{1}{n} \sum_{i=1}^n \epsilon_i g(\vx_i)  + 3 \kappa W_{\max}
		 \sqrt{\frac{\log 1/\delta}{2n}}\,. 
\end{align*}
We further bound the Rademacher complexity of the set $\cG_k$
\begin{align*}
	&\E_\epsilon \sup_{g \in \cG_k} \frac{1}{n}\sum_{i=1}^n \epsilon_i g(\vx_i) = \E_\epsilon \sup_{\vU_k} \frac{1}{n}\sum_{i=1}^n \epsilon_i g_{\vU_k}(\vx_i) \\
	& = \E_\epsilon \frac{1}{n} \sup_{\vU_k} \left\langle \vU_k \vU_k^\top,  \sum_{i=1}^n \epsilon_i w(\vx_i) \vLambda^{1/2} \vphi(\vx_i) \vphi^\top(\vx_i) \vLambda^{1/2}  \right\rangle \\
	& \leq \frac{\sqrt{k}}{n} \E_\epsilon \left\| \sum_{i=1}^n \epsilon_i w(\vx_i) \vLambda^{1/2} \vphi(\vx_i) \vphi^\top(\vx_i) \vLambda^{1/2}  \right\|_F \,,
\end{align*}
by the Cauchy-Schwarz inequality and the fact that $\| \vU_k \vU_k^\top\|_F \leq \sqrt{k}$. The last expression is can be further evaluated by the independence of $\epsilon_i$'s
\begin{align*}	
	\frac{\sqrt{k}}{n} \left\{ \E_\epsilon \left\| \sum_{i=1}^n w(\vx_i) \epsilon_i \vLambda^{1/2} \vphi(\vx_i) \vphi^\top(\vx_i) \vLambda^{1/2}  \right\|_F^2 \right\}^{1/2} & = \frac{\sqrt{k}}{n} \left\{ \sum_{i=1}^n w(\vx_i)^2\left\|  \vLambda^{1/2} \vphi(\vx_i) \vphi^\top(\vx_i) \vLambda^{1/2}  \right\|_F^2  \right\}^{1/2} \\
	& = \sqrt{\frac{k}{n}} \sqrt{ \frac{\sum_{i=1}^n w(\vx_i)^2 K(\vx_i, \vx_i)^2}{n}}\,.
\end{align*}
We have, with probability at least $1 - 2n\delta$,
\begin{align*}
	{\tt (A)}\leq \inf_{0\leq k \leq n}  \left\{  \frac{1}{n} \sum_{j > k} \lambda_j( \vK(\vX, \vX){\vW}({\vX})) + 2 \sqrt{\frac{k}{n}} \sqrt{ \frac{\sum_{i=1}^n w(\vx_i)^2 K(\vx_i, \vx_i)^2}{n}} + 3\kappa W \sqrt{\frac{\log 1/\delta}{2n}} \right\}.
\end{align*}

{\bf Part {\tt (B)}} involves the regularization parameter $\lambda$ and the general weighting function $\overline{w}$. 
Recall the SVD decomposition of $\vLambda^{\frac{1}{2}}\vphi(\vX)=\widehat{\vU}\widehat{\vSigma} \widehat{\vV}^\top$, by direct computation, we have 
\begin{align*}
    &\vLambda^{1/2} \vphi(\vX) [\vphi(\vX)^\top \vLambda \vphi(\vX)]^{-1} \left[I + \vphi^\top(\vX)\vLambda \vphi(\vX)  \overline{\vW}({\vX})/(\lambda n) \right]^{-1} \vphi(\vX)^\top \vLambda^{1/2}\\
    = & \widehat{\vU}[\mI+ \widehat{\vSigma}\widehat{\vV}^\top\overline{\vW}({\vX})\widehat{\vV}\widehat{\vSigma} /(\lambda n)]^{-1} \widehat{\vU}^\top \,.
\end{align*}
It is also straightforward to verify that
\begin{align*}
&\widehat{\E}_n \left\| \vLambda^{1/2} \vphi(\vX) (\vphi(\vX)^\top \vLambda \vphi(\vX))^{-1} \left(I + \vphi^\top(\vX)\vLambda \vphi(\vX)  \overline{\vW}({\vX})/(\lambda n) \right)^{-1} \vphi(\vX)^\top \vLambda^{1/2} \left(\vLambda^{1/2} \vphi(\vx)\sqrt{w(\vx)} \right) \right\|^2 \\
=&  \widehat{\E}_n \left\| \widehat{\vU}(\mI+ \widehat{\vSigma}\widehat{\vV}^\top\overline{\vW}({\vX})\widehat{\vV}\widehat{\vSigma} /(\lambda n))^{-1} \widehat{\vU}^\top\left(\vLambda^{1/2} \vphi(\vx)\sqrt{w(\vx)} \right) \right\|^2\\
=&\frac{1}{n} \Tr\left(
\widehat{\vU}(\mI+\widehat{\vSigma}\widehat{\vV}^\top\overline{\vW}({\vX})\widehat{\vV}\widehat{\vSigma} /(\lambda n))^{-2}\widehat{\vU}^\top \left(\vLambda^{1/2} \vphi(\vX) {\vW}({\vX}) \vphi^\top(\vX) \vLambda^{1/2}\right)
\right)\\
=&\frac{1}{n} \Tr\left(
(\mI+\widehat{\vSigma}\widehat{\vV}^\top\overline{\vW}({\vX})\widehat{\vV}\widehat{\vSigma} /(\lambda n))^{-2} \widehat{\vSigma}\widehat{\vV}^\top {\vW}({\vX})\widehat{\vV}\widehat{\vSigma}
\right)\\
=& \frac{1}{n} \Tr\left((\widehat{\vV}\widehat{\vSigma})^{-1}
(\mI+\widehat{\vV}\widehat{\vSigma}\widehat{\vSigma}\widehat{\vV}^\top\overline{\vW}({\vX}) /(\lambda n))^{-2} (\widehat{\vV}\widehat{\vSigma})\widehat{\vSigma}\widehat{\vV}^\top {\vW}({\vX})\widehat{\vV}\widehat{\vSigma}
\right)\quad \text{[using $(\mI+AB)^{-1}=B^{-1}(\mI+BA)^{-1} B$]} 
\\
=& \lambda^2\Tr\left(\left(\lambda \mI + \frac{\vK(\vX,\vX)\overline{\vW}({\vX})}{n}\right)^{-2} \frac{\vK(\vX,\vX){\vW}({\vX})}{n}\right) \,.
\end{align*}

Therefore, by Lemma~\ref{lem:symmetrization} with $\kappa W$, with probability at least $1 - 2 \delta$, we have
\begin{align*}
   & (\E_p-\widehat{\E}_n) \left\| \widehat{\vU}(\mI+ \widehat{\vSigma}\widehat{\vV}^\top\overline{\vW}({\vX})\widehat{\vV}\widehat{\vSigma} /(\lambda n))^{-1} \widehat{\vU}^\top\left(\vLambda^{1/2} \vphi(\vx)\sqrt{w(\vx)} \right) \right\|^2 \\
	\leq & \sup_{\vU} \left( \E_p - \widehat{\E}_n \right) \left\| {\vU}(\mI+ \widehat{\vSigma}\widehat{\vV}^\top\overline{\vW}({\vX})\widehat{\vV}\widehat{\vSigma} /(\lambda n))^{-1} {\vU}^\top \left(\vLambda^{1/2} \vphi(\vx)  \sqrt{w(\vx)}\right) \right\|^2 \\
	\leq & 2\E_\epsilon \sup_{\vU} \frac{1}{n} \sum_{i=1}^n \epsilon_i\left\| {\vU}(\mI+ \widehat{\vSigma}\widehat{\vV}^\top\overline{\vW}({\vX})\widehat{\vV}\widehat{\vSigma} /(\lambda n))^{-1} {\vU}^\top \left(\vLambda^{1/2} \vphi(\vx_i)  \sqrt{w(\vx_i)}\right) \right\|^2  + 3 \kappa W_{\max}
		 \sqrt{\frac{\log 1/\delta}{2n}} \,.
\end{align*}
Similarly, we obtain 
\begin{align*}
 & \E_\epsilon \sup_{U} \frac{1}{n} \sum_{i=1}^n \epsilon_i\left\| {\vU}(\mI+ \widehat{\vSigma}\widehat{\vV}^\top\overline{\vW}({\vX})\widehat{\vV}\widehat{\vSigma} /(\lambda n))^{-1} {\vU}^\top \left(\vLambda^{1/2} \vphi(\vx_i)  \sqrt{w(\vx_i)}\right) \right\|^2\\
    \leq 	&  \left\|(\mI+ \widehat{\vSigma}\widehat{\vV}^\top\overline{\vW}({\vX})\widehat{\vV}\widehat{\vSigma} /(\lambda n))^{-2}
    \right\|_F\cdot  \frac{1}{n} \E_\epsilon \left\| \sum_{i=1}^n \epsilon_i w(\vx_i) \vLambda^{1/2} \vphi(\vx_i) \vphi^\top(\vx_i) \vLambda^{1/2}  \right\|_F,\\
    \leq & \left\|(\mI+ \widehat{\vSigma}\widehat{\vV}^\top\overline{\vW}({\vX})\widehat{\vV}\widehat{\vSigma} /(\lambda n))^{-2}
    \right\|_F \cdot \sqrt{\frac{1}{n}} \sqrt{ \frac{\sum_{i=1}^n w(\vx_i)^2 K(\vx_i, \vx_i)^2}{n}}\,,
\end{align*}
where
\begin{align*}
     &\left\|(\mI+ \widehat{\vSigma}\widehat{\vV}^\top\overline{\vW}({\vX})\widehat{\vV}\widehat{\vSigma} /(\lambda n))^{-2}
    \right\|_F^2 = \Tr\left((\mI+ \widehat{\vSigma}\widehat{\vV}^\top\overline{\vW}({\vX})\widehat{\vV}\widehat{\vSigma} /(\lambda n))^{-4}
    \right)\\
    =& \Tr\left((\mI+ \vK(\vX,\vX)\overline{\vW}({\vX}) /(\lambda n))^{-4}
    \right) = \lambda^4 \Tr\left(\left(\lambda \mI+ \frac{\vK(\vX,\vX)\overline{\vW}({\vX})}{n} \right)^{-4}
    \right)\leq n \,.
\end{align*}
In total, we have
\begin{align*}
    {\tt (B)}\leq &\lambda^2\left\{\Tr\left(\left(\lambda \mI + \frac{\vK(\vX,\vX)\overline{\vW}({\vX})}{n}\right)^{-2} \frac{\vK(\vX,\vX){\vW}({\vX})}{n}\right)+\sqrt{ \frac{\sum_{i=1}^n w(\vx_i)^2 K(\vx_i, \vx_i)^2}{n}}
    \right\}\\
    +& 3\kappa W_{\max}\sqrt{\frac{\log 1/\delta}{2n}}\,.
\end{align*}

In {\tt (A)}, if we take $k=0$, then with probability $1-4\delta$, 
\begin{align*}
    {\tt (A)+(B)}\leq&  \Tr\left(\frac{\vK(\vX, \vX){\vW}({\vX})}{n} \right) + \lambda^2\left\{\Tr\left(\left(\lambda \mI + \frac{\vK(\vX,\vX)\overline{\vW}({\vX})}{n}\right)^{-2} \frac{\vK(\vX,\vX){\vW}({\vX})}{n}\right)\right. \\
    + & \left. \sqrt{ \frac{\sum_{i=1}^n w(\vx_i)^2 K(\vx_i, \vx_i)^2}{n}}
    \right\} + 6\kappa W_{\max}\sqrt{\frac{\log 1/\delta}{2n}} \,.
\end{align*}

In the next, we consider the discretization of $\vK$ to $\vK^{\rm lin}$, according to \citep[Proposition A.2]{liang2020just}, the kernel matrix admits the following asymmetric approximation with $\theta_p := \frac{1}{2} - \frac{2}{8+m_p}$
\begin{align*}
 		\left\| \vK(\vX, \vX) - \vK^{\rm lin}(\vX, \vX) \right\| &\leq d^{-\theta_p} (\delta^{-1/2} + \log^{\frac{1+\epsilon}{2}} d)\,, \quad \text{w.p.}~ 1-\delta-d^{-2} \,.
\end{align*}
Therefore, we have
\begin{align*}
    \left|\Tr\left(\frac{\vK(\vX, \vX){\vW}({\vX})}{n} \right)-\Tr\left(\frac{\vK^{\rm lin}(\vX, \vX){\vW}({\vX})}{n} \right)\right| \leq W_{\max}\cdot \left\| \vK(\vX, \vX) - \vK^{\rm lin}(\vX, \vX) \right\|\,.
\end{align*}
Besides, we have the following estimates
\begin{align*}
    \left\|
    \left(\lambda \mI + \frac{\vK\overline{\vW}}{n}\right)^{-1} \left(\lambda \mI + \frac{\vK^{\rm lin}\overline{\vW}}{n}\right)
    \right\|&= \left\|
    \left(\lambda \overline{\vW}^{-1} + \frac{\vK}{n}\right)^{-1} \left(\lambda \overline{\vW}^{-1} + \frac{\vK^{\rm lin}}{n}\right)
    \right\|\\
   & \leq 1 +  \left\|
    \left(\lambda\cdot n \overline{\vW}^{-1} + {\vK}\right)^{-1} \left({\vK-\vK^{\rm lin}}\right)
    \right\|\\
   & \leq 1+ \frac{\gamma_p}{\gamma_p - d^{-\theta_p} (\delta^{-1/2} + \log^{\frac{1+\epsilon}{2}} d)} \leq 2\,.
\end{align*}

Then we further have 
\begin{align*}
&\left(\lambda \mI + \frac{\vK\overline{\vW}}{n}\right)^{-2} \frac{\vK{\vW}}{n} \\
    =& \left(\lambda \mI + \frac{\vK^{\rm lin}\overline{\vW}}{n}\right)^{-2} \left(\lambda \mI + \frac{\vK^{\rm lin}\overline{\vW}}{n}\right)^{2}\left(\lambda \mI + \frac{\vK\overline{\vW}}{n}\right)^{-2}\frac{\vK^{\rm lin}{\vW}}{n} + \left(\lambda \mI + \frac{\vK\overline{\vW}}{n}\right)^{-2} \frac{(\vK-\vK^{\rm lin}){\vW}}{n} 
\,.
\end{align*}
Accordingly, we have
\begin{align*}
    \lambda^2\Tr\left(\left(\lambda \mI + \frac{\vK\overline{\vW}}{n}\right)^{-2} \frac{\vK{\vW}}{n}\right) &\leq \lambda^2\left\|
    \left(\lambda \mI + \frac{\vK\overline{\vW}}{n}\right)^{-1} \left(\lambda \mI + \frac{\vK^{\rm lin}\overline{\vW}}{n}\right)
    \right\|^2\Tr\left(\left(\lambda \mI + \frac{\vK^{\rm lin}\overline{\vW}}{n}\right)^{-2} \frac{\vK^{\rm lin}{\vW}}{n}\right) \\
    &+ {\lambda^2}{n} \Tr(\left(\lambda n \mI + {\vK\overline{\vW}}\right)^{-2}) \|{(\vK-\vK^{\rm lin}){\vW}}\|\\
    &\leq 4 \Tr\left(\left(\lambda \mI + \frac{\vK^{\rm lin}\overline{\vW}}{n}\right)^{-2} \frac{\vK^{\rm lin}{\vW}}{n}\right) + d^{-\theta_p} (\delta^{-1/2} + \log^{\frac{1+\epsilon}{2}} d) n^{-1}.
\end{align*}
Therefore, we have, since $n\asymp d$, 
\begin{align*}
    {\tt (A)+(B)}\leq&  \Tr\left(\frac{\vK^{\rm lin}(\vX, \vX){\vW}({\vX})}{n} \right) + 4\lambda^2\Tr\left(\left(\lambda \mI + \frac{\vK^{\rm lin}(\vX,\vX)\overline{\vW}({\vX})}{n}\right)^{-2} \frac{\vK^{\rm lin}(\vX,\vX){\vW}({\vX})}{n}\right) \\
    + &\lambda^2 \kappa W_{\max} + 6\kappa W_{\max}\sqrt{\frac{\log (1/\delta)}{2n}} + 2d^{-\theta_p} (\delta^{-1/2} + \log^{\frac{1+\epsilon}{2}} d)\,.
\end{align*}
Finally, we conclude the proof.
\end{proof}

\subsubsection{Proof of \texorpdfstring{\cref{thm:bias}}{}}

\begin{proof}[Proof of \cref{thm:bias}]
By \citet[Lemma 16]{gogolashvili2023importance} and \cref{assumption:sourcecon}, since $f_\rho\in\mathcal{H}$, under \cref{assumption:sourcecon}, we have the following estimates for ${\sf B}_{\lambda}$, i.e.,
\begin{align*}
    {\sf B}_{\lambda} \leq \lambda^{\overline{r}} \|L_q(L_{\overline{q}}+\lambda)^{-1}\|^{1/2} \|\overline{g}_\rho\|_q,\forall \lambda \geq 0\,.
\end{align*}

The estimation of ${\sf B}_{\rm data}$ relies on \citep[Theorem 20]{gogolashvili2023importance}, under \cref{assumption:sourcecon,assumption:effective_dimension}, we have with probability at least $1-\delta$, 
\begin{align*}
&{\sf B}_{\rm data} \leq 16 \|L_q(L_{\overline{q}}+\lambda)^{-1}\|^{1/2} (\|f_\rho\|_\infty+\|f_\rho\|_\mathcal{H})\cdot \left(\frac{W_{\overline{w}}(d)}{n\sqrt{\lambda}} + \sigma_{\overline{w}}^2(d)\sqrt{\frac{\mathcal{N}_{\overline{q}}^{1-t_{\overline{w}}}(\lambda)}{n\lambda^{t_{\overline{w}}}}}\right)\log\left(\frac{6}{\delta}\right)\\
&\leq 16 \|L_q(L_{\overline{q}}+\lambda)^{-1}\|^{1/2} (\|f_\rho\|_\infty+\|f_\rho\|_\mathcal{H}) ( W_{\overline{w}} d^{c_{\overline{w},1}} n^{-1}\lambda^{-1/2} +  \sigma_{\overline{w}}^2 E_{\overline{q}}^{1-t_{\overline{w}}} d^{2c_{\overline{w},2}} n^{-1/2} \lambda^{-(t_{\overline{w}}+(1-t_{\overline{w}})s_{\overline{q}})/2}))\log(6/\delta)\,,
\end{align*}
given
\begin{align}\label{eq:lambda_suffice}
    n\lambda^{1+t_{\overline{w}}}  \geq 64(W_{\overline{w}}(d) + \sigma_{\overline{w}}^2(d)) (\mathcal{N}_{\overline{q}}(\lambda))^{1-t_{\overline{w}}}\log^2(6/\delta)\,. 
\end{align}
Therefore, for general $\lambda$, we have
\begin{align*}
    {\sf B}\lesssim (\lambda^{\overline{r}}+\lambda^{-\frac{1}{2}}) \|L_q(L_{\overline{q}}+\lambda)^{-1}\|^{\frac{1}{2}}. 
\end{align*}

Recall that $\lambda=C_\lambda^{-c_\lambda}$ and $n\sim d$, we have with probability at least $1-\delta$, 
\begin{align*}
    &{\sf B}_{\rm data} + {\sf B}_{\lambda} \leq \lambda^{\overline{r}} \|L_q(L_{\overline{q}}+\lambda)^{-1}\|^{1/2} \|\overline{g}_\rho\|_q \\
    &+16 \|L_q(L_{\overline{q}}+\lambda)^{-1}\|^{1/2} (\|f_\rho\|_\infty+\|f_\rho\|_\mathcal{H}) ( W_{\overline{w}} d^{c_{\overline{w},1}} n^{-1}\lambda^{-1/2} +  \sigma_{\overline{w}}^2 E_{\overline{q}}^{1-t_{\overline{w}}} d^{2c_{\overline{w},2}} n^{-1/2} \lambda^{-(t_{\overline{w}}+(1-t_{\overline{w}})s_{\overline{q}})/2}))\log(6/\delta) \\
    &\lesssim n^{-\overline{r} c_\lambda} + n^{-(1-c_\lambda/2-c_{\overline{w},1})} + n^{-(1/2-2c_{\overline{w},2} -c_\lambda(t_{\overline{w}}+(1-t_{\overline{w}})s_{\overline{q}})/2 )}\,.
\end{align*}
where the last inequality only considers the dependence on $n$. 
Due to the following fact from \cref{assumption:IW_assumption}, we have
\begin{align*}
    1-c_\lambda/2-c_{\overline{w},1} \geq 1/2 -c_{\overline{w},1} \geq 1/2-2c_{\overline{w},2} -c_\lambda(t_{\overline{w}}+(1-t_{\overline{w}})s_{\overline{q}})/2\,,
\end{align*}
we can conclude that the second term decays faster than the third term. 

We choose $c_\lambda$ to balance the first and the third term, i.e. $\overline{r} c_\lambda = \frac{1-4c_{\overline{w},2}-c_\lambda(t_{\overline{w}}+(1-t_{\overline{w}})s_{\overline{q}})}{2}$, which leads to
\begin{align*}
    c_\lambda = \frac{1-4c_{\overline{w},2}}{2\overline{r}+ t_{\overline{w}}+(1-t_{\overline{w}})s_{\overline{q}}}\,. 
\end{align*}
where $c_{\overline{w},2}<0$ from \cref{assumption:IW_assumption} to ensure $c_\lambda>0$. 
Besides, we have
\begin{align*}
   & 64(W_{\overline{w}}(d) + \sigma_{\overline{w}}^2(d)) (\mathcal{N}_{\overline{q}}(\lambda))^{1-t_{\overline{w}}}\log^2(6/\delta) \leq 64(W_{\overline{w}}\cdot d^{c_{\overline{w},1}} + \sigma_{\overline{w}}^2\cdot d^{2c_{\overline{w},2}}) E_{\overline{q}}^{2(1-t_{\overline{w}})} \lambda^{-s_{\overline{q}}(1-t_{\overline{w}})}\log^2(6/\delta)\\
    \leq & 64(W_{\overline{w}}+\sigma_{\overline{w}}^2) d^{2c_{\overline{w},2}} C_\lambda^{-s_{\overline{q}}(1-t_{\overline{w}})} E_{\overline{q}}^{2(1-t_{\overline{w}})} \cdot n^{c_\lambda s_{\overline{q}}(1-t_{\overline{w}})} \log^2(6/\delta)\,.
\end{align*}
Therefore, for \cref{eq:lambda_suffice}, the constant $C_\lambda$ has to satisfy
\begin{align*}
    64(W_{\overline{w}}+\sigma_{\overline{w}}^2) d^{2c_{\overline{w},2}} C_\lambda^{-s_{\overline{q}}(1-t_{\overline{w}})} E_{\overline{q}}^{2(1-t_{\overline{w}})} \cdot n^{c_\lambda s_{\overline{q}}(1-t_{\overline{w}})} \log^2(6/\delta) \leq n\lambda^{1+t_{\overline{w}}} = C_\lambda^{1+t_{\overline{w}}} n^{1-(1+t_{\overline{w}]}) c_\lambda}\,.
\end{align*}
We expand $c_\lambda$, and using the fact $n/d\to\zeta$, we have that for sufficiently large $d$, $n/d\geq \zeta/2$,
\begin{align*}
     64(W_{\overline{w}}+\sigma_{\overline{w}}^2) E_{\overline{q}}^{2(1-t_{\overline{w}})} (2/\zeta)^{2c_{\overline{w},2}}\log^2(6/\delta) n^{ c_\lambda(s_{\overline{q}}(1-t_{\overline{w}})+t_{\overline{w}}+1)+2c_{\overline{w},2}-1}\leq C_\lambda^{1+t_{\overline{w}}+(1-t_{\overline{w}})s_{\overline{q}}}\,,     \end{align*}
Since $\frac{1}{2}\leq \overline{r}\leq 1$,
\begin{align*}
    c_\lambda(s_{\overline{q}}(1-t_{\overline{w}})+t_{\overline{w}}+1)+2c_{\overline{w},2}-1 = \frac{1-2\overline{r}+2c_{\overline{w},2}(2\overline{r}-2-(t_{\overline{w}}+(1-t_{\overline{w}})s_{\overline{q}}))}{2\overline{r}+ t_{\overline{w}}+(1-t_{\overline{w}})s_{\overline{q}}}\leq 0\,.
\end{align*}
the following constraints would suffice for \cref{eq:lambda_suffice}, 
\begin{align*}
    C_\lambda^{1+t_{\overline{w}}+(1-t_{\overline{w}})s_{\overline{q}}} \geq 64(W_{\overline{w}}+\sigma_{\overline{w}}^2) E_{\overline{q}}^{2(1-t_{\overline{w}})} (2/\zeta)^{2c_{\overline{w},2}}\log^2(6/\delta)\,. 
\end{align*}
Recall the definition of $c_{\mathcal{H}}$ in \cref{assumption:sourcecon}, with probability at least $1-\delta$, we have
\begin{align*}
    {\sf B}\leq {\sf B}_{\rm data}+{\sf B}_{\lambda}\leq n^{-\overline{r}c_\lambda+c_{\mathcal{H}}} \|L_q(L_{\overline{q}}+\lambda)^{-1}\|^{1/2}\left\{ 16 2C_{\mathcal{H}}(W_{\overline{w}}+\sigma_{\overline{w}} E_{\overline{q}}^{1-t_{\overline{w}}})\log(6/\delta) C_\lambda^{-\frac{ t_{\overline{w}}+(1-t_{\overline{w}})s_{\overline{q}}}{2}} + C_\lambda^{\overline{r}}\|\overline{g}_{\rho}\|_q \right\}\,. 
\end{align*}

\end{proof}

\section{Experiments}
\label{sec:exp}
To quantitatively evaluate our derived error bounds for the bias
and variance, we generate a synthetic dataset under
a known $f_\rho$, with different decays of the kernel matrix.

\begin{figure}[!htbp]
    \centering
        \subfloat[\centering variance ($\alpha=0.5$)]{{\includegraphics[width=0.33\columnwidth]{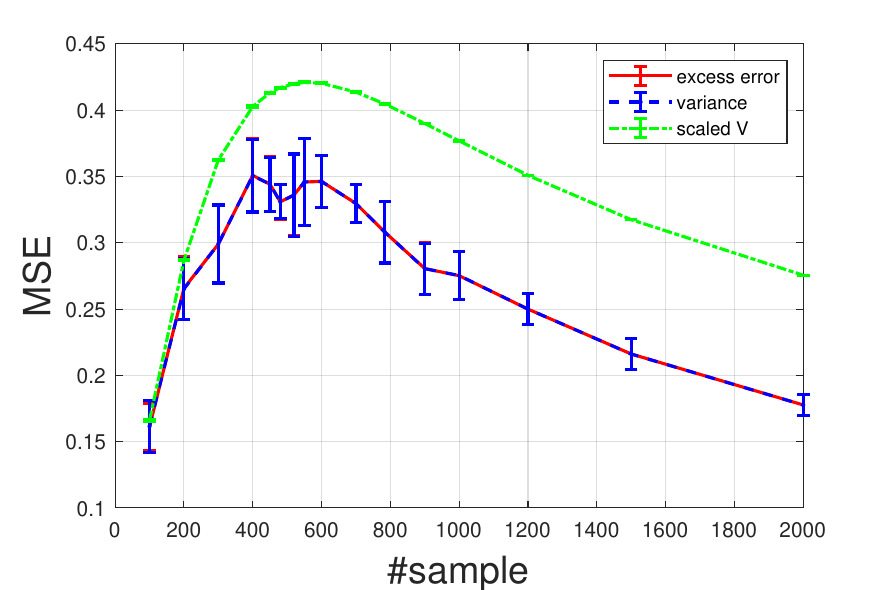}}}%
    \subfloat[\centering variance ($\alpha=1$)]{{\includegraphics[width=0.33\columnwidth]{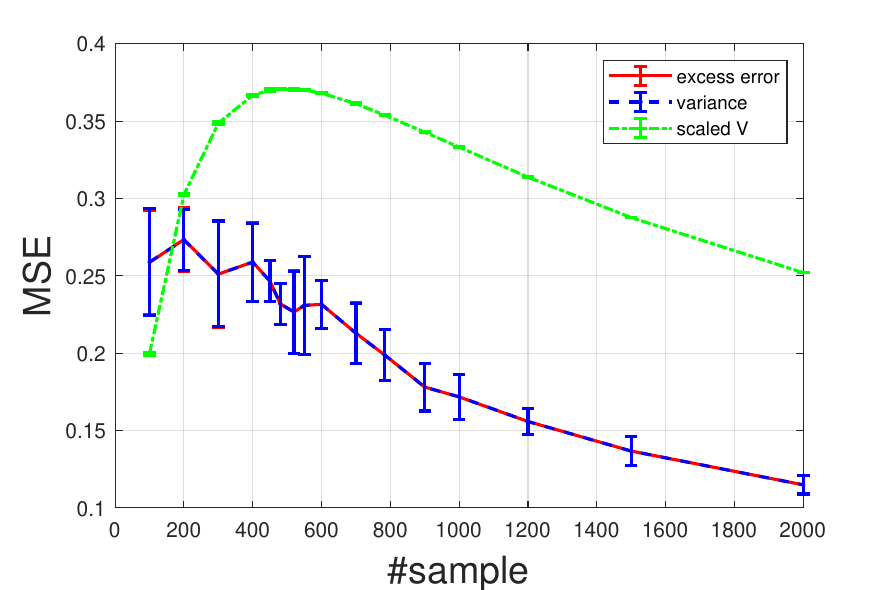}}}%
        \subfloat[\centering variance ($\alpha=1.5$)]{{\includegraphics[width=0.33\columnwidth]{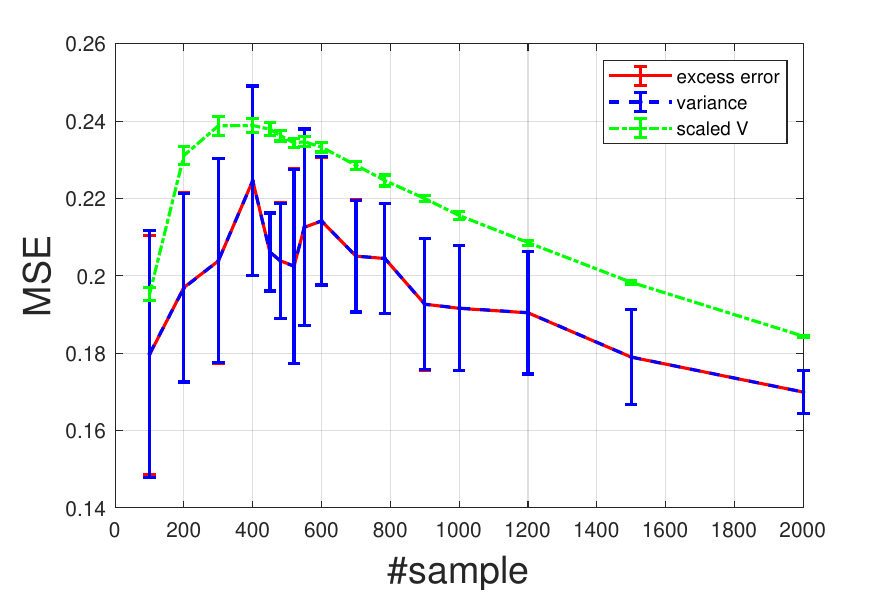}}}%
\\
    \subfloat[\centering bias ($\alpha=0.5$)]{{\includegraphics[width=0.33\columnwidth]{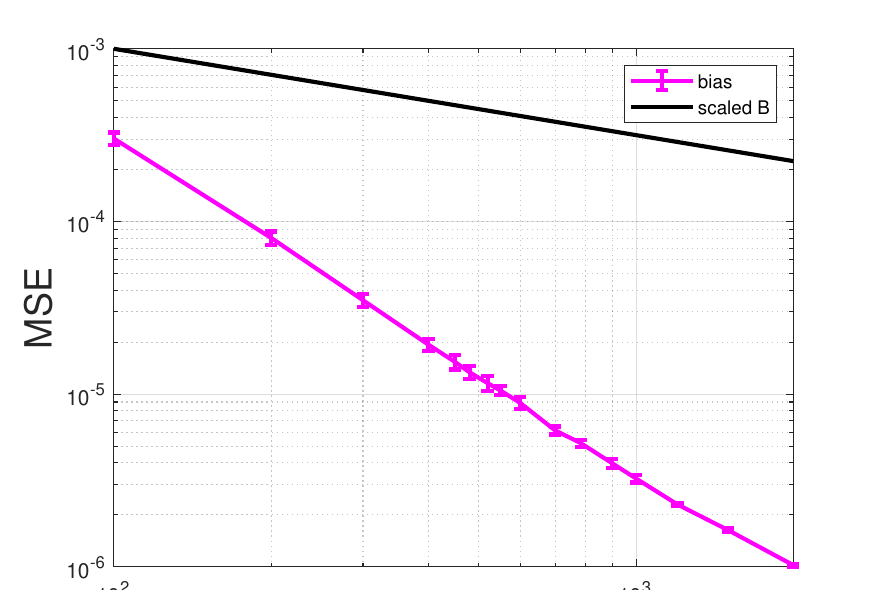}}}%
    \subfloat[\centering bias ($\alpha=1$)]{{\includegraphics[width=0.33\columnwidth]{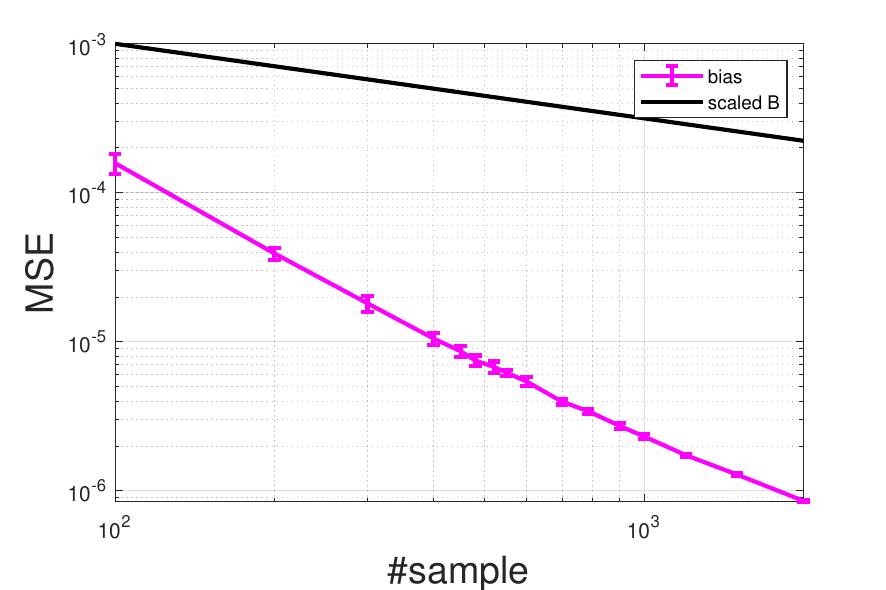}}}%
    \subfloat[\centering bias ($\alpha=1.5$)]{{\includegraphics[width=0.33\columnwidth]{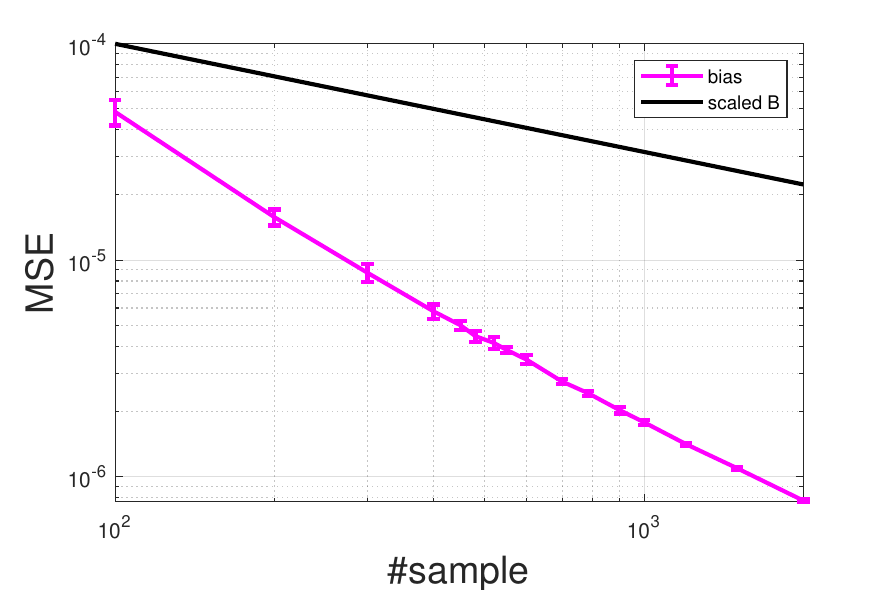}}}%
    \caption{We plot the empirical \texttt{excess error}, \texttt{variance}, \texttt{bias} and the scaled theoretical upper bound \texttt{scaled V} and \texttt{scaled B} under different decays with $\lambda\propto n^{-1/2}$.} 
    \label{fig:exp}
\end{figure}

\paragraph{Eigenvalue decays.} For a positive semi-definite matrix $\vA\in\mathbb{R}^{n\times n}$ with rank $r(\vA)$, we say $\vA$ have one of the following polynomial decay if and only if
 $\lambda_i(\vA) \propto n i^{-a}$ with $a>1$ for $i\leq r(\vA)$. 
 
\paragraph{Data generation.} We assume $y_i=\sin(\|\vx\|^2)+\epsilon$ with the target function $f_\rho(\vx)=\sin(\|\vx\|_2^2)$ and Gaussian noise $\varepsilon$ of zero-mean and unit variance. The training samples $\bm x_i$ are generated from $\vx_{p,i}=\vSigma_p^{1/2}\vz_i$, and the test samples are generated from $\vx_{q,i}=\vSigma_q^{1/2}\vz_i$. Therefore, let $\vX_p$ and $\vX_q$ be the training and test data matrices respectively, and $\vZ=[\vz_1,\cdots,\vz_n]^\top$ we have $\vX_p\vX_p^\top=\vZ \vSigma_p\vZ^\top$ and $\vX_q\vX_q^\top=\vZ \vSigma_q\vZ^\top$. In our experiments, we take 1) $\vSigma_p$ as a diagonal matrix that has diagonal entries with $a=0.5,1,1.5$ for polynomial decay, and $\vSigma_p$ as the perturbed $\vSigma_q$, i.e., $(\vSigma_q)_{i,i}^{-1}=(\vSigma_p)_{i,i}^{-1}+\epsilon',\epsilon'\sim {\rm Unif}[0,1]$; take 2) $\vZ$ as a random orthogonal matrix with almost i.i.d. entries such that $\vX_p\vX_p^\top$ and $\vX_q\vX_q^\top$ have the same eigen-decays as the $\vSigma_p$ and $\vSigma_q$. Specifically, we use the QR decomposition on a random Gaussian matrix to obtain an orthogonal matrix \cite{Yu2016Orthogonal}.

\paragraph{Experimental settings} We set the dimension $d=500$, and the number of test data points to be 2500. We vary the number of training data points as (100, 200, 300, 400, 450, 480, 520, 550, 600, 700, 784, 900, 1000, 1200, 1500, 2000). We set the kernel $K(\vx,\vx') = (1+\langle\vx,\vx'\rangle/d)^p$ with $p=5$, who admits $\beta=p$ independent of $\vSigma_p$. We take the re-weighting function as the truncated probability ratio of distribution $p$ and $q$, i.e., let $\overline{q}=q$ and truncate the ratios to 10. 
Finally, we run on 10 random seeds and calculate the mean and average. 

\paragraph{Choice of $\lambda$} For the target function $f_\rho$ that belongs to the RKHS, we have the source condition $\overline{r}=1/2$. Besides, for the distribution $p$ of the polynomial decay $\alpha$, we take the capacity constant $s_{\overline{q}}=1$. By the boundedness of ratios, we have $t_{\overline{w}}=c_{\overline{w},2}=0$. By \cref{thm:bias}, we have $c_\lambda=1/2$. Therefore, we set $\lambda\propto n^{-1/2}$. 

\paragraph{Observations} 
\cref{fig:exp} (a) - (f) show the trends of the test risk, variance, and bias, which match our upper bound. From the log-log plot of the bias, we observe that our bound is upper bound but not identical to the true rate of the bias decay. 
Besides, if $n$ is large, the upper bound of variance and bias will tend to zero under the IW strategy, which demonstrates that the IW strategy is not harmful to high dimensional kernel methods under covariate shift, at least. All the variances show the unimodal property, and the derived upper bound (as well as the peak) coincides with the empirical ones.

\end{document}